\date{}
\def\eqref#1{(\ref{#1})}
\def\1{\bm{1}}
\DeclareMathAlphabet{\mathsfit}{\encodingdefault}{\sfdefault}{m}{sl}
\SetMathAlphabet{\mathsfit}{bold}{\encodingdefault}{\sfdefault}{bx}{n}
\def\gA{{\mathcal{A}}}
\def\gC{{\mathcal{C}}}
\def\gF{{\mathcal{F}}}
\def\gO{{\mathcal{O}}}
\newcommand{\norm}[1]{\left\| #1 \right\|}
\newcommand{\sqnorm}[1]{\left\| #1 \right\|^2}
\newcommand{\inp}[2]{\left\langle#1,#2\right\rangle} 
\newcommand{\cB}{\mathcal{B}}
\newcommand{\cC}{\mathcal{C}}
\newcommand{\cD}{\mathcal{D}}
\newcommand{\cO}{\mathcal{O}}
\newcommand{\algname}[1]{{\color{ForestGreen}\small\sf#1}\xspace}
\newcommand{\algnamesmall}[1]{{\color{ForestGreen}\scriptsize\sf#1}\xspace}
\newcommand{\del}[1]{}
\newcommand{\R}{\mathbb{R}} 
\newcommand{\eqdef}{:=} 
\newcommand{\Exp}[1]{{\rm E}\left[#1\right]}
\newcommand{\ExpCond}[2]{{\rm E}\left[\left.#1\right\vert#2\right]}
\newtheorem{assumption}{Assumption}
\newtheorem{lemma}{Lemma}
\newtheorem{theorem}{Theorem}
\theoremstyle{plain}
\theoremstyle{definition}
\definecolor{BrickRed}{RGB}{143,20,2}
\definecolor{PineGreen}{RGB}{0,110,51}
\newcommand{\revision}[1]{{\color{black}#1}}
\title{Error Feedback under $(L_0,L_1)$-Smoothness: \\ Normalization and Momentum}
\author[1]{Sarit Khirirat}
\author[1]{Abdurakhmon Sadiev}
\author[1]{Artem Riabinin}
\author[2]{Eduard Gorbunov}
\author[1]{Peter Richt{\'a}rik}
\affil[1]{King Abdullah University of Science and Technology (KAUST)}
\affil[2]{Mohamed bin Zayed University of Artificial Intelligence (MBZUAI)}
\begin{document}

\maketitle

\begin{abstract}
We provide the first proof of convergence for normalized error feedback algorithms across a wide range of machine learning problems. 
Despite their popularity and efficiency in training deep neural networks, traditional analyses of error feedback algorithms rely on the smoothness assumption that does not capture the properties of objective functions in these  problems. 
Rather, these problems have recently been shown to satisfy generalized smoothness assumptions, and the theoretical understanding of error feedback algorithms under these assumptions remains largely unexplored. 
Moreover, to the best of our knowledge, all existing analyses under generalized smoothness either i) focus on single-node settings or ii) make unrealistically strong assumptions for distributed settings, such as requiring data heterogeneity, and almost surely bounded stochastic gradient noise variance. 
In this paper, we propose distributed error feedback algorithms that utilize normalization to achieve the $\cO(1/\sqrt{K})$ convergence rate for nonconvex problems under generalized smoothness. Our analyses apply for distributed settings without data heterogeneity conditions, and enable stepsize tuning that is independent of problem parameters. 
Additionally, we provide strong convergence guarantees of normalized error feedback algorithms for stochastic settings. 
Finally, we show that due to their larger allowable stepsizes, our new normalized error feedback algorithms outperform their non-normalized counterparts on various tasks, including the minimization of polynomial functions, logistic regression, and ResNet-20 training. 
\end{abstract}


\tableofcontents

\section{Introduction}
\label{sec:intro}



Machine learning models achieve impressive prediction and classification power by employing sophisticated architectures, comprising vast numbers of model parameters, and requiring training on massive datasets.
Distributed training has emerged as an important approach, where multiple machines with their own local training data collaborate to train a model efficiently within a reasonable time.
Many optimization algorithms can be easily adapted for distributed training frameworks. For example, stochastic gradient descent (\algname{SGD}) can be modified into distributed stochastic gradient descent within a data parallelism framework, and into federated averaging algorithms~\citep{mcmahan2017communication} in a federated learning framework.
However, the communication overhead of running these distributed algorithms poses a significant barrier to scaling up to large models. For example, training the VGG-16 model~\citep{simonyan2014very} using distributed stochastic gradient descent involves communicating $138.34$ million parameters, thus consuming over $500$MB of storage and posing an unmanageable burden on the communication network between machines.


One approach to mitigate the communication burden is to apply compression.
In this approach, the information, such as gradients or model parameters, is compressed using sparsifiers or quantizers to be transmitted with much lower communicated bits between machines. 
However, while this reduces communication overhead, too coarse compression often brings substantial challenges in maintaining high training performance due to information loss, and in extreme cases, it may potentially lead to divergence. 
Therefore, error feedback mechanisms have been developed to improve the convergence performance of compression algorithms, while ensuring high communication efficiency. 
Examples of error feedback mechanisms include \algname{EF14}~\citep{seide20141,stich2018sparsified,alistarh2018convergence,wu2018error,gorbunov2020linearly}, \algname{EF21}~\citep{richtarik2021ef21,fatkhullin2021ef21}, \algname{EF21-SGDM} ~\citep{fatkhullin2024momentum}, \algname{EF21-P}~\citep{gruntkowska2023ef21}, and \algname{EControl}~\citep{gao2023econtrol}. 
Several studies developing error feedback algorithms often assume  the smoothness of an objective function, i.e., 
its gradient is Lipschitz continuous.


However, many modern learning problems, such as distributionally robust optimization~\citep{jin2021non} and deep neural network training, are often non-smooth. 
For instance, the gradient of the loss computed for deep neural networks, such as LSTM~\citep{zhang2019gradient}, ResNet20~\citep{zhang2019gradient}, and transformer models~\citep{crawshaw2022robustness}, is not Lipschitz continuous.
These empirical findings highlight the need for a new smoothness assumption.
One such assumption is $(L_0,L_1)$-smoothness, originally introduced by~\citet{zhang2019gradient}, for twice differentiable functions, and later extended to differentiable functions by~\citet{chen2023generalized}.

To solve generalized smooth problems, clipping and normalization have been widely utilized in first-order algorithms. 
Gradient descent with gradient clipping was initially shown by~\citet{zhang2019gradient} to achieve lower iteration complexity, i.e., fewer iterations needed to attain a target solution accuracy,  than classical gradient descent.
Subsequent works have further refined the convergence theory of clipped gradient descent~\citep{koloskova2023revisiting}, and improved its convergence performance by employing momentum updates~\citep{zhang2020improved}, variance reduction techniques~\citep{reisizadeh2023variance}, and adaptive step sizes~\citep{wang2024provable,li2024convergence,takezawa2024polyak}.
Similar convergence results have been obtained for gradient descent using normalization~\citep{zhao2021convergence}, and its momentum variants~\citep{hubler2024parameter}, including generalized \algname{SignSGD}~\citep{crawshaw2022robustness}.
However, these first-order algorithms have mostly been explored in training on a single machine. 
To the best of our knowledge, distributed algorithms under generalized smoothness have been investigated in only a few works, e.g., by \citet{crawshaw2024federated,liu2022communication}.
Nonetheless, these works rely on assumptions limiting families of optimization problems, including data heterogeneity, almost sure variance bounds, and symmetric noise distributions around the mean assumptions. 
Furthermore, these first-order algorithms under generalized smoothness do not incorporate compression techniques to improve communication efficiency. 
These aspects motivate us to develop \emph{distributed communication-efficient algorithms for solving nonconvex generalized smooth problems}.

\subsection{Contributions}

In this paper, we  develop distributed error feedback algorithms for  communication-efficient optimization under nonconvex, generalized smooth regimes.  Our contributions are summarized below.

\paragraph{$\bullet$ Importance of normalization.}
Just as gradient clipping is crucial for gradient descent, we empirically demonstrate that normalization stabilizes the convergence of error feedback algorithms for minimizing nonconvex generalized smooth functions. In this paper, we introduce a variant of \algname{EF21}, a widely used error feedback algorithm by \citet{richtarik2021ef21}, which incorporates normalization to guarantee convergence for nonconvex, generalized smooth problems. 
In a single-node setting, this new method, which we call \algname{||EF21-GD||}, or more compactly as \algname{||EF21||}, provides larger stepsize, and faster convergence rate than its non-normalized counterpart \algname{EF21} for minimizing simple nonconvex polynomial functions that satisfy generalized smoothness, as shown by Figure~\ref{fig:exp1}.   





\begin{figure}[h]
    \centering
        \includegraphics[width=0.3\textwidth]{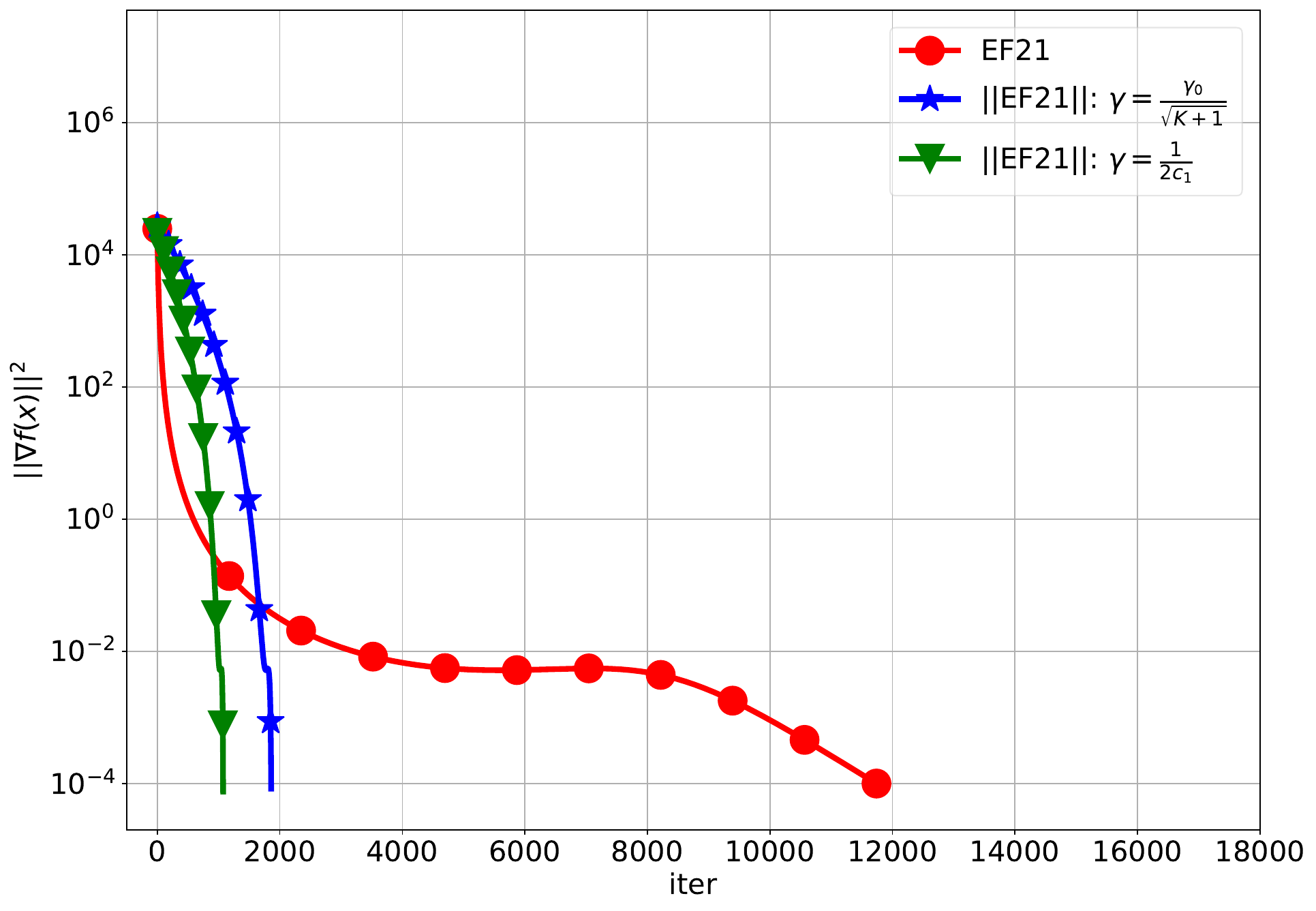}
    \hfill
        \includegraphics[width=0.3\textwidth]{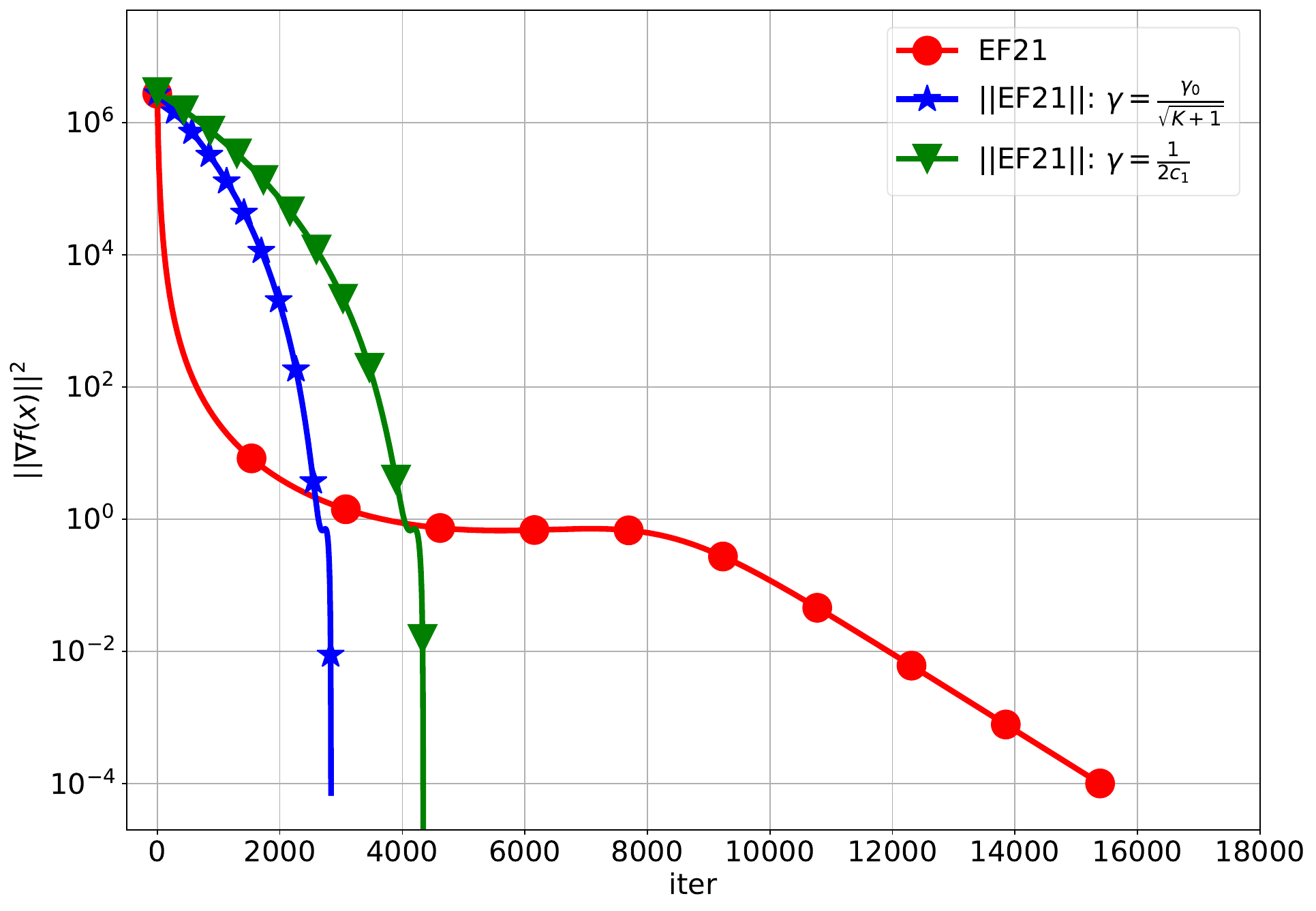}
    \hfill
        \includegraphics[width=0.3\textwidth]{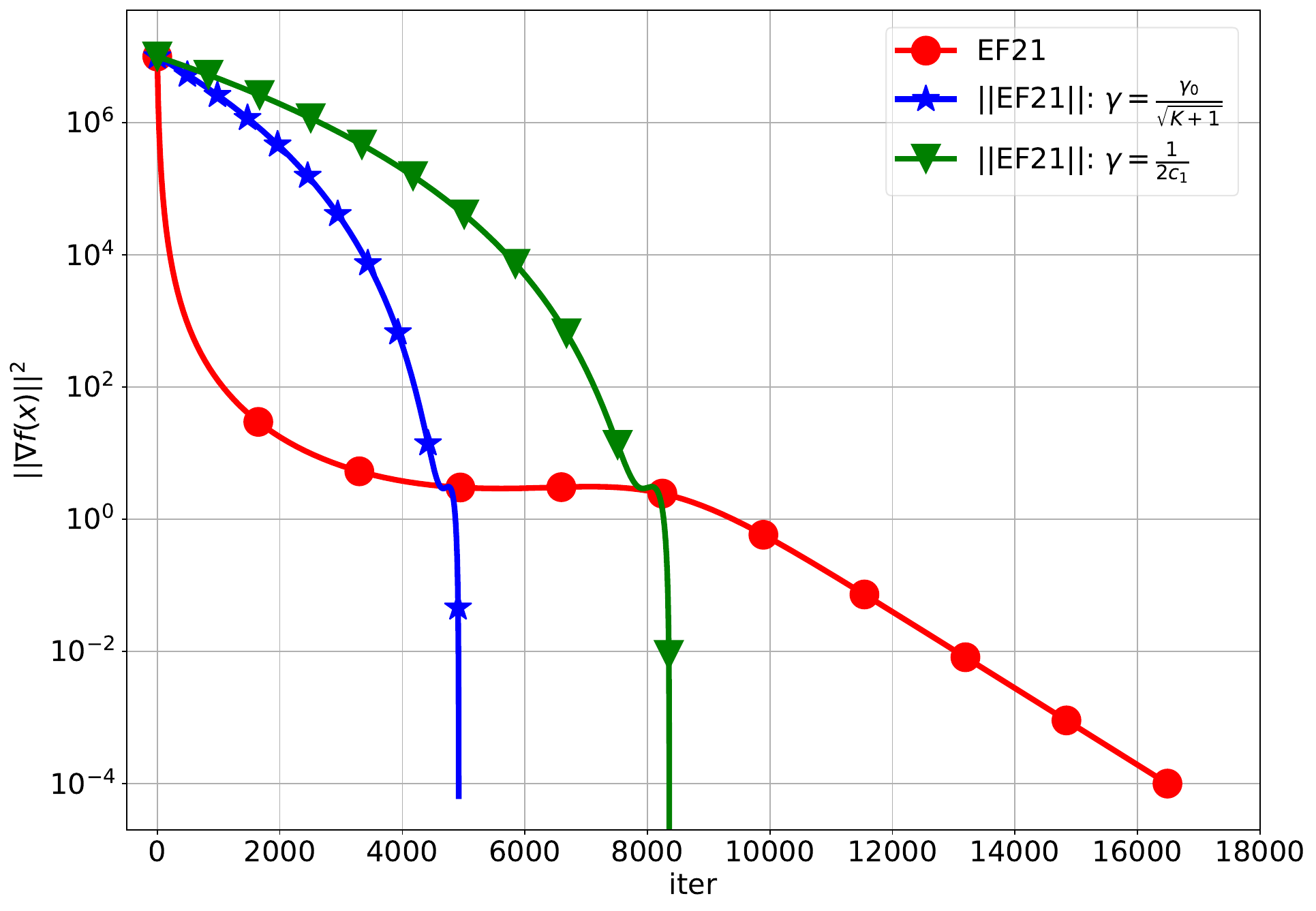}
    \caption{The minimization of polynomial functions using \algname{EF21} with $\gamma = \frac{1}{L + L \sqrt{\frac{\beta}{\theta}}}$, and \algname{||EF21||}  with $\gamma = \frac{\gamma_0}{\sqrt{K+1}}$, $\gamma_0 = 1$ (blue line) and $\gamma = \frac{1}{2c_1}$ (green line). Here, we ran both algorithms for (1) $L_0 = 4$, $L_1 = 1$, and $K=2,000$ (left), (2) $L_0 = 4$, $L_1 = 4$, and $K=5,000$ (middle), and (3) $L_0 = 4$, $L_1 = 8$, and $K=16,000$ (right). }
    \label{fig:exp1}
\end{figure}


\paragraph{$\bullet$ Convergence of normalized error feedback algorithms.}
We establish an $\mathcal{O}(1/\sqrt{K})$ convergence rate in the gradient norm for \algname{||EF21||} on nonconvex generalized smooth problems. 
\algname{||EF21||} achieves the same  rate as  \algname{EF21} under $L$-smoothness by \cite{richtarik2021ef21}. Our results are derived under standard assumptions, i.e., generalized smoothness and the existence of lower bounds on the objective function, and are applicable in distributed settings regardless of any data heterogeneity degree, unlike the results by \citet{crawshaw2024federated,liu2022communication}. Additionally, our stepsize rules for \algname{||EF21||} ensure convergence without requiring knowledge of the generalized smoothness constants $L_0$ or $L_1$, in contrast to \citet{richtarik2021ef21}, where the stepsize depends on the smoothness constant $L$ (which is often inaccessible).


\paragraph{$\bullet$ Extension to stochastic settings.}
Furthermore, we propose a variant of \algname{EF21-SGDM}, an error feedback algorithm with momentum updates by~\citet{fatkhullin2024momentum}, that employs normalization for solving nonconvex, stochastic optimization under generalized smoothness. 
Specifically, we prove that \algname{||EF21-SGDM||}  with suitable stepsize choices attains the same $\cO(1/K^{1/4})$ convergence rate in the gradient norm as  \algname{EF21-SGDM}.

\paragraph{$\bullet$ Numerical evaluation.} 
We implemented \algname{||EF21||} using the stepsize rules derived from our theory, and compared its performance against  \algname{EF21}. Both algorithms were evaluated on three learning tasks: minimizing nonconvex polynomial functions, solving logistic regression with a nonconvex regularizer, and training ResNet-20 on the CIFAR-10 dataset. Thanks to its larger stepsizes, \algname{||EF21||} outperforms \algname{EF21}, in terms of both convergence speed and solution accuracy across these tasks.

\definecolor{LightCyan}{rgb}{0.9,0.9,0.9}
\begin{table}[h]
\begin{center}
\resizebox{\textwidth}{!}{%
\begin{tabular}{ccccc}
\bf Methods & \bf Complexity & \bf Smoothness & \bf Variance bound & \bf Normalization \\ \hline 
\begin{tabular}{c} \algname{EF21} \\ \citet{richtarik2021ef21} \end{tabular} & $\cO(1/\epsilon^2)$ & $L$ & No & No \\ 
\begin{tabular}{c} \algname{EF21-SGDM}  \\ \citet{fatkhullin2024momentum} \end{tabular} &  $\cO( {1}/{\epsilon^4})$  & $L$ & expectation & No \\
 \rowcolor{LightCyan} \begin{tabular}{c} \algname{||EF21||} \\  {\bf NEW} (Alg.~\ref{alg:normalized_ef21}) \end{tabular}
&  $\cO(1/\epsilon^2)$ & $(L_0,L_1)$ & No & Yes \\
 \rowcolor{LightCyan} \begin{tabular}{c} \algname{||EF21-SGDM||}  \\ {\bf NEW} (Alg.~\ref{alg:normalized_ef21_sgdm}) \end{tabular} & $\cO(1/\epsilon^4)$ & $(L_0,L_1)$ & Expectation & Yes \\ \hline 
\end{tabular}
}
\end{center}

\caption{Comparisons of complexities and assumptions between known and our results for \algname{EF21} variants. The  complexity is defined by the iteration count $K$ required by the algorithms to attain $\underset{k=0,1,\ldots,K}{\min}\Exp{\norm{\nabla f(x^k)}} \leq \epsilon$. $(L_0,L_1)$-smoothness refers to generalized smoothness in Assumption~\ref{assum:LzeroLoneSmooth}. The variance bound in expectation is defined in Assumption~\ref{assum:bounded_variance}.}
\label{sample-table}
\end{table}

\section{Related Works}
\label{sec:related_work}

\paragraph{Error feedback.}
Error feedback mechanisms have been utilized in various algorithms with communication compression, leading to significant improvements in solution accuracy, while reducing communication. 
As the first version of these mechanisms, \algname{EF14} was introduced by~\citet{seide20141}, and later analyzed for first-order algorithms in both single-node \citep{stich2018sparsified,karimireddy2019error} and distributed settings \citep{alistarh2018convergence,wu2018error,tang2019doublesqueeze,basu2019qsparse,gorbunov2020linearly,li2020acceleration,qian2021error,tang2021errorcompensatedx}. 
Next, \algname{EF21} is another error feedback variant proposed by \citet{richtarik2021ef21}, which offers strong convergence guarantees for distributed gradient algorithms with any contractive compressors, without requiring bounded gradient norm or bounded data heterogeneity assumptions. \algname{EF21} can also be adapted for  stochastic optimization through sufficiently large mini-batches \citep{fatkhullin2021ef21} or momentum updates \citep{fatkhullin2024momentum}. More recently, \algname{EControl} was developed by \citet{gao2023econtrol}  to guarantee provably superior complexity results for distributed stochastic optimization compared to prior error feedback mechanisms. 
To the best of our knowledge, these existing works on error feedback have focused solely on optimization under traditional \( L \)-smoothness. In this paper, we introduce a normalized variant of the \algname{EF21} methods \citep{richtarik2021ef21} for solving nonconvex generalized smooth  problems. 
In particular, we prove that  \algname{||EF21||} under generalized smoothness achieves the same $\cO(1/\sqrt{K})$ rate as \algname{EF21} under traditional smoothness, and demonstrate in experiments that \algname{||EF21||} permits larger step sizes, and thus attains faster convergence than  \algname{EF21}.



\paragraph{Non-smoothness assumptions.}
Empirical findings suggest that the traditional smoothness used for analyzing optimization algorithms does not capture the properties of objective functions in many machine learning problems, especially deep neural network training problems. 
This motivates researchers to consider different assumptions to replace this traditional smoothness condition. 
First introduced by~\citet{zhang2019gradient}, the $(L_0,L_1)$-smoothness condition on a twice differentiable function $f(x)$ is defined by $\norm{ \nabla^2 f(x)} \leq L_0+L_1 \norm{\nabla f(x)}$ for $x\in\R^d$. This $(L_0,L_1)$-smoothness has been extended to
 differentiable functions without assuming the existence of the Hessian.  
For instance, the smoothness with a differentiable function $\ell(x)$~\citep{li2024convex}, and symmetric generalized smoothness~\citep{chen2023generalized} cover the $(L_0,L_1)$-smoothness when the Hessian exists, and includes many important machine learning problems, such as phase retrieval problems~\citep{chen2023generalized}, and distributionally robust optimization~\citep{levy2020large}. Other classes of non-smoothness assumptions, which are not related to the generalized smoothness but capture other optimization problems, include H{\"o}lder's continuity of the gradient \citep{devolder2014first},  the relative smoothness~\citep{bauschke2017descent}, and the polynomial growth of the gradient norm~\citep{mai2021stability}.
In this paper, we impose the generalized smoothness condition to establish the convergence of \algname{||EF21||} for solving deterministic and stochastic optimization.


\paragraph{Gradient clipping and normalization.} Clipping and normalization are commonly employed in gradient-based methods for solving generalized smooth problems. Clipped (stochastic) gradient descent has been studied for both nonconvex and convex problems under  $(L_0,L_1)$-smoothness conditions by \citet{zhang2019gradient,koloskova2023revisiting}. Extensions to clipped gradient algorithms have been proposed, including momentum updates \citep{zhang2020improved}, variance reduction methods \citep{reisizadeh2023variance}, and adaptive step sizes \citep{wang2024provable,li2024convergence,takezawa2024polyak,gorbunov2024methods}. Comparable complexities have been achieved for normalized gradient descent \citep{zhao2021convergence}, and its momentum-based variants \citep{hubler2024parameter}, including generalized \algname{SignSGD} \citep{crawshaw2022robustness}.
Convergence properties of gradient-based algorithms have also been explored under more generalized forms of non-uniform smoothness, extending beyond the $(L_0,L_1)$-smoothness by \citet{zhang2019gradient} to cover a wider range of optimization problems. For example, variants of (stochastic) gradient descent have been analyzed under $\alpha$-symmetric generalized smoothness by \citet{chen2023generalized}, and under $\ell$-smoothness involving certain differentiable functions $\ell(\cdot)$ by \citet{li2024convex,li2024convergence}. 
However, the majority of these analyses focus on the single-node setting. To the best of our knowledge, only a limited number of works, such as those by \citet{crawshaw2024federated,liu2022communication}, have examined federated averaging algorithms for nonconvex problems under generalized smoothness. These works, however, often rely on restrictive assumptions, including data heterogeneity, almost sure variance bounds, and symmetric noise distributions centered around their means.
In this paper, we develop distributed error feedback algorithms, which eliminate the need for the restrictive assumptions mentioned above, and rely on standard assumptions on objective functions and compressors.

\section{Preliminaries}
\paragraph{Notations.}
We use \([n]\) to denote the set \(\{1, 2, \ldots, n\}\), and \(\Exp{u}\) to represent the expectation of a random variable \(u\). Additionally, \(\norm{\cdot}\) indicates the Euclidean norm for vectors or the spectral norm for matrices, and \(\norm{\cdot}_1\) is the $\ell_1$-norm for vectors, while \(\langle x, y \rangle\) denotes the inner product between \(x\) and \(y\) in \(\mathbb{R}^d\). Lastly, for a square matrix \(A \in \mathbb{R}^{d \times d}\), \(\lambda_{\min}(A)\) refers to its minimum eigenvalue, and \(I \in \mathbb{R}^{d \times d}\) is the identity matrix.

\paragraph{Problem formulation.} In this paper, we focus on the following distributed optimization problem: 
\begin{eqnarray}\label{eqn:Problem}
 \underset{x\in\R^d}{\text{min}} \ \left\{f(x) \eqdef \frac{1}{n}\sum_{i=1}^n f_i(x)\right\},
\end{eqnarray}
where $n$ refers to the number of clients, and $f_i(x)$ is the loss of a model parameterized by vector $x\in \R^d$ over its local data $\cD_i$ owned by client $i\in [n]$.

\paragraph{Assumptions.} To facilitate our convergence analysis, we make standard assumptions on objective functions and compression operators.

\begin{assumption}[Lower Boundedness of $f$]\label{assum:lowerbound_whole_f}
The function $f$ is bounded from below, i.e., $$f^{\inf} = \inf_{x\in\R^d} f(x)>-\infty.$$
\end{assumption}

\begin{assumption}[Lower Boundedness of $f_i$]\label{assum:lowerbound_f_i}
For each $i\in [n]$, the function $f_i$ is bounded from below, i.e., $$f^{\inf}_{i} \eqdef \inf_{x\in\R^d} f_i(x)>-\infty.$$
\end{assumption}

Assumptions~\ref{assum:lowerbound_whole_f} and~\ref{assum:lowerbound_f_i} are standard for analyzing optimization algorithms for unconstrained optimization.




\begin{assumption}[Generalized Smoothness of $f_i$]\label{assum:LzeroLoneSmooth}
A function $f_i(x)$ is symmetrically generalized smooth if there exists $L_0,L_1>0$ such that for $u_\theta = \theta x + (1-\theta)y$, and for all $x,y\in\R^d$,
\begin{eqnarray}\label{eqn:LzeroLoneSmooth}
	\norm{ \nabla f_i(x) - \nabla f_i(y) } \leq \left( L_0+L_1 \sup_{\theta\in[0,1]}\norm{\nabla f_i(u_\theta)} \right) \norm{x-y}.
\end{eqnarray}	
\end{assumption}

Assumption~\ref{assum:LzeroLoneSmooth} refers to symmetric generalized smoothness defined by~\citet{chen2023generalized}, which covers asymmetric generalized smoothness~\citep{koloskova2023revisiting,chen2023generalized}, and the original $(L_0,L_1)$-smoothness by~\cite{zhang2019gradient}.
Moreover, Assumption~\ref{assum:LzeroLoneSmooth} covers the functions with unbounded classical smoothness constant, e.g., exponential function. 
Additionally, Assumption~\ref{assum:LzeroLoneSmooth} with $L_1=0$ reduces to the traditional $L_0$-smoothness~\citep{nesterov2018lectures,beck2017first}, under which the convergence of optimization algorithms has been extensively studied.

\begin{assumption}[Contractive Compressor]\label{assum:contractive_comp}
  An operator $\cC^k:\R^d\rightarrow\R^d$ is an $\alpha$-contractive compressor if there exists $\alpha\in(0,1]$ such that for $k \geq 0$ and $v\in\R^d$,
\begin{eqnarray} \label{eqn:contractive_comp}
	\Exp{\sqnorm{\cC^k(v)-v}} \leq (1-\alpha) \sqnorm{v}.
\end{eqnarray}	
\end{assumption}

Furthermore, compressors defined by Assumption~\ref{assum:contractive_comp} cover top-$k$ sparsifiers~\citep{alistarh2018convergence,stich2018sparsified}, low-rank approximation~\citep{vogels2019powersgd,safaryan2021fednl}, and various other compressors described by~\citet{safaryan2022uncertainty,JMLR:v24:21-1548,demidovich2023guide}.

\begin{assumption}[Bounded Variance]\label{assum:bounded_variance}
A stochastic gradient $\nabla f_i(x;\xi_i)$ with its sample $\xi_i\sim \cD_i$ is an unbiased estimator of $\nabla f_i(x)$ with bounded variance, i.e., 
\begin{eqnarray}\label{eqn:bounded_variance}
    \Exp{\nabla f_i(x;\xi_i)} = \nabla f_i(x), \quad \text{and} \quad \Exp{\sqnorm{\nabla f_i(x;\xi_i) - \nabla f_i(x)}} \leq \sigma^2,
\end{eqnarray}
for all $x\in \R^d$.
\end{assumption}

Assumption~\ref{assum:bounded_variance} is standard for stochastic optimization \citep{nemirovski2009robust, ghadimi2012optimal, ghadimi2013stochastic} that is only imposed on each local stochastic gradient, and it does not imply data heterogeneity, i.e., the bounded difference between each component function $f_i(x)$ and the global function $f(x)$.

\begin{algorithm}
\caption{Normalized Error Feedback (\algname{||EF21||})}
\label{alg:normalized_ef21}
\begin{algorithmic}[1]
\STATE \textbf{Input:} Stepsize $\gamma_k>0$ for $k=0,1,\ldots$; starting points $x^0,\revision{g_i^{-1}} \in \R^d$ for $i\in \{1,2,\ldots,n\}$; and $\alpha$-contractive compressors $\cC^k:\R^d\rightarrow\R^d$ for $k=0,1,\ldots$.
\FOR{each iteration $k = 0, 1, \dots, K$}
    \FOR{each client $i = 1, 2, \dots, n$ in parallel}
        \STATE Compute local gradient $\nabla f_i(x^k)$
        \STATE Transmit $\Delta_i^k = \cC^k(\nabla f_i(x^k) - \revision{g_i^{k-1}})$
        \STATE Update  \revision{$g_i^k = g_i^{k-1} +\Delta_i^k $}
    \ENDFOR
    \STATE Central server computes \revision{$g^{k} = \frac{1}{n}\sum_{i=1}^n g_i^k$ via  $g_i^k = g_i^{k-1} +\Delta_i^k$}
    \STATE Central server updates \revision{$x^{k+1} = x^k - \gamma_k \frac{g^k}{\norm{g^k}}$}
\ENDFOR
\STATE \textbf{Output:} $x^{K+1}$
\end{algorithmic}
\end{algorithm}

\section{Normalized Error Feedback (\algname{\large ||EF21||})}
\label{sec:ef21}

For nonconvex deterministic optimization under generalized smoothness,  we  develop a distributed error feedback algorithm. One challenge is that the generalized smoothness parameter scales with the gradient norm $\norm{\nabla f(x^k)}$. To resolve this issue, we apply gradient normalization to the algorithms. 
In particular, we consider \algname{||EF21||}, the normalized version of \algname{EF21}~\citep{richtarik2021ef21} that updates the next iterates $x^{k+1}$ using the \algname{||EF21||} update. 
The full description of \algname{||EF21||} can be found in Algorithm~\ref{alg:normalized_ef21}.





Our new method \algname{||EF21||}, just like \algname{EF21}~\citep{richtarik2021ef21} under traditional smoothness, enjoys the $\cO(1/\sqrt{K})$ convergence in the gradient norm under generalized smoothness, as shown below.

\begin{theorem}[Convergence of \algname{||EF21||}]\label{thm:ef21}
  Consider Problem~\eqref{eqn:Problem}, where Assumption~\ref{assum:lowerbound_whole_f} (lower bound on $f$), Assumption~\ref{assum:lowerbound_f_i} (lower bound on $f_i$), Assumption~\ref{assum:LzeroLoneSmooth} (generalized smoothness of $f_i$), and Assumption~\ref{assum:contractive_comp} (contractive compressor) hold.
  Then, the iterates $\{x^k\}$ generated by \algname{||EF21||} (Algorithm~\ref{alg:normalized_ef21}) with 
  \begin{eqnarray*}
      \gamma_k = \frac{\gamma_0}{\sqrt{K+1}} 
  \end{eqnarray*}
  for $K \geq 0$ and $\gamma_0>0$ satify   
\begin{eqnarray*}
    \underset{k=0,1,\ldots,K}{\min} \Exp{\norm{\nabla f(x^k)}}
   \leq  \frac{V^0 \exp(8c_1L_1 \exp(L_1\gamma_0)\gamma_0^2)}{\gamma_0 \sqrt{K+1}} + B\frac{\gamma_0 \exp(L_1\gamma_0)}{\sqrt{K+1}}, 
\end{eqnarray*}
where $V^k \eqdef f(x^k)-f^{\inf} + \frac{2\gamma_k}{1-\sqrt{1-\alpha}} \frac{1}{n}\sum_{i=1}^n \norm{\nabla f_i(x^k)-g_i^k}$,
$B= 2c_0  +   \frac{8L_1c_1}{n}\sum_{i=1}^n (f^{\inf}-f^{\inf}_{i})$, and  $c_i = \left(\frac{1}{2} + 2\frac{\sqrt{1-\alpha}}{1-\sqrt{1-\alpha}} \right)L_i$ for $i=0,1$. 
\end{theorem}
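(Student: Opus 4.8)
The plan is to establish a one-step potential inequality of the form $\mathbb{E}[V^{k+1}]\le(1+a_k)\mathbb{E}[V^k]-\gamma_k\mathbb{E}\|\nabla f(x^k)\|+b_k$ with $a_k,b_k=\mathcal{O}(\gamma_k^2)$, and then to unroll it using $\gamma_k=\gamma_0/\sqrt{K+1}$. The starting point is a descent inequality adapted to the normalized step. Because the update is normalized, the segment $[x^k,x^{k+1}]$ has length exactly $\gamma_k$; applying the integral form of the remainder together with the generalized smoothness of $f=\frac1n\sum_i f_i$ along this segment gives
\[
  f(x^{k+1})\le f(x^k)-\gamma_k\left\langle\nabla f(x^k),\frac{g^k}{\|g^k\|}\right\rangle+\frac{\gamma_k^2}{2}\left(L_0+L_1\frac{1}{n}\sum_{i=1}^n\sup_{\theta}\|\nabla f_i(u_\theta^k)\|\right),
\]
where $g^k=\frac1n\sum_i g_i^k$ and $u_\theta^k$ runs over $[x^k,x^{k+1}]$. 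I then use the elementary normalization bound $-\langle\nabla f(x^k),g^k/\|g^k\|\rangle\le-\|\nabla f(x^k)\|+2\|g^k-\nabla f(x^k)\|$ together with $\|g^k-\nabla f(x^k)\|\le\frac1n\sum_i\|g_i^k-\nabla f_i(x^k)\|$, which couples the descent to the accumulated compression error and produces the $-\gamma_k\|\nabla f(x^k)\|$ term that ultimately survives.

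Two smoothness estimates feed this inequality. First, a Gr\"onwall argument applied to $t\mapsto\|\nabla f_i(x^k+tv)\|$ along the unit step direction $v$ yields the gradient-growth bound $\sup_\theta\|\nabla f_i(u_\theta^k)\|\le e^{L_1\gamma_k}\|\nabla f_i(x^k)\|+\frac{L_0}{L_1}(e^{L_1\gamma_k}-1)$; since $\gamma_k\le\gamma_0$, this is the origin of every $\exp(L_1\gamma_0)$ factor. Second, writing $G_i^k=\|\nabla f_i(x^k)-g_i^k\|$, Assumption~\ref{assum:contractive_comp} together with Jensen's inequality gives $\mathbb{E}_k G_i^k\le\sqrt{1-\alpha}\,\|\nabla f_i(x^k)-g_i^{k-1}\|$, and a triangle inequality followed by generalized smoothness (again with step length $\gamma_{k-1}$) turns this into the error recursion
\[
  \mathbb{E}_k G_i^k\le\sqrt{1-\alpha}\left(G_i^{k-1}+\gamma_{k-1}\Big(L_0+L_1\sup_\theta\|\nabla f_i(u_\theta^{k-1})\|\Big)\right).
\]
The weight $\tfrac{2\gamma_k}{1-\sqrt{1-\alpha}}$ in $V^k$ is chosen precisely so that, upon adding the descent inequality and this recursion, the coefficient of $G^k=\frac1n\sum_i G_i^k$ cancels through $\tfrac{2\gamma_k}{1-\sqrt{1-\alpha}}(1-\sqrt{1-\alpha})=2\gamma_k$; the leftover $\mathcal{O}(\gamma_k^2)$ terms collect into the constants $c_0$ and $c_1$.

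The crux, and the step I expect to be the main obstacle, is removing the per-client gradient norms $\frac1n\sum_i\|\nabla f_i(x^k)\|$ that remain after the gradient-growth substitution, without invoking any data-heterogeneity bound. The idea is to control each $\|\nabla f_i(x^k)\|$ by the local suboptimality $f_i(x^k)-f_i^{\inf}$: testing the local descent inequality at a virtual gradient step and using Assumption~\ref{assum:lowerbound_f_i} yields a bound of the form $\|\nabla f_i(x)\|\le c\sqrt{L_0(f_i(x)-f_i^{\inf})}+c'L_1(f_i(x)-f_i^{\inf})$, whose square-root part is linearized by Young's inequality and absorbed into the $c_0$-order constants. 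Averaging over clients is what makes the analysis heterogeneity-free: the identity $\frac1n\sum_i(f_i(x^k)-f_i^{\inf})=(f(x^k)-f^{\inf})+\frac1n\sum_i(f^{\inf}-f_i^{\inf})$ eliminates the cross terms, so the surviving gradient mass splits into $(f(x^k)-f^{\inf})\le V^k$, which feeds the multiplicative factor $a_k=8c_1L_1e^{L_1\gamma_k}\gamma_k^2$, and the automatically finite constant $\frac1n\sum_i(f^{\inf}-f_i^{\inf})$, which feeds $B$. Keeping these constants consistent with the stated exponent and with $B=2c_0+\frac{8L_1c_1}{n}\sum_i(f^{\inf}-f_i^{\inf})$ is the delicate bookkeeping, and this averaging step is exactly where the no-heterogeneity claim is earned.

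Finally I unroll $\mathbb{E}[V^{k+1}]\le(1+a_k)\mathbb{E}[V^k]-\gamma_k\mathbb{E}\|\nabla f(x^k)\|+b_k$. Dividing by the partial products $\Pi_k=\prod_{j=0}^k(1+a_j)$ telescopes the potential, and after dropping the nonnegative terminal term one obtains $\sum_{k=0}^K\frac{\gamma_k}{\Pi_k}\mathbb{E}\|\nabla f(x^k)\|\le V^0+\sum_{k=0}^K\frac{b_k}{\Pi_k}$. With the constant stepsize $\gamma_k=\gamma_0/\sqrt{K+1}$ one has $\sum_k\gamma_k=\gamma_0\sqrt{K+1}$ and $\sum_k\gamma_k^2=\gamma_0^2$, so that $\Pi_K\le\exp\!\big(\sum_k a_k\big)\le\exp(8c_1L_1e^{L_1\gamma_0}\gamma_0^2)$ and $\sum_k b_k\le Be^{L_1\gamma_0}\gamma_0^2$. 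Lower-bounding the left-hand side by $(\min_k\mathbb{E}\|\nabla f(x^k)\|)\sum_k\gamma_k/\Pi_k$ and dividing through then isolates the exponential amplification on the initial potential $V^0$ and leaves the accumulated additive error as the second term, yielding exactly the claimed $\mathcal{O}(1/\sqrt{K+1})$ bound.
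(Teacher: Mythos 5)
Your proposal is correct and follows essentially the same route as the paper's proof: the same normalized-step descent lemma (Cauchy--Schwarz plus triangle inequality yielding the $-\|\nabla f(x^k)\|+2\|\nabla f(x^k)-g^k\|$ structure), the same compression-error recursion via Jensen's inequality and contraction, the same Lyapunov weight $\tfrac{2\gamma_k}{1-\sqrt{1-\alpha}}$ producing the same cancellation, the same heterogeneity-free step bounding $\|\nabla f_i(x)\|\lesssim L_1(f_i(x)-f_i^{\inf})+L_0/L_1$ and averaging over clients, and the same geometric-weight telescoping that puts the exponential factor only on $V^0$. The only cosmetic differences are that you derive the gradient-growth and gradient-versus-suboptimality bounds directly (Gr\"onwall; quadratic inequality plus Young) where the paper invokes existing lemmas of \citet{chen2023generalized} and \citet{gorbunov2024methods}, and the paper's case-split yields the constant $8$ where your route gives a slightly smaller one.
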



Theorem~\ref{thm:ef21} establishes the $\cO(1/\sqrt{K})$ convergence in the expectation of gradient norms for \algname{||EF21||} on nonconvex deterministic problems under generalized smoothness.
This rate is the same as Theorem 1 of \citet{richtarik2021ef21} for \algname{EF21} under traditional smoothness, and does not depend on data heterogeneity conditions in contrast to \citet{crawshaw2024federated,liu2022communication}. 
Also, our stepsize depends on any positive constant $\gamma_0$, and total iteration count $K$, without needing to know smoothness constants $L_0,L_1$ in contrast to \citet{richtarik2021ef21}.
%
%
Additionally, if we choose $\gamma_0 = 1/(8c L_1)$, then our convergence bound from Theorem~\ref{thm:ef21} becomes
\begin{eqnarray*}
  \underset{k=0,1,\ldots,K}{\min} \Exp{\norm{\nabla f(x^k)}}
   &\leq&  \frac{32c L_1 V^0 +L_0/L_1 + 2L_1 \delta^{\inf} }{\sqrt{K+1}}, 
\end{eqnarray*}
where $c=\frac{1}{2} + 2\frac{\sqrt{1-\alpha}}{1-\sqrt{1-\alpha}}$, and $\delta^{\inf} = \frac{1}{n}\sum_{i=1}^n (f^{\inf} -f_i^{\inf} )$.


\paragraph{Comparisons between \algname{||EF21||} and \algname{EF21} under traditional smoothness.}
For nonconvex, traditional smooth problems,  \algname{||EF21||} from Theorem~\ref{thm:ef21} with $L_1=0$  achieves the same $\cO(1/\sqrt{K})$ rate in the expectation of gradient norms as \algname{EF21} analyzed by~\citet{richtarik2021ef21}, but with a larger convergence factor. We prove this by assuming \revision{$\nabla f_i(x^0)=g_i^0$} for all $i$. That is, Theorem~\ref{thm:ef21} with $L_0=L$, $L_1=0$,  $\gamma_0 =\sqrt{(f(x^0)-f^{\inf})/(2b)}$, and $b=\frac{L}{2} + 2\frac{\sqrt{1-\alpha} L}{1-\sqrt{1-\alpha}}$ implies that \algname{||EF21||} achieves
\begin{eqnarray*}
    \underset{k=0,1,\ldots,K}{\min} \Exp{\norm{\nabla f(x^k)}}
   &\leq&  \frac{1}{\sqrt{K+1}}\left[ \frac{f(x^0)- f^{\inf}}{\gamma_0} + 2b\gamma_0\right]\\
   & \leq &  2 \sqrt{ L \frac{(1+3\sqrt{1-\alpha})(1+\sqrt{1-\alpha})}{\alpha} }  \sqrt{\frac{f(x^0)-f^{\inf}}{K+1}} \\
   & \overset{\alpha \geq 0}{\leq} & 4\sqrt{2}  \sqrt{\frac{L}{\alpha}} \sqrt{\frac{f(x^0)-f^{\inf}}{K+1} }.
\end{eqnarray*}
On the other hand, \algname{EF21} attains from Theorem 1 of \cite{richtarik2021ef21} with $L_i=\tilde L = L$ (i.e., $f_i(x)$ has the same smoothness constant as $f(x)$), and $\hat x^K$ being chosen from the iterates $x^0,x^1,\ldots,x^K$ uniformly at random
\begin{eqnarray*}
     \underset{k=0,1,\ldots,K}{\min} \Exp{\norm{\nabla f(x^k)}} 
     & \leq &   \Exp{\norm{\nabla f(\hat x^K)}} \\
   & \leq & \sqrt{  \Exp{ \sqnorm{\nabla f(\hat x^K)}} } \\
   & \leq & \sqrt{ 2L ( 1+ \sqrt{\beta/\theta}) \frac{f(x^0)-f^{\inf}}{K+1}   } \\
   & \overset{\sqrt{\beta/\theta} \leq 2/\alpha-1}{\leq} & 2 \sqrt{\frac{L}{\alpha}} \sqrt{\frac{f(x^0)-f^{\inf}}{K+1} }.
\end{eqnarray*}
In conclusion, the convergence bound of \algname{||EF21||} is slower by a factor of $2\sqrt{2}$ than the original \algname{EF21} for nonconvex, $L$-smooth problems.

While \algname{||EF21||} can handle nonconvex problems under generalized smoothness, the algorithm is limited to deterministic settings, where each node computes its full local gradient. In the following section, we demonstrate how to integrate normalization into \algname{EF21-SGDM}~\citep{fatkhullin2024momentum}, an error feedback algorithm that allows each node to compute its local stochastic gradient, for solving nonconvex stochastic problems.

\begin{algorithm}
\caption{Normalized Error Feedback with Stochastic Gradients \& Momentum (\algname{||EF21-SGDM||})}
\label{alg:normalized_ef21_sgdm}
\begin{algorithmic}[1]
\STATE \textbf{Input:} Stepsizes $\gamma_k>0$ and $\eta_k\in[0,1]$ for $k=0,1,\ldots$; starting points $x^0,g_i^{-1} \in \R^d$ for $i\in \{1,2,\ldots,n\}$, and \revision{$v_i^{-1} = \nabla f_i(x_i^0;\xi_{i}^0)$ with independent random samples $\xi_{i}$ for $i\in \{1,2,\ldots,n\}$}; $\alpha$-contractive compressors $\cC^k:\R^d\rightarrow\R^d$ for $k=0,1,\ldots$ 
\FOR{each iteration $k = 0, 1, \dots, K$}
    \FOR{each client $i = 1, 2, \dots, n$ in parallel}
        \STATE Compute a local stochastic gradient $\nabla f_i(x^k;\xi_i^k)$
        \STATE Update a momentum estimator $v_i^k = (1-\eta_k)v_i^{k-1} + \eta_k \nabla f_i(x^{k};\xi_i^k)$
        \STATE Transmit $\Delta_i^k = \cC^k( v_i^k - g_i^{k-1})$
        \STATE Update  $g_i^k = g_i^{k-1} +\Delta_i^k $
    \ENDFOR
    \STATE Central server computes $g^{k} = \revision{\frac{1}{n}}\sum_{i=1}^n g_i^k$ via  $g_i^k = g_i^{k-1} +\Delta_i^k$
    \STATE Central server updates $x^{k+1} = x^k - \gamma_k \frac{g^k}{\norm{g^k}}$
\ENDFOR
\STATE \textbf{Output:} $x^{K+1}$
\end{algorithmic}
\end{algorithm}

\section{Normalized Error Feedback with Stochastic Gradients \& Momentum (\algname{\large||EF21-SGDM||})}
\label{sec:ef21_sgdm}



Having established the convergence of \algname{||EF21||} for deterministic optimization, we will next develop a distributed error feedback algorithm that incorporate stochastic gradients and normalization to accommodate generalized smoothness conditions. 
In particular, we focus on \algname{||EF21-SGDM||}  (Algorithm~\ref{alg:normalized_ef21_sgdm}), the normalized version of \algname{EF21-SGDM} due to \citet{fatkhullin2024momentum}. We also note that  \algname{||EF21-SGDM||}  recovers many optimization algorithms of interest in the special cases. For instance, it  reduces to   
\begin{itemize}
\item  normalized version of \algname{EF21}~\citep{richtarik2021ef21}, which we call \algname{||EF21||}, when we let $\eta_k=1$ and $\nabla f_i(x^k;\xi_i^k)=\nabla f_i(x^k)$, 
\item normalized version of \algname{EF21-SGD}~\citep{fatkhullin2021ef21}, which we call \algname{||EF21-SGD||}, when we let $\eta_k=1$, and 
\item  normalized version of \algname{SGDM}~\citep{cutkosky2020momentum}, which we call \algname{||SGDM||}\footnote{This method is also known as \algname{NSGD-M}.}, when we let $\eta_k=1-\beta_k$ and $\cC^k(\cdot)$ is the identity compressor/mapping. 
\end{itemize}





In the next theorem, we demonstrate that \algname{||EF21-SGDM||}  attains the same \(\mathcal{O}(1/K^{1/4})\) convergence rate as both \algname{EF21-SGDM}  and \algname{||SGDM||}. 




\begin{theorem}[Convergence of \algname{||EF21-SGDM||}]\label{thm:ef21_sgdm}
 Consider Problem~\eqref{eqn:Problem}, where Assumption~\ref{assum:lowerbound_whole_f} (lower bound on $f$), Assumption~\ref{assum:lowerbound_f_i} (lower bound on $f_i$), Assumption~\ref{assum:LzeroLoneSmooth} (generalized smoothness of $f_i$), Assumption~\ref{assum:contractive_comp} (contractive compressor), and Assumption~\ref{assum:bounded_variance} (bounded variance) hold.
 If \revision{$g_i^{-1} = 0$ for $i\in \{1,\ldots,n\}$ and}
 \begin{eqnarray*}
      \gamma_k\equiv \gamma &=& \frac{\gamma_0}{(K+1)^{\nicefrac{3}{4}}},\text{ with } \revision{0 < \gamma_0 \leq \frac{1}{16L_1}\min\left\{(K+1)^{\nicefrac{1}{2}}C_\alpha, 1\right\}}, \quad \text{and} \\
      \quad \eta_k \equiv \eta &=& \frac{1}{(K+1)^{\nicefrac{1}{2}}},
  \end{eqnarray*}
  \revision{where $C_\alpha \eqdef 1 - \sqrt{1-\alpha}$,} then the iterates $\{x^k\}$ generated by \algname{||EF21-SGDM||}  (Algorithm~\ref{alg:normalized_ef21_sgdm})
  satisfy for $K \geq 0$ 
 \begin{eqnarray*}
   \underset{k=0,1,\ldots,K}{\min} \Exp{\norm{\nabla f(x^k)}}
   &\leq& \revision{\gO\left(\frac{\nicefrac{\delta^0}{\gamma_0}+\nicefrac{\sigma}{\sqrt{n}} +\gamma_0(L_0 +  L_1^2\delta^{\inf})}{(K+1)^{\nicefrac{1}{4}}}\right)}\\
   && \revision{+\gO\left(\frac{\sqrt{1-\alpha}}{\alpha}\left(\frac{\sigma}{(K+1)^{\nicefrac{1}{2}}} + \frac{\gamma_0(L_0 +  L_1^2\delta^{\inf})}{(K+1)^{\nicefrac{3}{4}}}\right)\right),}
\end{eqnarray*}      
where \revision{$\delta^0 \eqdef f(x^0) -f^{\inf}$, and $\delta^{\inf} \eqdef \frac{1}{n}\sum^n_{i=1}(f^{\inf} - f^{\inf}_i)$.}
\end{theorem}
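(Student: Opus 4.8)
The plan is to run a single Lyapunov descent that couples the three error sources present in \algname{||EF21-SGDM||}---the function-value gap, the momentum estimation error, and the error-feedback compression error---while exploiting the fact that normalization keeps every step length exactly equal to $\gamma_k$, so that the gradient-dependent smoothness constant can never destabilize the iteration. First I would establish a one-step descent inequality. Since $x^{k+1}-x^k = -\gamma_k g^k/\norm{g^k}$ has norm exactly $\gamma_k$, Assumption~\ref{assum:LzeroLoneSmooth} applied to $f=\frac1n\sum_i f_i$ yields a generalized descent lemma of the form
\[ f(x^{k+1}) \leq f(x^k) - \gamma_k \left\langle \nabla f(x^k), \tfrac{g^k}{\norm{g^k}}\right\rangle + C\gamma_k^2\exp(L_1\gamma_k)\left(L_0 + L_1\norm{\nabla f(x^k)}\right), \]
where the factor $\exp(L_1\gamma_k)$ is $\cO(1)$ precisely because the stepsize constraint forces $L_1\gamma_k \leq \tfrac{1}{16}$. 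I would then lower-bound the inner product via the elementary estimate $\langle a, b/\norm{b}\rangle \geq \norm{a} - 2\norm{a-b}$, turning the descent into $\norm{\nabla f(x^k)}$ minus twice the total error $\norm{g^k - \nabla f(x^k)}$, which I split as $\norm{g^k - \bar v^k} + \norm{\bar v^k - \nabla f(x^k)}$ with $\bar v^k = \frac1n\sum_i v_i^k$, i.e.\ a compression part plus a momentum part.

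Next I would set up two recursions. For the momentum error $G^k \eqdef \frac1n\sum_i \Exp{\sqnorm{v_i^k - \nabla f_i(x^k)}}$, the update $v_i^k = (1-\eta)v_i^{k-1}+\eta\nabla f_i(x^k;\xi_i^k)$ together with unbiasedness and the variance bound of Assumption~\ref{assum:bounded_variance}, plus generalized smoothness along the segment $[x^{k-1},x^k]$ of length $\gamma$, gives
\[ G^k \leq \left(1-\tfrac{\eta}{2}\right)G^{k-1} + \cO\!\left(\eta^2\sigma^2\right) + \cO\!\left(\eta^{-1}\gamma^2\right)\left(L_0^2 + L_1^2\sup_i\sqnorm{\nabla f_i(x^k)}\right), \]
where averaging the independent per-client noises is what ultimately produces the $\nicefrac{\sigma}{\sqrt n}$ term in the statement. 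For the compression error $C^k \eqdef \frac1n\sum_i \Exp{\sqnorm{g_i^k - v_i^k}}$, Assumption~\ref{assum:contractive_comp} combined with Young's inequality produces the standard \algname{EF21} recursion
\[ C^k \leq \left(1-\tfrac{\alpha}{2}\right)C^{k-1} + \cO\!\left(\alpha^{-1}\right)\frac1n\sum_i\Exp{\sqnorm{v_i^k - v_i^{k-1}}}, \]
and the increment $\norm{v_i^k - v_i^{k-1}}$ is controlled by $\cO(\eta)$ plus $\cO(\gamma)$ drift terms; iterating this geometric contraction is what yields the $\nicefrac{\sqrt{1-\alpha}}{\alpha}$ prefactor appearing in the second group of terms.

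The main obstacle is that the descent lemma and \emph{both} error recursions carry positive contributions proportional to $\norm{\nabla f(x^k)}$ (or its square), and the single negative $\norm{\nabla f(x^k)}$ from the descent must dominate all of them after summation. To close this loop I would invoke the self-bounding property of the gradient under generalized smoothness with lower boundedness (Assumptions~\ref{assum:lowerbound_whole_f}--\ref{assum:lowerbound_f_i}), a bound of the form
\[ \sqnorm{\nabla f_i(x)} \leq \cO\!\left(L_0\left(f_i(x)-f^{\inf}_i\right) + L_1^2\left(f_i(x)-f^{\inf}_i\right)^2\right), \]
which, after averaging and using the relation between the global infimum $f^{\inf}$ and the local infima $f^{\inf}_i$, replaces the gradient-norm terms by the surrogate $\delta^{\inf} = \frac1n\sum_i(f^{\inf}-f^{\inf}_i)$ and explains the $L_1^2\delta^{\inf}$ dependence.

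Finally I would assemble the Lyapunov function
\[ \Psi^k = \left(f(x^k)-f^{\inf}\right) + a\,C^k + b\,G^k, \]
choosing the constants $a,b$ so that the compression and momentum drift terms cancel against the per-step decreases of $C^k$ and $G^k$. Telescoping over $k=0,1,\ldots,K$ and dividing by $\gamma(K+1)$ then isolates $\min_k\Exp{\norm{\nabla f(x^k)}}$. Substituting $\gamma = \gamma_0/(K+1)^{\nicefrac34}$ and $\eta = (K+1)^{-\nicefrac12}$ balances the $\nicefrac{\delta^0}{(\gamma(K+1))}$, $\nicefrac{\sigma}{\sqrt n}$, and $\gamma(L_0+L_1^2\delta^{\inf})$ contributions, each of order $(K+1)^{-\nicefrac14}$, while the residual compression error decays one extra power of $(K+1)^{\nicefrac14}$ faster, reproducing the two-group bound in the statement.
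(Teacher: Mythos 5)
Your high-level architecture (normalized descent lemma with the estimate $\langle a, b/\norm{b}\rangle \geq \norm{a} - 2\norm{a-b}$, a compression recursion, a momentum recursion, a self-bounding gradient inequality, and a Lyapunov function that is telescoped) is the same skeleton the paper uses. However, there are two concrete gaps that prevent your plan from delivering the stated bound.

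First, the $\nicefrac{\sigma}{\sqrt{n}}$ term cannot come out of your construction. Your only momentum quantity is $G^k = \frac{1}{n}\sum_i \Exp{\sqnorm{v_i^k - \nabla f_i(x^k)}}$, a \emph{per-node} average of squared errors: its per-step noise injection is $\cO(\eta^2\sigma^2)$ with no $\nicefrac{1}{n}$ factor, so at steady state $\sqrt{G^k} = \cO(\sqrt{\eta}\,\sigma)$, and bounding the term $\Exp{\norm{\bar v^k - \nabla f(x^k)}}$ appearing in your descent inequality by $\sqrt{G^k}$ yields $\sigma$, not $\nicefrac{\sigma}{\sqrt{n}}$, in the leading $(K+1)^{-\nicefrac{1}{4}}$ term. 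Cross-node cancellation of noise only occurs for the \emph{averaged} error $\Exp{\norm{\frac{1}{n}\sum_i (v_i^k - \nabla f_i(x^k))}}$, where independence of the $\xi_i^k$ kills the cross terms. This is why the paper tracks \emph{both} quantities: $B_k$ (averaged momentum error, carrying $\nicefrac{\sqrt{\eta}\sigma}{\sqrt{n}}$, used in the descent step, and handled by unrolling its recursion rather than by putting it in the Lyapunov function) and $C_k$ (per-node momentum error, carrying $\eta\sigma$, needed because the compressor acts node by node). A single quantity cannot play both roles, so your claim that ``averaging the independent per-client noises'' produces $\nicefrac{\sigma}{\sqrt{n}}$ is inconsistent with your own definition of $G^k$.

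Second, working in squared norms breaks the linear structure that makes the Lyapunov argument close. Under Assumption~\ref{assum:LzeroLoneSmooth} the squared-norm self-bounding inequality is, as you wrote, $\sqnorm{\nabla f_i(x)} \leq \cO\left(L_0(f_i(x)-f_i^{\inf}) + L_1^2(f_i(x)-f_i^{\inf})^2\right)$ --- \emph{quadratic} in the function gap. Your recursions for $G^k$ and $C^k$ therefore contain terms proportional to $\gamma^2 (f(x^k)-f^{\inf})^2$, so the recursion for $\Psi^k = (f(x^k)-f^{\inf}) + aC^k + bG^k$ has coefficients depending on $\Psi^k$ itself; the cancellation ``by choice of $a,b$'' you invoke only works for terms linear in the errors, and closing a quadratic recursion would require an additional induction showing $f(x^k)-f^{\inf}$ stays uniformly bounded, which you never establish. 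The paper avoids this entirely by running all three recursions in \emph{first} moments, where its Lemma~\ref{lemma:bound_avg_norm_grad_multinodeef21} gives the linear bound $\frac{1}{n}\sum_i\norm{\nabla f_i(x)} \leq 8L_1(f(x)-f^{\inf}) + 8L_1\delta^{\inf} + \nicefrac{L_0}{L_1}$; every recursion is then affine in the Lyapunov function, and the resulting amplification factor $\left(1+\cO\left(\nicefrac{L_1^2\gamma^2}{C_\alpha} + \nicefrac{L_1^2\gamma^2}{\eta}\right)\right)$ --- which your plain telescoping also glosses over --- is absorbed by summing with geometric weights $\beta_k$ and using the stepsize restriction on $\gamma_0$ to keep the product of these factors $\cO(1)$.
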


From Theorem~\ref{thm:ef21_sgdm},  \algname{||EF21-SGDM||}  under generalized smoothness achieves the  $\mathcal{O}(1/K^{1/4})$ convergence rate in the expectation of gradient norms. 
This rate is the same as that of \algname{EF21-SGDM}, previously analyzed under traditional smoothness by \citet[Theorem 3]{fatkhullin2024momentum}. 
The result holds regardless of the data heterogeneity degree and \revision{the mini-batch size.} 
\revision{We also notice} that the stepsize \(\gamma_0\) for \algname{||EF21-SGDM||} , unlike in the case of \algname{||EF21||}, depends on the generalized smoothness constant \(L_1\), and the compression parameter \(\alpha\). \revision{However, the considered choice of stepsizes is agnostic to $\sigma$ and $L_0$.} 


Furthermore, Theorem~\ref{thm:ef21_sgdm} with \(\alpha = 1\) (i.e., \(\mathcal{C}^k\) is the identity compressor) implies the convergence bound of the distributed version of normalized SGD with momentum (\algname{||SGDM||}) \citep{cutkosky2020momentum} using $\beta=1-\eta$: 
 \begin{eqnarray}\label{eqn:result_ef21_sgdm_no_comp}
    \underset{k=0,1,\ldots,K}{\min} \Exp{\norm{\nabla f(x^k)}}
    &\leq& \gO\left(\frac{\nicefrac{(f(x^0)-f^{\inf})}{\gamma_0}+\nicefrac{\sigma}{\sqrt{n}} +\gamma_0L_0 + \gamma_0 L_1^2\delta^{\inf} }{(K+1)^{\nicefrac{1}{4}}}\right).
 \end{eqnarray} 
For the single-node  \algname{SGDM}, where \( n = 1 \) and \( \delta^{\inf} = 0 \), our convergence bound in \eqref{eqn:result_ef21_sgdm_no_comp} with \revision{$\gamma_0 = \Theta(\nicefrac{1}{L_1})$} achieves the $\gO\left(\frac{ L_1(f(x^0)-f^{\inf})+ \sigma + \nicefrac{L_0}{L_1} }{(K+1)^{\nicefrac{1}{4}}}\right)$ convergence, which matches the rate obtained by \citet[Corollary 3]{hubler2024parameter}. Unlike the earlier results for single-node  \algname{SGDM}, our result \revision{holds for the} multi-node \revision{regime}. The bound in \eqref{eqn:result_ef21_sgdm_no_comp} for multi-node  \algname{SGDM} includes the \(\nicefrac{\sigma}{\sqrt{n}}\)-term indicating a \(\sqrt{n}\)-fold reduction in the influence of stochastic variance noise \(\sigma\), and the \(\gamma_0 L_1^2 \delta^{\inf}\)-term accounting for the effect of data heterogeneity.

\section{Experiments}

We evaluate the performance of \algname{||EF21||}, and compare it against \algname{EF21}~\citep{richtarik2021ef21}. We test these algorithms for three nonconvex problems that satisfy generalized smoothness: the problem of minimizing  polynomial functions, the logistic regression problem with a nonconvex regularization term over synthetic  and benchmark datasets from LIBSVM~\citep{chang2011libsvm}, and the training of the ResNet-20~\citep{7780459} model over the CIFAR10~\citep{Krizhevsky2009LearningML} dataset\footnote{We implemented \algnamesmall{EF21} and  \algnamesmall{||EF21||} on training the ResNet-20 model by using PyTorch. Our source codes can be found in \href{https://anonymous.4open.science/r/error-feedback-generalized-smoothness-9DDC}{the link to error-feedback-generalized-smoothness-paper}.}. For all experiments, we use a top-$k$ sparsifier, which is a $\frac{k}{d}$-contractive compressor.  



\subsection{Logistic Regression with a Nonconvex Regularizer}



First, we consider a logistic regression problem with a nonconvex regularizer, i.e., Problem~\eqref{eqn:Problem} with 
\begin{align*}
f_i(x)=\log(1+\exp(-b_i a_i^T x)) + \lambda \sum_{j=1}^d \frac{x_j^2}{1+x_j^2},
\end{align*}
%
where $a_i \in \R^d$ is the $i^{\rm th}$ feature vector of data matrix $A\in\R^{n\times d}$ with its class label $b_i \in \{-1,1\}$, and $\lambda>0$ is a regularization parameter. Here, $f(x)$ is nonconvex, and $L$-smooth with $L=\sqnorm{A}/(4n) +2\lambda$. Also, each $f_i(x)$ is $\hat L_i$-smooth with $\hat L_i = \sqnorm{a_i}/4+2\lambda$, and generalized smooth with $L_0=2\lambda+\lambda \sqrt{d} \max_i \norm{a_i}$ and $L_1 = \max_i \norm{a_i}$. The derivations of smoothness parameters can be found in Appendix~\ref{app:smooth_logistic}.

In these experiments, we initialized \( x^0 \in \R^d \), where each coordinate was drawn from a standard normal distribution \(\mathcal{N}(0,1)\), and set \(\lambda = 0.1\). 
Here, the condition $\lambda >  \lambda_{\min}\left(A^{\top} A\right)/(2n)$  ensures that $f(x)$ is nonconvex.
We ran  \algname{||EF21||} and \algname{EF21} on the following datasets: (1) two from LIBSVM~\citep{chang2011libsvm}: \texttt{Breast Cancer} (\(n = 683\), \(d = 10\), and scaled to $[-1,1]$), and \texttt{a1a} (\(n = 1605\), \(d = 123\)); and (2) a synthetically generated dataset (\(n = 20\), \(d = 10\)), where the data matrix \( A \in \R^{n \times d} \) had entries drawn from \(\mathcal{N}(0,1)\), and the class label \( b_i \) was set to either \(-1\) or \(1\) with equal probability.
For \algname{EF21}, we selected the stepsize $\gamma_k  = {1}/{\left( L + \tilde L \sqrt{{\beta}/{\theta}} \right)}$ with $\tilde L = \sqrt{ \sum_{i=1}^n  \hat L_i^2/n}$, $\theta=1-\sqrt{1-\alpha}$, and $\beta={(1-\alpha)}/{(1-\sqrt{1-\alpha})}$, as given by \citet[Theorem 1]{richtarik2021ef21}.
For \algname{||EF21||}, we chose  $\gamma_k=\gamma_0/\sqrt{K+1}$ with $\gamma_0>0$ from Theorem~\ref{thm:ef21}, by setting $\gamma_0=1$, $K = 100$ for the generated data and \texttt{Breast Cancer}, and $K=400$ for \texttt{a1a}.
We choose $\gamma_0=1$, because \algname{||EF21||} with $\gamma_0 \in [1,10]$ converges faster than that with small values of $\gamma_0$ (e.g. $0.1$), when we run the algorithm on a single node ($n=1$) for minimizing polynomial function and solving logistic regression.
We determine $K$ as the smallest number of iterations required to achieve the desired accuracy by performing a grid search with a stepsize of $50$.




Figure~\ref{fig:exp2} shows that \algname{||EF21||} outperforms the traditional \algname{EF21} on all evaluated datasets, achieving faster convergence and higher solution accuracy. This improvement results from the fact that the theoretical stepsize for \algname{||EF21||}, as derived in Theorem~\ref{thm:ef21}, is larger than the stepsize for  \algname{EF21} outlined by \citet[Theorem 1]{richtarik2021ef21}.

\begin{figure}[h]
    \centering
        \includegraphics[width=0.3\textwidth]{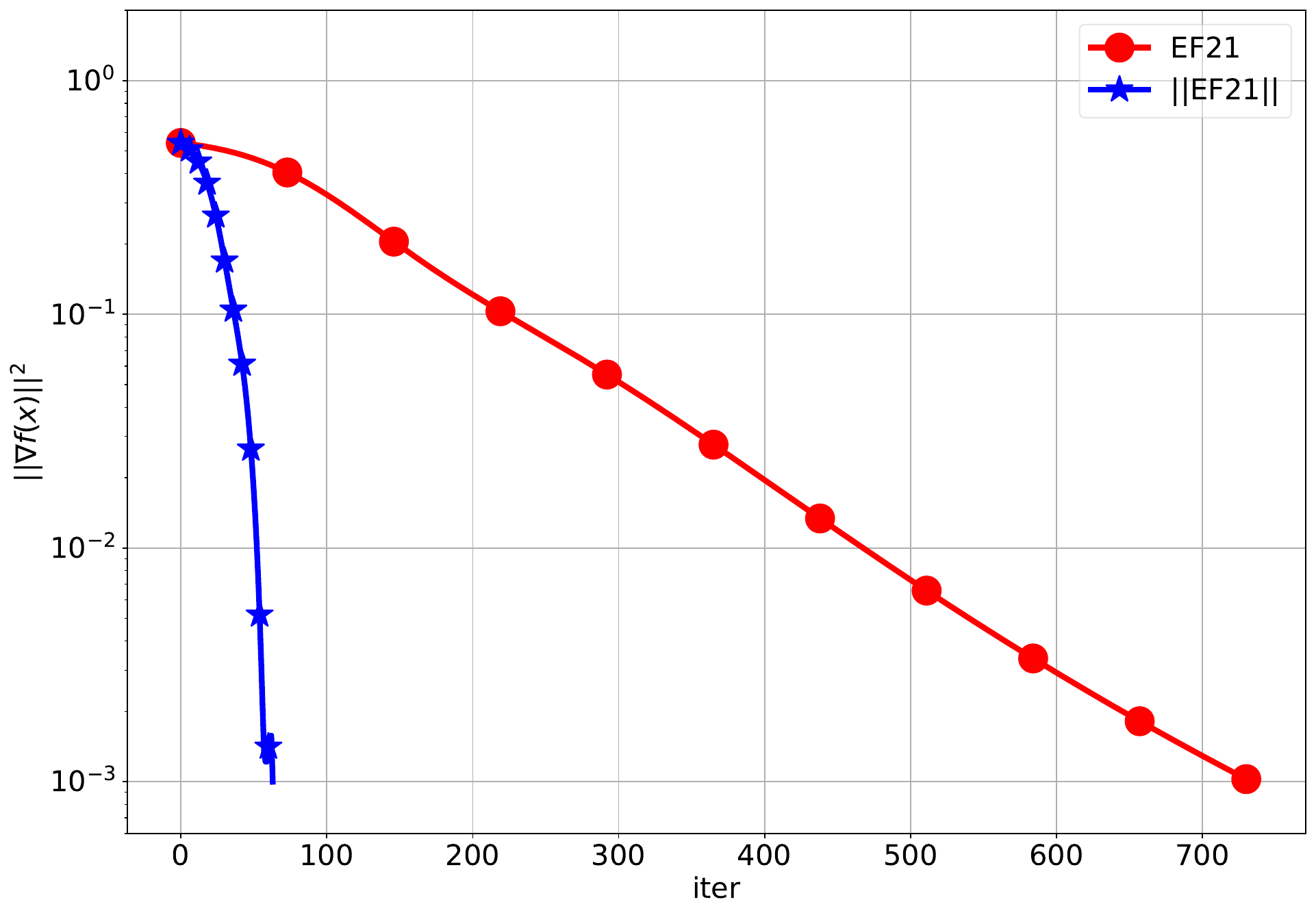}
    \hfill
        \includegraphics[width=0.3\textwidth]{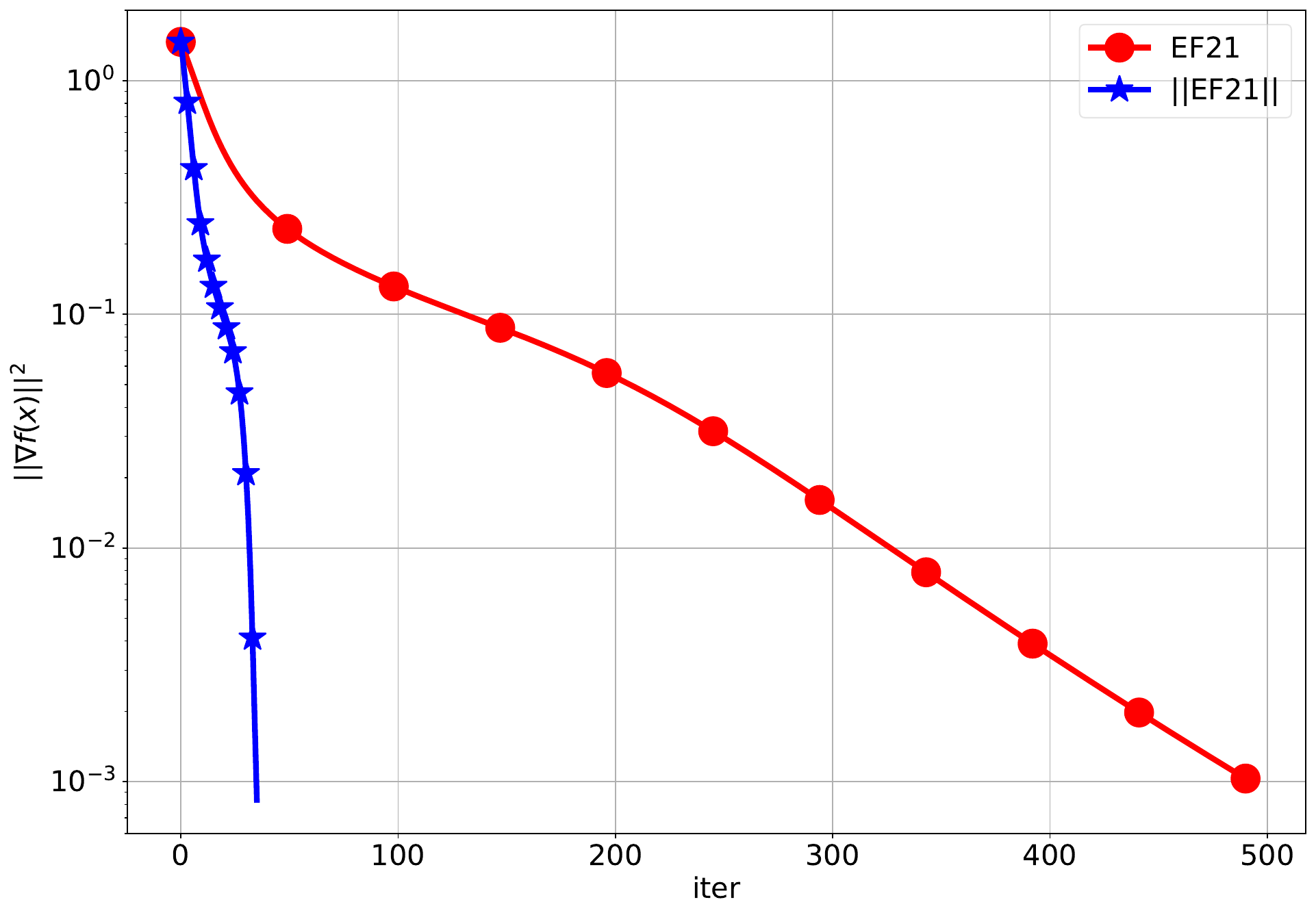}
    \hfill
        \includegraphics[width=0.3\textwidth]{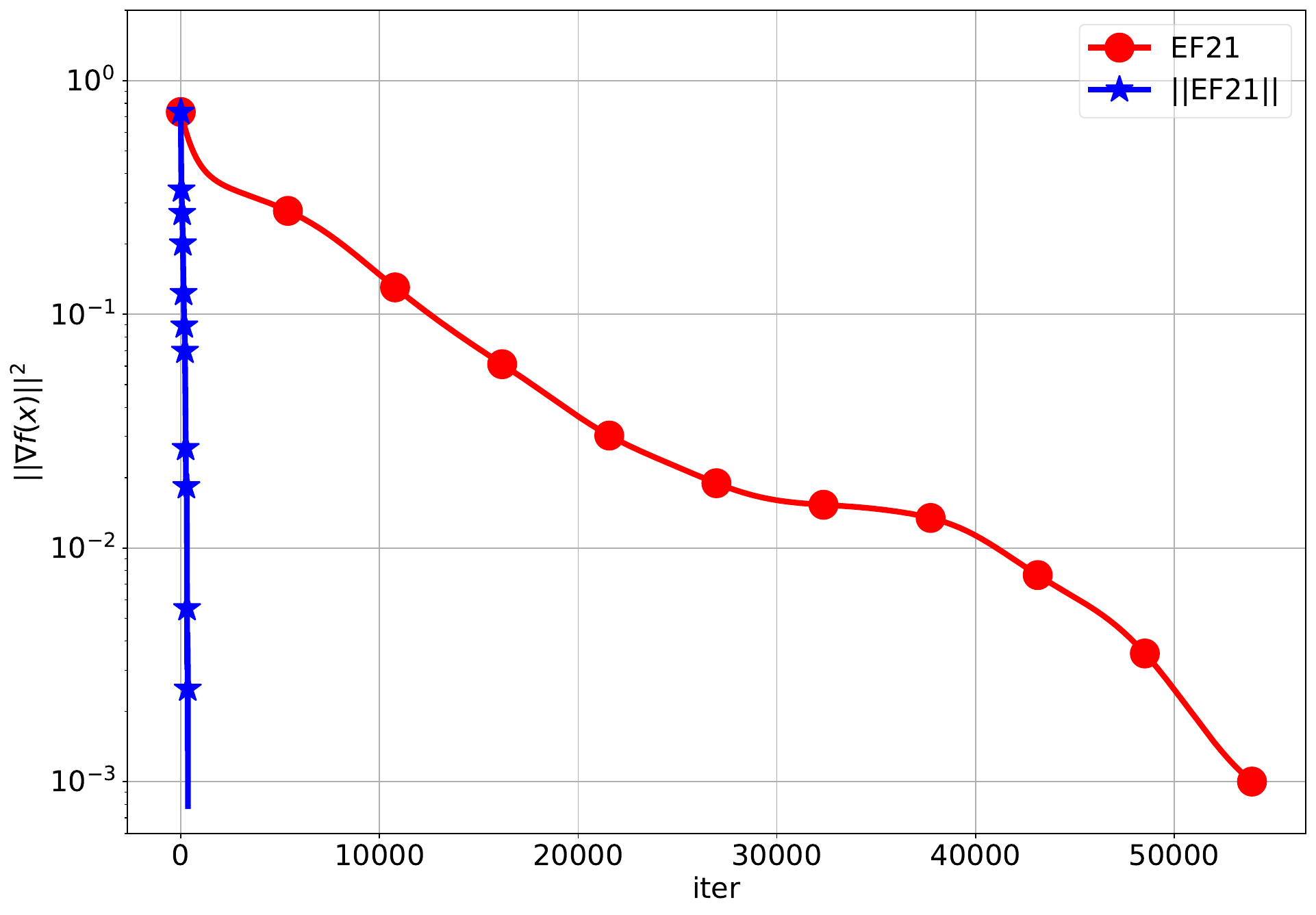}
    \caption{Logistic regression with a nonconvex regularizer using normalized \algname{||EF21||} and  \algname{EF21}. We reported $\sqnorm{\nabla f(x^k)}$ with respect to iteration count $k$. We used the constant stepsize $\gamma = \frac{1}{L + \tilde L \sqrt{\frac{\beta}{\theta}}}$ for \algname{EF21}, and $\gamma = \frac{\gamma_0}{\sqrt{K+1}}$, $\gamma_0 = 1$ for \algname{||EF21||}. Here, $K=100$ for our generated data (left), and \texttt{Breast Cancer} (middle), while $K=400$ for \texttt{a1a}  (right). }
    \label{fig:exp2}
\end{figure}

\subsection{ResNet20 Training Over CIFAR-10}\label{subsec:resnet_20}


Next, we trained the ResNet20~\citep{7780459} model on the CIFAR-10~\citep{Krizhevsky2009LearningML} dataset, which was demonstrated  empirically by \citet{zhang2019gradient}  to satisfy the $(L_0,L_1)$-smoothness condition. 
In these experiments, we used a top-$k$ compressor over $50,000$ training images, with evaluation on $10,000$ test images.
The dataset was evenly distributed  among $5$ clients, each using a mini-batch size of $128$. Both algorithms were run for $100$ epochs with a constant stepsize $\gamma=5$. Here, one epoch refers to a full pass through the entire dataset processed by all clients.  


From Figure
~\ref{fig:plot_exp_DL}, under the same constant stepsize and the  top-$k$ sparsifier with $k=0.01d$, \algname{||EF21||} outperforms \algname{EF21}, in terms of  convergence speed (in gradient norms and losses) and accuracy, relative to the number of bits communicated from each client to the server. Specifically, \algname{||EF21||} achieved accuracy gains of up to $10$\% over \algname{EF21}. 

\begin{figure}[h]
   \centering
   \includegraphics[width=\textwidth]{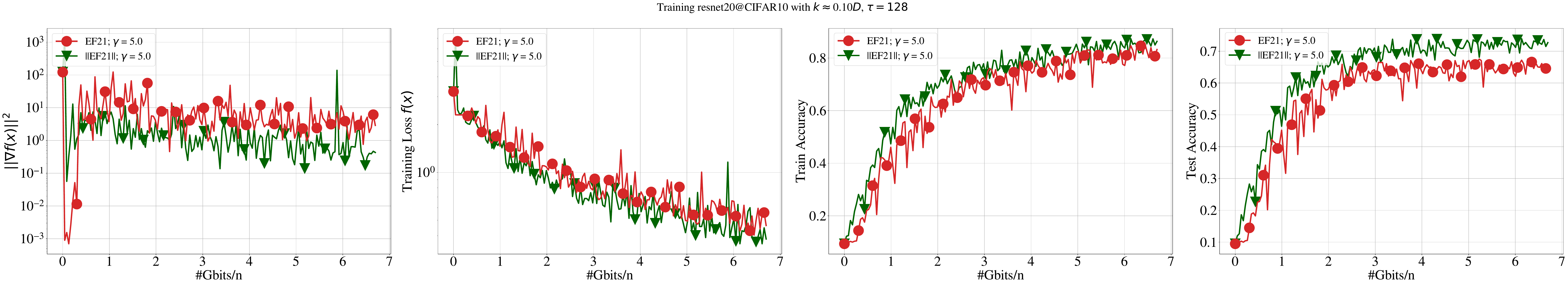}
    \caption{ResNet20 training on CIFAR-10 by using \algname{EF21} and \algname{||EF21||} under the same stepsize $\gamma=5$ and $k=0.1d$ for a  top-$k$ sparsifier.}
   \label{fig:plot_exp_DL}
\end{figure}

\section{Conclusion and Future Works}


In this paper, we have demonstrated that normalization can be effectively combined with \algname{EF21} to develop distributed error feedback algorithms for solving nonconvex optimization problems under generalized smoothness conditions. Specifically, \algname{||EF21||} and \algname{||EF21-SGDM||}  achieve convergence rates of $\mathcal{O}(1/K^{1/2})$ in deterministic settings and $\mathcal{O}(1/K^{1/4})$ in stochastic settings, respectively. These convergence rates match those of the vanilla \algname{EF21} and \algname{EF21-SGDM}  algorithms. Unlike previous works on distributed algorithms under generalized smoothness, our analysis does not assume data heterogeneity or impose smoothness-dependent restrictions on the stepsize \revision{(in the deterministic case)}. Finally, our experiments confirm that \algname{||EF21||} exhibits stronger convergence performance compared to the original \algname{EF21}, due to its larger allowable stepsizes.

Our work implies many promising research directions. One interesting direction is to extend our convergence results for \algname{||EF21||} and \algname{||EF21-SGDM||}  to accommodate decreasing or adaptive stepsize schedules, as the constant stepsizes required by our current analysis can become impractically small when the total number of iterations is large. 
Another important direction is the development of distributed and federated algorithms that leverage clipping or normalization for minimizing nonconvex generalized smooth functions. 

\section*{Acknowledgement}

The research reported in this publication was supported by funding from King Abdullah University of Science and Technology (KAUST): i) KAUST Baseline Research Scheme, ii) Center of Excellence for Generative AI, under award number 5940, iii) SDAIA-KAUST Center of Excellence in Artificial Intelligence and Data Science.


\bibliography{iclr2025_conference} 

\newpage

\appendix

\newpage 
\section{Lemmas}
In this section, we introduce useful lemmas  for our analysis. 
Lemmas~\ref{lemma:LzeroLoneSmooth} and~\ref{lemma:bound_avg_norm_grad_multinodeef21} introduce inequalities by generalized smoothness, while Lemmas~\ref{lemma:descent_ineq_generalized_hubler} and~\ref{lemma:conv_v2} present the descent inequality and convergence rate, respectively, when the normalized gradient descent update is applied.

\begin{lemma}\label{lemma:LzeroLoneSmooth}
Let each $f_i(x)$ be generalized smooth with  parameters $L_0,L_1>0$, and lower bounded by $f^{\inf}_{i}$, and  let $f(x)=\frac{1}{n}\sum_{i=1}^n f_i(x)$. Then, for any $x,y\in\R^d$
\begin{eqnarray}
&&\norm{\nabla f_i(x)-\nabla f_i(y)} \leq (L_0+L_1\norm{\nabla f_i(y)})\exp\left(L_1\norm{x-y}\right)\norm{x-y}, \label{eqn:LzeroLoneSmooth_ineq_0}\\
&&f_i(y) \leq f_i(x) + \inp{\nabla f_i(x)}{y-x} + \frac{L_0+L_1\norm{\nabla f_i(x)}}{2}\exp\left(L_1\norm{x-y}\right)\sqnorm{y-x}, \label{eqn:LzeroLoneSmooth_ineq_1}\\
  &&\frac{\sqnorm{\nabla f_i(x)}}{4(L_0+L_1\norm{\nabla f_i(x)})} \leq f_i(x)-f^{\inf}_{i}, \ \text{and}
  \label{eqn:LzeroLoneSmooth_ineq_2} \\
  &&f(y)  \leq  f(x) + \inp{\nabla f(x)}{y-x} + \frac{L_0 + \frac{L_1}{n}\sum_{i=1}^n \norm{\nabla f_i(x)}}{2}\exp\left(L_1\norm{x-y}\right)\sqnorm{y-x}.  \label{eqn:LzeroLoneSmooth_ineq_3} 
\end{eqnarray}	 	
\end{lemma}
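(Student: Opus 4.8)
The plan is to prove Lemma~\ref{lemma:LzeroLoneSmooth} by establishing the four inequalities in sequence, since each later inequality builds on the earlier ones. The fundamental tool throughout is the integral representation of the gradient difference along the segment connecting $x$ and $y$, namely $\nabla f_i(x)-\nabla f_i(y)=\int_0^1 \nabla^2 f_i(y+t(x-y))(x-y)\,dt$ (or, more carefully for merely differentiable functions, the analogous integral of directional derivatives), combined with Assumption~\ref{assum:LzeroLoneSmooth}.

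First I would prove \eqref{eqn:LzeroLoneSmooth_ineq_0}. The key obstacle here is that the right-hand side of Assumption~\ref{assum:LzeroLoneSmooth} involves $\sup_{\theta\in[0,1]}\norm{\nabla f_i(u_\theta)}$, an implicit quantity, whereas \eqref{eqn:LzeroLoneSmooth_ineq_0} expresses the bound explicitly in terms of $\norm{\nabla f_i(y)}$ times $\exp(L_1\norm{x-y})$. To bridge this gap, I would fix the segment $u_\theta=\theta x+(1-\theta)y$ and define $\phi(\theta)=\norm{\nabla f_i(u_\theta)-\nabla f_i(y)}$ or, more directly, control $g(\theta)=\norm{\nabla f_i(u_\theta)}$ itself. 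Applying Assumption~\ref{assum:LzeroLoneSmooth} to the pair $u_\theta,y$ gives $\norm{\nabla f_i(u_\theta)-\nabla f_i(y)}\le (L_0+L_1\sup_{s}\norm{\nabla f_i(u_s)})\,\theta\norm{x-y}$, and by the triangle inequality $\norm{\nabla f_i(u_\theta)}\le \norm{\nabla f_i(y)}+(L_0+L_1 M)\theta\norm{x-y}$, where $M=\sup_{\theta}\norm{\nabla f_i(u_\theta)}$. A Gr\"onwall-type argument, or taking the supremum over $\theta$ and solving the resulting self-referential inequality, yields $M\le(\norm{\nabla f_i(y)}+L_0/L_1)\exp(L_1\norm{x-y})-L_0/L_1$, which after substituting back produces the clean bound $(L_0+L_1\norm{\nabla f_i(y)})\exp(L_1\norm{x-y})$ on $\norm{\nabla f_i(x)-\nabla f_i(y)}$. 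This self-referential step absorbing the supremum into an exponential factor is the main technical obstacle of the lemma; everything else is comparatively routine.

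Next, \eqref{eqn:LzeroLoneSmooth_ineq_1} follows from \eqref{eqn:LzeroLoneSmooth_ineq_0} by the standard descent-lemma computation: write $f_i(y)-f_i(x)-\inp{\nabla f_i(x)}{y-x}=\int_0^1\inp{\nabla f_i(x+t(y-x))-\nabla f_i(x)}{y-x}\,dt$, bound the inner product by Cauchy--Schwarz, apply \eqref{eqn:LzeroLoneSmooth_ineq_0} (with roles of $x$ and the interpolated point, noting $\norm{(x+t(y-x))-x}=t\norm{y-x}\le\norm{y-x}$ so the exponential factor is dominated by $\exp(L_1\norm{x-y})$), and integrate $t$ from $0$ to $1$ to collect the factor $\tfrac12$. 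Then \eqref{eqn:LzeroLoneSmooth_ineq_2} is a consequence of \eqref{eqn:LzeroLoneSmooth_ineq_1}: minimizing the right-hand side of the descent inequality over $y$ along the direction $y=x-t\nabla f_i(x)$, while lower-bounding $f_i(y)$ by $f^{\inf}_i$ and upper-bounding the exponential factor by a suitable constant along the minimizing step, gives the PL-like inequality relating $\sqnorm{\nabla f_i(x)}$ to $f_i(x)-f^{\inf}_i$; care must be taken that the $\exp(L_1\norm{x-y})$ factor is controlled at the optimal step length, which is where the constant $4$ in the denominator arises.

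Finally, \eqref{eqn:LzeroLoneSmooth_ineq_3} is obtained by averaging \eqref{eqn:LzeroLoneSmooth_ineq_1} over $i\in[n]$. Summing $f_i(y)\le f_i(x)+\inp{\nabla f_i(x)}{y-x}+\tfrac12(L_0+L_1\norm{\nabla f_i(x)})\exp(L_1\norm{x-y})\sqnorm{y-x}$ and dividing by $n$, the function values average to $f$, the inner products average to $\inp{\nabla f(x)}{y-x}$ by linearity, and the smoothness coefficients average to $L_0+\tfrac{L_1}{n}\sum_{i=1}^n\norm{\nabla f_i(x)}$ since the common exponential factor $\exp(L_1\norm{x-y})$ and $\sqnorm{y-x}$ pull out of the sum. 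I expect the only real difficulty to be the supremum-absorption step in \eqref{eqn:LzeroLoneSmooth_ineq_0}; the remaining three inequalities are standard manipulations once that explicit exponential bound is in hand.
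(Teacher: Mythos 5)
Your proposal is correct, but it takes a genuinely different route from the paper: the paper's own proof contains almost no mathematics for the first three inequalities --- it obtains \eqref{eqn:LzeroLoneSmooth_ineq_0} and \eqref{eqn:LzeroLoneSmooth_ineq_1} by citing Proposition~3.2 of \citet{chen2023generalized}, obtains \eqref{eqn:LzeroLoneSmooth_ineq_2} by citing Lemma~2.2 of \citet{gorbunov2024methods}, and only proves \eqref{eqn:LzeroLoneSmooth_ineq_3} itself, by exactly the averaging argument you give. You instead reconstruct the cited results from first principles, and your reconstructions are sound: the absorption of the supremum into an exponential for \eqref{eqn:LzeroLoneSmooth_ineq_0}, the integral descent computation with the factor $\int_0^1 t\,dt = \tfrac12$ for \eqref{eqn:LzeroLoneSmooth_ineq_1}, and the gradient-step argument for \eqref{eqn:LzeroLoneSmooth_ineq_2} (taking $y = x - t\nabla f_i(x)$ with $t = 1/(2(L_0+L_1\norm{\nabla f_i(x)}))$ forces $L_1 t \norm{\nabla f_i(x)} \le \tfrac12$, so the exponential factor is at most $e^{1/2}$ and the resulting coefficient is $\tfrac12(1-e^{1/2}/4) \approx 0.29 \ge \tfrac14$, which is precisely where the constant $4$ comes from). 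One caveat on \eqref{eqn:LzeroLoneSmooth_ineq_0}: the two mechanisms you present as alternatives are not interchangeable. Taking the supremum over $\theta$ and solving the self-referential inequality $M \le \norm{\nabla f_i(y)} + (L_0+L_1 M)\norm{x-y}$ only works when $L_1\norm{x-y} < 1$ and yields a factor $(1-L_1\norm{x-y})^{-1}$, not the exponential; for arbitrary $x,y$ you need the Gr\"onwall route (or a chaining of the local solve over a subdivision of the segment into pieces of length below $1/L_1$, which recovers the exponential bound). Since you name Gr\"onwall as the primary mechanism, this is an imprecision rather than a gap. In sum, your approach buys a self-contained proof of the lemma, while the paper's buys brevity by offloading the technical content to prior work; the mathematical substance underlying both is the same.
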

\begin{proof}
The first and second statements are derived in \citet[Proposition 3.2]{chen2023generalized}\revision{. Next, the third inequality follows from \citep[Lemma 2.2]{gorbunov2024methods}. Finally, averaging \eqref{eqn:LzeroLoneSmooth_ineq_1} for $i=1,\ldots,n$ and taking into account that $f(x) = \frac{1}{n}\sum_{i=1}^n f_i(x)$, we get \eqref{eqn:LzeroLoneSmooth_ineq_3}.} 
\end{proof}

\begin{lemma}\label{lemma:bound_avg_norm_grad_multinodeef21}
Let $f_i(x)$ be generalized smooth with parameters $L_0,L_1>0$, and lower bounded by $f^{\inf}_{i}$, and let $f(x)$ be lower bounded by $f^{\inf}$. Then, for any $x\in\R^d$
\begin{eqnarray}
  \frac{1}{n}\sum_{i=1}^n \norm{\nabla f_i(x)}  \leq 8L_1 (f(x)-f^{\inf}) + \frac{8L_1}{n}\sum_{i=1}^n (f^{\inf}-f^{\inf}_{i}) + L_0/L_1. \label{eq:sum_of_grad_norms_bound}
\end{eqnarray}
\end{lemma}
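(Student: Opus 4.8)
The plan is to start from the gradient-domination inequality \eqref{eqn:LzeroLoneSmooth_ineq_2} in Lemma~\ref{lemma:LzeroLoneSmooth}, which after clearing the denominator reads
\begin{equation*}
\sqnorm{\nabla f_i(x)} \leq 4\left(L_0 + L_1\norm{\nabla f_i(x)}\right)\left(f_i(x) - f^{\inf}_i\right)
\end{equation*}
for every $i\in[n]$. My goal is a bound on $\norm{\nabla f_i(x)}$ that is affine in the suboptimality gap $f_i(x)-f^{\inf}_i$, and the obstacle is that the right-hand side itself contains $\norm{\nabla f_i(x)}$: this is really a quadratic inequality in the very quantity I want to control, so the $L_1$-term cannot simply be dropped.

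The cleanest way to resolve this is a two-case argument comparing $L_0$ with $L_1\norm{\nabla f_i(x)}$. If $L_1\norm{\nabla f_i(x)} \leq L_0$, then trivially $\norm{\nabla f_i(x)} \leq L_0/L_1$. Otherwise $L_0 + L_1\norm{\nabla f_i(x)} \leq 2L_1\norm{\nabla f_i(x)}$, so the displayed inequality gives $\sqnorm{\nabla f_i(x)} \leq 8L_1\norm{\nabla f_i(x)}\bigl(f_i(x)-f^{\inf}_i\bigr)$; dividing by $\norm{\nabla f_i(x)}$ (the case $\norm{\nabla f_i(x)}=0$ being immediate) yields $\norm{\nabla f_i(x)} \leq 8L_1(f_i(x)-f^{\inf}_i)$. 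Since both candidate upper bounds are nonnegative, bounding the maximum by the sum gives, in either case,
\begin{equation*}
\norm{\nabla f_i(x)} \leq \frac{L_0}{L_1} + 8L_1\left(f_i(x) - f^{\inf}_i\right).
\end{equation*}
I expect this case split, rather than the quadratic formula or a Young's-inequality split of the cross term, to be what produces exactly the constants $8$ and $L_0/L_1$ demanded by the statement; the more mechanical routes yield a smaller leading constant that would then have to be loosened.

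Finally I would average over $i=1,\dots,n$ and rewrite the averaged gap by inserting $f^{\inf}$:
\begin{equation*}
\frac{1}{n}\sum_{i=1}^n\left(f_i(x)-f^{\inf}_i\right) = \left(f(x)-f^{\inf}\right) + \frac{1}{n}\sum_{i=1}^n\left(f^{\inf}-f^{\inf}_i\right),
\end{equation*}
an identity that uses only $f(x)=\frac{1}{n}\sum_i f_i(x)$. Substituting this into the averaged per-client bound yields \eqref{eq:sum_of_grad_norms_bound} directly. The only point needing a moment's care is that the above is an exact decomposition (no inequality is hidden in it) and that each averaged term is genuinely nonnegative: $f(x)-f^{\inf}\geq 0$ by definition of the infimum, and $\frac{1}{n}\sum_i(f^{\inf}-f^{\inf}_i)\geq 0$ since $f^{\inf}=\inf_x \frac{1}{n}\sum_i f_i(x) \geq \frac{1}{n}\sum_i \inf_x f_i(x)$. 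Hence no sign issues arise and the resulting bound is meaningful.
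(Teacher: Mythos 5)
Your proof is correct and follows essentially the same route as the paper's: both start from inequality \eqref{eqn:LzeroLoneSmooth_ineq_2}, perform the identical case split on whether $\norm{\nabla f_i(x)} \leq L_0/L_1$, bound the resulting maximum by the sum $8L_1(f_i(x)-f_i^{\inf}) + L_0/L_1$, and then average over $i$ using the exact decomposition $f_i(x)-f_i^{\inf} = (f_i(x)-f^{\inf}) + (f^{\inf}-f_i^{\inf})$. Your additional remarks (handling $\norm{\nabla f_i(x)}=0$ when dividing, and checking nonnegativity of the averaged terms) are fine but not points the paper needed to belabor.
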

\begin{proof}
  By the $(L_0,L_1)$-smoothness of $f_i(x)$, 
\begin{eqnarray*}
    4(f_i(x)-f^{\inf}_{i}) \overset{\eqref{eqn:LzeroLoneSmooth_ineq_2}}{\geq} \frac{\sqnorm{\nabla f_i(x)}}{L_0+L_1\norm{\nabla f_i(x)}} \geq \begin{cases} 
    \frac{\sqnorm{\nabla f_i(x)}}{2L_0} \quad \text{if } \norm{\nabla f_i(x)} \leq \frac{L_0}{L_1} \\ 
    \frac{\norm{\nabla f_i(x)}}{2L_1} \quad \text{otherwise}.
    \end{cases}
\end{eqnarray*}  
This condition \revision{implies}
\begin{eqnarray*}
    \norm{\nabla f_i(x)} 
    & \leq & \max( 8L_1 (f_i(x)-f^{\inf}_{i}), L_0/L_1) \\
    & \leq &  8L_1 (f_i(x)-f^{\inf}_{i}) + L_0/L_1 \\
    & \leq &  8L_1 (f_i(x)-f^{\inf}) + 8L_1(f^{\inf}-f^{\inf}_{i})  + L_0/L_1.
\end{eqnarray*}
Finally, by the fact that $f(x)=\frac{1}{n}\sum_{i=1}^n f_i(x)$, 
\begin{eqnarray*}
    \frac{1}{n}\sum_{i=1}^n \norm{\nabla f_i(x)}  \leq 8L_1 (f(x)-f^{\inf}) + \frac{8L_1}{n}\sum_{i=1}^n (f^{\inf}-f^{\inf}_{i}) + L_0/L_1.
\end{eqnarray*}    
\end{proof}

\begin{lemma}\label{lemma:descent_ineq_generalized_hubler}
 \revision{Let} $f(x) = \frac{1}{n}\sum_{i=1}^n f_i(x)$, where each $f_i(x)$ is  generalized smooth with parameters $L_0,L_1 >0$. 
Let $x^{k+1}=x^k - \frac{\gamma_k}{\norm{v^k}} v^k$ for $\gamma_k>0$. Then,  
\begin{eqnarray*}
    f(x^{k+1}) 
  &\leq& f(x^k) - \gamma_k \norm{\nabla f(x^k)} + 2\gamma_k \norm{\nabla f(x^k)-v^k}  \\ 
  &&+\frac{\gamma_k^2}{2}\exp\left(\gamma_k L_1\right) \left(L_0 + \frac{L_1}{n}\sum_{i=1}^n \norm{\nabla f_i(x^k)}\right). 
\end{eqnarray*}   
\end{lemma}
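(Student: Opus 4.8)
The plan is to apply the averaged descent inequality \eqref{eqn:LzeroLoneSmooth_ineq_3} from Lemma~\ref{lemma:LzeroLoneSmooth} at $x = x^k$ with $y = x^{k+1}$, and then control the resulting linear term using an elementary bound for normalized directions. First I would record the geometry of the update: since $x^{k+1} - x^k = -\gamma_k \frac{v^k}{\norm{v^k}}$ and $\frac{v^k}{\norm{v^k}}$ is a unit vector, we have $\norm{x^{k+1}-x^k} = \gamma_k$ and $\sqnorm{x^{k+1}-x^k} = \gamma_k^2$. Substituting $y = x^{k+1}$ into \eqref{eqn:LzeroLoneSmooth_ineq_3} then gives
\[
f(x^{k+1}) \leq f(x^k) - \gamma_k \inp{\nabla f(x^k)}{\tfrac{v^k}{\norm{v^k}}} + \frac{\gamma_k^2}{2}\exp(L_1\gamma_k)\left(L_0 + \frac{L_1}{n}\sum_{i=1}^n \norm{\nabla f_i(x^k)}\right),
\]
so the quadratic term already matches the claimed bound exactly, and it only remains to lower-bound the inner product $\inp{\nabla f(x^k)}{v^k/\norm{v^k}}$.

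The key step is this normalized-direction estimate. Writing $g = \nabla f(x^k)$ and $v = v^k$, I would decompose $\inp{g}{v/\norm{v}} = \norm{v} + \inp{g-v}{v/\norm{v}}$, apply Cauchy--Schwarz to the second summand to obtain $\inp{g-v}{v/\norm{v}} \geq -\norm{g-v}$, and then use the triangle inequality $\norm{v} \geq \norm{g} - \norm{g-v}$. Combining these yields $\inp{g}{v/\norm{v}} \geq \norm{g} - 2\norm{g-v}$, equivalently $-\gamma_k \inp{g}{v/\norm{v}} \leq -\gamma_k\norm{\nabla f(x^k)} + 2\gamma_k \norm{\nabla f(x^k)-v^k}$. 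Plugging this into the displayed inequality produces exactly the statement of the lemma.

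I do not anticipate a genuine obstacle: the argument is a substitution into \eqref{eqn:LzeroLoneSmooth_ineq_3} followed by the two-line normalized-direction bound. The one point requiring mild care is the implicit assumption $v^k \neq 0$, needed so that the normalization $v^k/\norm{v^k}$ is well-defined; this is exactly the regime in which the update $x^{k+1} = x^k - \gamma_k g^k/\norm{g^k}$ of Algorithm~\ref{alg:normalized_ef21} is defined, so it can be taken as a standing condition along the iterates (or the degenerate case $v^k = 0$, which forces $\norm{\nabla f(x^k)} \leq \norm{\nabla f(x^k)-v^k}$, handled trivially).
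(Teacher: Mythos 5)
Your proposal is correct and follows essentially the same route as the paper's proof: both substitute the normalized update into the averaged descent inequality \eqref{eqn:LzeroLoneSmooth_ineq_3}, then bound the linear term by the identical decomposition $\inp{\nabla f(x^k)}{v^k} = \sqnorm{v^k} + \inp{\nabla f(x^k)-v^k}{v^k}$, Cauchy--Schwarz on the cross term, and the triangle inequality $\norm{v^k} \geq \norm{\nabla f(x^k)} - \norm{\nabla f(x^k)-v^k}$. Your explicit treatment of the degenerate case $v^k = 0$ is a minor addition the paper leaves implicit, but the argument is otherwise the same.
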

\begin{proof}
Let each $f_i(x)$ be  generalized smooth with $L_0,L_1>0$, and $f(x)=\frac{1}{n}\sum_{i=1}^n f_i(x)$. By \eqref{eqn:LzeroLoneSmooth_ineq_3} of Lemma~\ref{lemma:LzeroLoneSmooth}, and by the fact that $x^{k+1}=x^k - \frac{\gamma_k}{\norm{v^k}} v^k$ for $\gamma_k>0$,
\begin{eqnarray*}
    f(x^{k+1}) 
    & \leq & f(x^k) - \frac{\gamma_k}{\norm{v^k}} \langle \nabla f(x^k) , v^k \rangle + \frac{\gamma_k^2}{2}\exp(\gamma_k L_1) \left( L_0 + \frac{L_1}{n}\sum_{i=1}^n \norm{\nabla f_i(x^k)} \right) \\
    & = &  f(x^k) - \frac{\gamma_k}{\norm{v^k}} \langle \nabla f(x^k) - v^k , v^k \rangle - \gamma_k \norm{v^k} \\
    && +  \frac{\gamma_k^2}{2}\exp(\gamma_k L_1) \left( L_0 + \frac{L_1}{n}\sum_{i=1}^n \norm{\nabla f_i(x^k)} \right) \\
    & \leq & f(x^k) + \gamma_k \norm{\nabla f(x^k)-v^k}- \gamma_k \norm{v^k} \\
    && +  \frac{\gamma_k^2}{2}\exp(\gamma_k L_1) \left( L_0 + \frac{L_1}{n}\sum_{i=1}^n \norm{\nabla f_i(x^k)} \right),
\end{eqnarray*}
where we reach the last inequality by Cauchy-Schwarz inequality. 
Next, since 
\begin{eqnarray*}
    - \norm{v^k} \overset{\text{triangle ineq.}}{\leq} - \norm{\nabla f(x^k)} + \norm{\nabla f(x^k)-v^k},
\end{eqnarray*}
we get  
\begin{eqnarray*}
    f(x^{k+1}) 
  &\leq& f(x^k) - \gamma_k \norm{\nabla f(x^k)} + 2\gamma_k \norm{\nabla f(x^k)-v^k} \\
  && + \frac{\gamma_k^2}{2}\exp(\gamma_k L_1) \left( L_0 + \frac{L_1}{n}\sum_{i=1}^n \norm{\nabla f_i(x^k)} \right).
\end{eqnarray*}

\end{proof}




    

\begin{lemma}\label{lemma:conv_v2}
Let \revision{$\{V^k\}_{k\geq 0}, \{W^k\}_{k\geq 0}$} be non-negative sequences satisfying
\begin{eqnarray*}
    V^{k+1} \leq (1+b_1\exp(L_1\gamma)\gamma^2)V^k -b_2 \gamma W^k + b_3 \exp(L_1\gamma) \gamma^2,
\end{eqnarray*}
for $\gamma,b_1,b_2,b_3>0$. Then, 
\begin{eqnarray*}
    \underset{k=0,1,\ldots,K}{\min} W^k
   \leq  \frac{V^0 \exp(b_1\exp(L_1\gamma)\gamma^2 (K+1))}{b_2\gamma (K+1)} + \frac{b_3}{b_2}\exp(L_1\gamma)\gamma.
\end{eqnarray*} 
\end{lemma}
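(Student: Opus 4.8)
The plan is to prove the recursion bound for $\min_k W^k$ by the standard telescoping argument for ``contraction-plus-noise'' sequences, isolating the $W^k$ term and summing geometrically. First I would rearrange the given inequality to bound the summable quantity $b_2 \gamma W^k$ in terms of the difference of successive $V^k$ values. Writing $a \eqdef 1 + b_1\exp(L_1\gamma)\gamma^2 \geq 1$, the hypothesis reads $V^{k+1} \leq a V^k - b_2\gamma W^k + b_3\exp(L_1\gamma)\gamma^2$, so I solve for the noise term: $b_2\gamma W^k \leq a V^k - V^{k+1} + b_3\exp(L_1\gamma)\gamma^2$. The key idea will be to unroll this recursion rather than telescope it directly, since the multiplicative factor $a$ prevents a clean telescoping cancellation.

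Next I would iterate the recursion on $V^k$ alone to obtain $V^k \leq a^k V^0 + b_3\exp(L_1\gamma)\gamma^2 \sum_{j=0}^{k-1} a^j \leq a^k V^0 + b_3\exp(L_1\gamma)\gamma^2 \cdot \frac{a^k - 1}{a-1}$, where dropping the nonpositive $-b_2\gamma W^j$ terms keeps the bound valid. Substituting this into $b_2\gamma W^k \leq a V^k + b_3\exp(L_1\gamma)\gamma^2$ and then summing over $k=0,\ldots,K$ gives a bound on $b_2\gamma \sum_{k=0}^K W^k$. Since the minimum is at most the average, $\min_{k} W^k \leq \frac{1}{K+1}\sum_{k=0}^K W^k$, and I divide through by $b_2\gamma(K+1)$. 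The dominant term $a^{K+1} V^0$ must then be controlled by the elementary estimate $a^{K+1} = (1 + b_1\exp(L_1\gamma)\gamma^2)^{K+1} \leq \exp\!\left(b_1\exp(L_1\gamma)\gamma^2(K+1)\right)$, which produces the exponential factor in the claimed bound. The second term, after summing the geometric series and simplifying $\frac{a^{K+1}-1}{a-1}$, should collapse to the $\frac{b_3}{b_2}\exp(L_1\gamma)\gamma$ contribution once the $1/(b_1\exp(L_1\gamma)\gamma^2)$ from $1/(a-1)$ cancels against the $\gamma^2$ factors; I would verify the cancellation carefully rather than track every constant.

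The main obstacle I anticipate is matching the constants in the second term exactly as stated, namely getting $\frac{b_3}{b_2}\exp(L_1\gamma)\gamma$ with no spurious extra factors of $a$ or $\exp(L_1\gamma)$. The cleanest route to avoid an extra factor of $a$ is to telescope the \emph{noise-inclusive} form directly: sum $b_2\gamma W^k \leq a V^k - V^{k+1} + b_3\exp(L_1\gamma)\gamma^2$ and regroup as $\sum_{k=0}^K (a V^k - V^{k+1}) = (a-1)\sum_{k=0}^K V^k + V^0 - V^{K+1} \leq (a-1)\sum_{k=0}^K V^k + V^0$, then insert the unrolled bound on each $V^k$ only into the $(a-1)\sum V^k$ piece. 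This keeps the leading $V^0$ term at the right scale and isolates the geometric-series contribution to the additive noise. I expect the bookkeeping here—deciding which terms to bound by the exponential and which to leave exact—to be where an incautious argument would lose the tightness of the stated constants, so I would be deliberate about the order of the estimates.
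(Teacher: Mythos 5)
Your plan hits a genuine gap, and it is exactly at the point you flagged: the noise term does \emph{not} collapse to $\frac{b_3}{b_2}\exp(L_1\gamma)\gamma$. Write $a \eqdef 1+b_1\exp(L_1\gamma)\gamma^2$ and $E \eqdef b_3\exp(L_1\gamma)\gamma^2$. Carrying out your own plan --- summing $b_2\gamma W^k \leq aV^k - V^{k+1} + E$, regrouping as $(a-1)\sum_{k=0}^K V^k + V^0 - V^{K+1} + (K+1)E$, and inserting the unrolled bound $V^k \leq a^kV^0 + E\,\frac{a^k-1}{a-1}$ into the $(a-1)\sum_k V^k$ piece --- yields
\begin{equation*}
b_2\gamma\sum_{k=0}^K W^k \;\leq\; V^0 a^{K+1} + E\,\frac{a^{K+1}-1}{a-1},
\end{equation*}
so after using $\min \leq$ average and dividing by $b_2\gamma(K+1)$, your second term is
\begin{equation*}
\frac{E}{b_2\gamma}\cdot\frac{a^{K+1}-1}{(a-1)(K+1)} \;=\; \frac{b_3}{b_2}\exp(L_1\gamma)\gamma\cdot\frac{1}{K+1}\sum_{j=0}^{K}a^j .
\end{equation*}
The cancellation you hope for would require $a^{K+1}-1 = (a-1)(K+1)$, but since $a>1$ we have $a^{K+1}-1=(a-1)\sum_{j=0}^K a^j \geq (a-1)(K+1)$, with strict inequality for every $K\geq 1$; the extra factor $\frac{1}{K+1}\sum_{j=0}^K a^j$ is at least $1$ and can grow exponentially in $(a-1)(K+1)$. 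So your route proves only a strictly weaker statement in which the additive noise term carries this amplification factor; it cannot recover the constant claimed in the lemma.

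The obstruction is structural: taking a \emph{uniform} average over $k$ forces the accumulated noise to be pushed through the unrolled recursion, where it is amplified geometrically. The paper's proof avoids this by weighting before averaging. With $\beta_k = a^{-(k+1)}$ (so that $a\beta_k = \beta_{k-1}$ and $\beta_{-1}=1$), multiplying the hypothesis by $\beta_k$ gives $\beta_kV^{k+1} \leq \beta_{k-1}V^k - b_2\gamma\,\beta_kW^k + E\beta_k$, which telescopes exactly. Then
\begin{equation*}
\min_{k=0,\ldots,K} W^k \;\leq\; \frac{\sum_{k=0}^K \beta_k W^k}{\sum_{k=0}^K \beta_k} \;\leq\; \frac{V^0}{b_2\gamma\sum_{k=0}^K\beta_k} + \frac{E}{b_2\gamma},
\end{equation*}
where the noise term is now the $\beta$-weighted average of the constant $E$, i.e.\ exactly $E/(b_2\gamma)=\frac{b_3}{b_2}\exp(L_1\gamma)\gamma$, with no amplification; only the $V^0$ term pays the exponential price, via $\sum_{k=0}^K\beta_k \geq (K+1)\beta_K = (K+1)a^{-(K+1)}$ and $a^{K+1}\leq\exp\bigl(b_1\exp(L_1\gamma)\gamma^2(K+1)\bigr)$. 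If you replace your uniform average by this geometric weighting, the rest of your argument (including the final $1+x\leq e^x$ step, which you already have) goes through.
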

\begin{proof}
Define $\beta_k = \frac{\beta_{k-1}}{1+b_1\exp(L_1\gamma)\gamma^2}$ for $k=0,1,\ldots$ and $\beta_{-1}=1$. Then, we can show that  $\beta_k=\frac{1}{(1+b_1\exp(L_1\gamma)\gamma^2)^{k+1}}$ for $k=0,1,\ldots$, and that 
\begin{eqnarray*}
    \beta_k V^{k+1} 
    & \leq & (1+b_1\exp(L_1\gamma)\gamma^2)\beta_kV^k -b_2 \gamma \beta_k W^k + b_3 \exp(L_1\gamma) \gamma^2\beta_k \\ 
    & = & \beta_{k-1 } V^k - b_2\gamma \beta_k W^k + b_3 \exp(L_1\gamma)\gamma^2 \beta_k.
\end{eqnarray*}
Therefore, 
\begin{eqnarray*}
    \underset{k=0,1,\ldots,K}{\min} W^k
    & \leq & \frac{1}{\sum_{k=0}^K \beta_k} \sum_{k=0}^K \beta_k W^k \\ 
    & \leq & \frac{\sum_{k=0}^K (\beta_{k-1}V^k - \beta_k V^{k+1})}{b_2\gamma \sum_{k=0}^K \beta_k} + \frac{b_3}{b_2}\exp(L_1\gamma)\gamma  \\
    & = & \frac{\beta_{-1}V^0 - \beta_K V^{k+1}}{b_2\gamma \sum_{k=0}^K \beta_k} + \frac{b_3}{b_2}\exp(L_1\gamma)\gamma.
\end{eqnarray*}
By the fact that $\beta_{-1}=1$, $\beta_K >0$, and $V^{k+1} \geq 0$,
\begin{eqnarray*}
    \underset{k=0,1,\ldots,K}{\min} W^k
    \leq \frac{V^0}{b_2\gamma \sum_{k=0}^K \beta_k} + \frac{b_3}{b_2}\exp(L_1\gamma)\gamma.
\end{eqnarray*}
Next, since
\begin{eqnarray*}
    \sum_{k=0}^K \beta_k 
     \geq  (K+1) \underset{k=0,1,\ldots,K}{\min} \beta_k = \frac{K+1}{(1+b_1\exp(L_1\gamma)\gamma^2)^{K+1}},
\end{eqnarray*}
we have 
\begin{eqnarray*}
    \underset{k=0,1,\ldots,K}{\min} W^k
    & \leq & \frac{V^0 (1+b_1\exp(L_1\gamma)\gamma^2)^{K+1}}{b_2\gamma (K+1)} + \frac{b_3}{b_2}\exp(L_1\gamma)\gamma  \\
    & \overset{1+x \leq \exp(x)}{\leq} & \frac{V^0 \exp(b_1\exp(L_1\gamma)\gamma^2 (K+1))}{b_2\gamma (K+1)} + \frac{b_3}{b_2}\exp(L_1\gamma)\gamma.
\end{eqnarray*} 
    
\end{proof}

\clearpage

\section{Convergence Proof for \algname{\large||EF21||} (Theorem~\ref{thm:ef21})}

In this section, we derive the convergence rate results of \algname{||EF21||}. \revision{We start with the following lemma technical lemma.}
\begin{lemma}\label{lemma:ef21_bound_for_diff}
    \revision{Let Assumptions~\ref{assum:LzeroLoneSmooth} and \ref{assum:contractive_comp} hold. Then, the iterates $\{x^k\}$ generated by \algname{||EF21||} (Algorithm~\ref{alg:normalized_ef21}) satisfy
    \begin{eqnarray}
    \Exp{\norm{\nabla f_i(x^{k+1}) - g_i^{k+1}}} 
&\leq&   \sqrt{1-\alpha}\Exp{\norm{\nabla f_i(x^k) - g_i^k}} \notag \\
 && + \sqrt{1-\alpha}\exp(L_1\gamma_k)\gamma_k(L_0 + L_1\Exp{\norm{\nabla f_i(x^k)}}). \label{eqn:bound_variance_multief21_di_2}
\end{eqnarray}
}
\end{lemma}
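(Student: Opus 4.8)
The plan is to track the compressor error $\norm{\nabla f_i(x^{k+1}) - g_i^{k+1}}$ across one iteration by combining the contraction property of $\cC^{k+1}$ with the generalized smoothness of $f_i$. The crucial structural fact I will exploit is that, because of normalization, the update $x^{k+1} = x^k - \gamma_k g^k/\norm{g^k}$ moves by exactly $\norm{x^{k+1}-x^k} = \gamma_k$, independently of the magnitude of $g^k$; this is precisely what keeps the smoothness contribution controllable.

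First I would rewrite the tracking error at step $k+1$ directly as a compression residual. From the update rule $g_i^{k+1} = g_i^k + \cC^{k+1}(\nabla f_i(x^{k+1}) - g_i^k)$, one has $g_i^{k+1} - \nabla f_i(x^{k+1}) = \cC^{k+1}(v) - v$ with $v := \nabla f_i(x^{k+1}) - g_i^k$. Conditioning on all randomness up to iteration $k$ (so that $x^{k+1}$, $g_i^k$, and hence $v$, are fixed while only $\cC^{k+1}$ is random), Assumption~\ref{assum:contractive_comp} gives $\Exp{\sqnorm{\cC^{k+1}(v)-v}} \leq (1-\alpha)\sqnorm{v}$, and the conditional Jensen inequality $\Exp{\norm{\cdot}} \leq \sqrt{\Exp{\sqnorm{\cdot}}}$ converts this into a bound of the form $\Exp{\norm{g_i^{k+1}-\nabla f_i(x^{k+1})}} \leq \sqrt{1-\alpha}\,\norm{v}$.

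Next I would bound $\norm{v} = \norm{\nabla f_i(x^{k+1}) - g_i^k}$ by the triangle inequality, peeling off the previous-step error: $\norm{v} \leq \norm{\nabla f_i(x^{k+1}) - \nabla f_i(x^k)} + \norm{\nabla f_i(x^k) - g_i^k}$. The gradient-difference term is handled by the generalized-smoothness inequality \eqref{eqn:LzeroLoneSmooth_ineq_0} of Lemma~\ref{lemma:LzeroLoneSmooth}, which together with $\norm{x^{k+1}-x^k}=\gamma_k$ yields $\norm{\nabla f_i(x^{k+1}) - \nabla f_i(x^k)} \leq (L_0 + L_1\norm{\nabla f_i(x^k)})\exp(L_1\gamma_k)\gamma_k$. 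Substituting this into the previous display and then taking total expectation (via the tower property and linearity) produces exactly the claimed recursion.

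I do not anticipate a serious obstacle; the argument is a clean one-step contraction estimate. The two places demanding care are (i) correctly passing from the squared contraction bound of Assumption~\ref{assum:contractive_comp} to an unsquared bound via Jensen, which is what manufactures the factor $\sqrt{1-\alpha}$ rather than $(1-\alpha)$, and (ii) invoking the normalization to assert $\norm{x^{k+1}-x^k}=\gamma_k$, so that the exponential smoothness factor becomes the iteration-uniform $\exp(L_1\gamma_k)$ appearing in the statement rather than a quantity depending on the unnormalized step length.
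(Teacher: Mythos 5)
Your proposal is correct and follows essentially the same route as the paper's proof: rewrite $g_i^{k+1}-\nabla f_i(x^{k+1})$ as a compression residual, apply the contraction bound of Assumption~\ref{assum:contractive_comp} together with Jensen's inequality conditionally (producing the $\sqrt{1-\alpha}$ factor), peel off the previous error by the triangle inequality, control the gradient difference via \eqref{eqn:LzeroLoneSmooth_ineq_0} using $\norm{x^{k+1}-x^k}=\gamma_k$ from normalization, and finish with the tower property. Your citation of \eqref{eqn:LzeroLoneSmooth_ineq_0} and your indexing of the compressor as $\cC^{k+1}$ are in fact slightly more precise than the paper's own write-up, which nominally cites \eqref{eqn:LzeroLoneSmooth} and writes $\cC^k$.
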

\begin{proof}
    From the definition of the Euclidean norm, and by taking the expectation conditioned on \revision{$x^{k+1},g_i^{k}$, and by the update of $g_i^k$} from Algorithm~\ref{alg:normalized_ef21}  
\revision{    
\begin{eqnarray*}
    && \ExpCond{\norm{\nabla f_i(x^{k+1}) - g_i^{k+1}}}{x^{k+1},g_i^k} \\
  && =    \ExpCond{ \norm{\nabla f_i(x^{k+1}) - g_i^k - \cC^k(\nabla f_i(x^{k+1}) - g_i^k )} }{x^{k+1},g_i^k} \\
    && \leq  \sqrt{ \ExpCond{ \sqnorm{\nabla f_i(x^{k+1}) - g_i^k - \cC(\nabla f_i(x^{k+1}) - g_i^k )} }{x^{k+1},g_i^k}},
\end{eqnarray*}}
where we use the concavity of the square root function, and Jensen's inequality for the concave function, i.e., $\Exp{f(x)} \leq f(\Exp{x})$ if $f(x)$ is concave. By the $\alpha$-contractive property of compressors in~\eqref{eqn:contractive_comp}, by the fact that \revision{ 
$\norm{\nabla f_i(x^{k+1}) - g_i^k}$ is a constant conditioned on $x^{k+1},g_i^k$,} and then by the triangle inequality, we have
\revision{
\begin{eqnarray*}
 \ExpCond{\norm{\nabla f_i(x^{k+1}) - g_i^{k+1}}}{x^{k+1},g_i^k} 
    & \leq &  \sqrt{(1-\alpha) \ExpCond{ \sqnorm{\nabla f_i(x^{k+1}) - g_i^k} }{x^{k+1},g_i^k}  } \\
    & = &  \sqrt{1-\alpha}  \norm{\nabla f_i(x^{k+1}) - g_i^k} \\
 & \leq &      \sqrt{1-\alpha}\norm{\nabla f_i(x^k) - g_i^k} + \sqrt{1-\alpha}\norm{\nabla f_i(x^{k+1})-\nabla f_i(x^k)}. \\  
\end{eqnarray*}
}

By the generalized smoothness of $f_i(x)$ in \eqref{eqn:LzeroLoneSmooth}, and by the fact that \revision{$x^{k+1}=x^k-\gamma_k \frac{g^k}{\norm{g^k}}$},
\revision{
\begin{eqnarray*}
    \ExpCond{\norm{\nabla f_i(x^{k+1}) - g_i^{k+1}}}{x^{k+1},g_i^k}
& \leq&  \sqrt{1-\alpha}\norm{\nabla f_i(x^k) - g_i^k} \\
 && + \sqrt{1-\alpha}(L_0 + L_1\norm{\nabla f_i(x^k)})\exp(L_1\gamma_k)\gamma_k.
\end{eqnarray*}
}

Let $\gamma_k>0$ be constants conditioned on \revision{$x^{k+1},g_i^k$}. Then, by the tower property, i.e., \revision{
\begin{eqnarray*}
\Exp{\norm{\nabla f_i(x^{k+1}) - g_i^{k+1}}} =  \Exp{ \ExpCond{\norm{\nabla f_i(x^{k+1}) - g_i^{k+1}}}{x^{k+1},g_i^k} },
\end{eqnarray*}
we have 
\begin{eqnarray}
    \Exp{\norm{\nabla f_i(x^{k+1}) - g_i^{k+1}}} 
 &\leq&   \sqrt{1-\alpha}\Exp{\norm{\nabla f_i(x^k) - g_i^k}} \notag \\
 && + \sqrt{1-\alpha}\exp(L_1\gamma_k)\gamma_k(L_0 + L_1\Exp{\norm{\nabla f_i(x^k)}}). \notag
\end{eqnarray}
}
\revision{This concludes the proof.}
\end{proof}

\revision{Next,} we present the following descent lemma for \algname{||EF21||}. 

\begin{lemma}\label{lemma:di_multinodeef21}
  \revision{Let Assumptions~\ref{assum:lowerbound_whole_f}-\ref{assum:contractive_comp} hold.} 
  Then, the iterates $\{x^k\}$ generated by \algname{||EF21||} (Algorithm~\ref{alg:normalized_ef21}) satisfy 
\begin{eqnarray*}
\Exp{V^{k+1}} 
 \leq  \Exp{V^k}  +  c_1 \gamma_k^2\frac{1}{n}\sum_{i=1}^n \Exp{\norm{\nabla f_i(x^k)}} -   \gamma_k  \Exp{ \norm{\nabla f(x^k)} }  + c_0\gamma_k^2,
\end{eqnarray*}
where $V^k \eqdef f(x^k)-f^{\inf} + \frac{2\gamma_k}{1-\sqrt{1-\alpha}} \frac{1}{n}\sum_{i=1}^n \norm{\nabla f_i(x^k)-\revision{g_i^k}}$, and  $c_i = \frac{L_i}{2} + 2\frac{\sqrt{1-\alpha} L_i}{1-\sqrt{1-\alpha}}$ for $i=0,1$.  
\end{lemma}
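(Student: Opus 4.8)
The plan is to assemble the stated Lyapunov function $V^k$ by pairing the one‑step decrease of $f$ (Lemma~\ref{lemma:descent_ineq_generalized_hubler}) with the one‑step contraction of the aggregated compression error (Lemma~\ref{lemma:ef21_bound_for_diff}), choosing the weight $\tfrac{2\gamma_k}{1-\sqrt{1-\alpha}}$ precisely so that the error terms fold back into $V^k$. Throughout I write $s=\sqrt{1-\alpha}$, and observe that the \algname{||EF21||} update is exactly the normalized step $x^{k+1}=x^k-\gamma_k g^k/\norm{g^k}$ with $v^k=g^k=\tfrac1n\sum_i g_i^k$, so Lemma~\ref{lemma:descent_ineq_generalized_hubler} applies verbatim.

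First I would apply Lemma~\ref{lemma:descent_ineq_generalized_hubler} with $v^k=g^k$ and take total expectation, obtaining
$$\Exp{f(x^{k+1})} \le \Exp{f(x^k)} - \gamma_k \Exp{\norm{\nabla f(x^k)}} + 2\gamma_k \Exp{\norm{\nabla f(x^k)-g^k}} + \tfrac{\gamma_k^2}{2}\exp(\gamma_k L_1)\Big(L_0+\tfrac{L_1}{n}\sum_i \Exp{\norm{\nabla f_i(x^k)}}\Big).$$
The residual $\norm{\nabla f(x^k)-g^k}=\norm{\tfrac1n\sum_i(\nabla f_i(x^k)-g_i^k)}$ is then bounded by the average error $G^k:=\tfrac1n\sum_i\norm{\nabla f_i(x^k)-g_i^k}$ via the triangle inequality. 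Next I would average the contraction bound \eqref{eqn:bound_variance_multief21_di_2} over $i\in[n]$ to get
$$\Exp{G^{k+1}} \le s\,\Exp{G^k} + s\exp(L_1\gamma_k)\gamma_k\Big(L_0 + \tfrac{L_1}{n}\sum_i \Exp{\norm{\nabla f_i(x^k)}}\Big),$$
multiply it by the weight $\tfrac{2\gamma_k}{1-s}$ (using the constant stepsize of Theorem~\ref{thm:ef21}, so $\gamma_{k+1}=\gamma_k$ and the weight in $V^{k+1}$ matches), and add it to the descent inequality after forming $V^k=f(x^k)-f^{\inf}+\tfrac{2\gamma_k}{1-s}G^k$.

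The crucial cancellation is in the coefficient of $\Exp{G^k}$ on the right‑hand side: it equals $2\gamma_k + \tfrac{2\gamma_k}{1-s}\cdot s = 2\gamma_k\cdot\tfrac{(1-s)+s}{1-s}=\tfrac{2\gamma_k}{1-s}$, which is exactly the error weight inside $V^k$, so this term recombines into $\Exp{V^k}$. Collecting the remaining second‑order contributions, both the $L_0$‑term and the $L_1\cdot\tfrac1n\sum_i\norm{\nabla f_i(x^k)}$‑term acquire the common factor $\big(\tfrac12+\tfrac{2s}{1-s}\big)\gamma_k^2\exp(L_1\gamma_k)$, which is $c_0\gamma_k^2\exp(L_1\gamma_k)$ and $c_1\gamma_k^2\exp(L_1\gamma_k)\cdot\tfrac1n\sum_i\norm{\nabla f_i(x^k)}$ respectively, by the definition $c_i=\tfrac{L_i}{2}+\tfrac{2sL_i}{1-s}$. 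This produces the claimed recursion (with the $\exp(L_1\gamma_k)$ factor on the two second‑order terms, which is precisely the factor expected by Lemma~\ref{lemma:conv_v2} when this recursion, together with Lemma~\ref{lemma:bound_avg_norm_grad_multinodeef21}, is later fed in).

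The hard part is not any single estimate but the bookkeeping that makes the $G^k$ coefficient close up exactly under the chosen weight; this is what forces the specific constant $\tfrac{2}{1-\sqrt{1-\alpha}}$ in the potential and the matching factor $\tfrac12+\tfrac{2\sqrt{1-\alpha}}{1-\sqrt{1-\alpha}}$ inside $c_0,c_1$. Everything else — applying the two lemmas, Jensen's/triangle inequalities for $\norm{\nabla f(x^k)-g^k}$, and taking expectations — is direct substitution.
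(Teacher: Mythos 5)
Your proposal is correct and follows essentially the same route as the paper's proof: apply Lemma~\ref{lemma:descent_ineq_generalized_hubler} with $v^k=g^k$, bound $\|\nabla f(x^k)-g^k\|$ by the averaged per-node error via the triangle inequality, fold in Lemma~\ref{lemma:ef21_bound_for_diff} weighted by $\tfrac{2\gamma_k}{1-\sqrt{1-\alpha}}$, and exploit exactly the cancellation $2\gamma_k + \tfrac{2\gamma_k\sqrt{1-\alpha}}{1-\sqrt{1-\alpha}} = \tfrac{2\gamma_k}{1-\sqrt{1-\alpha}}$ (the paper's proof only needs $\gamma_{k+1}\le\gamma_k$ rather than your constant-stepsize assumption, a trivial relaxation). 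Like the paper's own derivation, you correctly end up with the second-order terms carrying the factor $\exp(L_1\gamma_k)$, i.e.\ $c_i\exp(L_1\gamma_k)\gamma_k^2$, which is the form actually used later in the proof of Theorem~\ref{thm:ef21}, even though the lemma statement itself omits this factor.
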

\begin{proof}
\revision{For brevity, let $A_k = \frac{2\gamma_k}{1-\sqrt{1-\alpha}}$. Then, we have} $V^k \eqdef f(x^k)-f^{\inf} + A_k \frac{1}{n}\sum_{i=1}^n \norm{\nabla f_i(x^k)-v_i^k}$, \revision{and} from Lemma~\ref{lemma:descent_ineq_generalized_hubler}, we derive
\begin{eqnarray*}
\Exp{V^{k+1}} 
& \leq & \Exp{f(x^k)-f^{\inf}} - \gamma_k \Exp{\norm{\nabla f(x^k)}}  \\
&& + \exp(L_1\gamma_k)\gamma_k^2 \frac{L_1}{2n}\sum_{i=1}^n \Exp{\norm{\nabla f_i(x^k)}}+ \exp(L_1\gamma_k)\gamma_k^2\frac{L_0}{2} \\
&& + 2\gamma_k \Exp{\norm{\nabla f(x^k)- \revision{g^k}}} + A_{k+1} \frac{1}{n}\sum_{i=1}^n \Exp{\norm{\nabla f_i(x^{k+1}) - \revision{g_i^{k+1}}}}.
\end{eqnarray*}
\revision{Identities} $\nabla f(x^k) = \frac{1}{n}\sum_{i=1}^n \nabla f_i(x^k)$ and \revision{$g^k = \frac{1}{n}\sum_{i=1}^n g_i^k$} and the triangle inequality \revision{imply}
\begin{eqnarray*}
\Exp{V^{k+1}} 
& \leq & \Exp{f(x^k)-f^{\inf}} - \gamma_k \Exp{\norm{\nabla f(x^k)}} \\
&& + \exp(L_1\gamma_k)\gamma_k^2 \frac{L_1}{2n}\sum_{i=1}^n \Exp{\norm{\nabla f_i(x^k)}}+ \exp(L_1\gamma_k)\gamma_k^2\frac{L_0}{2} \\
&& \revision{+ 2\gamma_k \frac{1}{n}\sum_{i=1}^n \Exp{\norm{\nabla f_i(x^{k}) - g_i^{k}}} +  A_{k+1} \frac{1}{n}\sum_{i=1}^n \Exp{\norm{\nabla f_i(x^{k+1}) - g_i^{k+1}}}.}
\end{eqnarray*}
Next, \revision{we apply} \eqref{eqn:bound_variance_multief21_di_2}:
\begin{eqnarray*}
\Exp{V^{k+1}}  & \leq  & \Exp{f(x^k) - f^{\inf}} - \gamma_k \Exp{\norm{\nabla f(x^k)}} + \left( \frac{\gamma_k^2 }{2} + A_{k+1} \sqrt{1-\alpha}  \gamma_k \right)\exp(L_1\gamma_k)  L_0   \\
&& + \left( \frac{\gamma_k^2}{2} + A_{k+1} \sqrt{1-\alpha} \gamma_k \right) \exp(L_1\gamma_k) L_1 \frac{1}{n}\sum_{i=1}^n \Exp{\norm{\nabla f_i(x^k)}} \\
&&  \revision{+  \left( 2\gamma_k + A_{k+1} \sqrt{1-\alpha} \right) \frac{1}{n}\sum_{i=1}^n \Exp{\norm{\nabla f_i(x^k) - g_i^k}}.}
\end{eqnarray*}

If $A_k = \frac{2\gamma_k}{1-\sqrt{1-\alpha}}$, and $\gamma_k$ satisfies $\gamma_{k+1} \leq \gamma_k$, then 
\begin{eqnarray*}
    2\gamma_k + A_{k+1} \sqrt{1-\alpha} \leq 2\gamma_k + A_k\sqrt{1-\alpha} = A_k. 
\end{eqnarray*}
Therefore, 
\begin{eqnarray*}
\Exp{V^{k+1}} 
 &\leq&  \Exp{V^k}  +  c_1 \exp(L_1\gamma_k)\gamma_k^2\frac{1}{n}\sum_{i=1}^n \Exp{\norm{\nabla f_i(x^k)}} \\
 && -   \gamma_k  \Exp{ \norm{\nabla f(x^k)} }  + c_0\exp(L_1\gamma_k)\gamma_k^2,
\end{eqnarray*}
where $c_i = \frac{L_i}{2} + 2\frac{\sqrt{1-\alpha} L_i}{1-\sqrt{1-\alpha}}$ for $i=0,1$.
\end{proof}

\subsection{\revision{Proof of} Theorem~\ref{thm:ef21}}
Now, we are ready to prove Theorem~\ref{thm:ef21}. 
From Lemma~\ref{lemma:di_multinodeef21} and~\ref{lemma:bound_avg_norm_grad_multinodeef21}, and by the fact that $c_1 L_0/L_1 = c_0$\revision{, we have}
\begin{eqnarray*}
\Exp{V^{k+1}} 
 &\leq &  \Exp{V^k}  + 8c_1 L_1 \exp(L_1\gamma_k) \gamma_k^2 \Exp{f(x^k)-f^{\inf}} \\
 && -  \gamma_k  \Exp{ \norm{\nabla f(x^k)} }  + B\exp(L_1\gamma_k)\gamma_k^2,
\end{eqnarray*}
where $B = 2c_0 +  \frac{8c_1L_1}{n}\sum_{i=1}^n (f^{\inf}-f^{\inf}_{i})$. \revision{Using} the fact that $f(x^k)-f^{\inf} \leq V^k$, \revision{we derive}
\begin{eqnarray*}
\Exp{V^{k+1}} 
 \leq  (1+8c_1 L_1\exp(L_1\gamma_k)\gamma_k^2)\Exp{V^k}  -  \gamma_k  \Exp{ \norm{\nabla f(x^k)} }  + B\exp(L_1\gamma_k)\gamma_k^2.
\end{eqnarray*}
Applying Lemma~\ref{lemma:conv_v2} with $V^k = \Exp{V^k}$, $W^k = \Exp{\norm{\nabla f(x^k)}}$, $b_1 = 8c_1L_1$, $b_2 = 1$, and $b_3 = B$, \revision{we get}
\begin{eqnarray*}
    \underset{k=0,1,\ldots,K}{\min} W^k
   \leq  \frac{V^0 \exp(b_1\exp(L_1\gamma)\gamma^2 (K+1))}{b_2\gamma (K+1)} + \frac{b_3}{b_2}\exp(L_1\gamma)\gamma.
\end{eqnarray*} 

Finally,  if $\gamma = \frac{\gamma_0}{\sqrt{K+1}}$ with $\gamma_0 >0$, then $\exp(L_1\gamma_k) \leq \exp(L_1\gamma_0)$, and thus
\begin{eqnarray*}
    \underset{k=0,1,\ldots,K}{\min} W^k
   \leq  \frac{V^0 \exp(b_1 \exp(L_1\gamma_0)\gamma_0^2)}{b_2\gamma_0 \sqrt{K+1}} + \frac{b_3}{b_2}\frac{\gamma_0 \exp(L_1\gamma_0)}{\sqrt{K+1}}.
\end{eqnarray*}

\clearpage

\section{Convergence of \algname{\large||EF21||} for a Single-node Case}
In this section, we provide the convergence of \algname{||EF21||} for a single-node case. 
In particular, the algorithm enjoys the $\cO(1/K)$ convergence up to the \revision{error of} $\frac{c_0\gamma}{1-c_1\exp(L_1\gamma)\gamma}$. In contrast to Theorem~\ref{thm:ef21} for multi-node \algname{||EF21||}, the next result for single-node \algname{||EF21||} applies for any $\gamma_k = \gamma \in (0,1/(\beta c_1))$ with $\beta \geq 2$,  $c_1= \frac{L_1}{2} + 2\frac{\sqrt{1-\alpha} L_1}{1-\sqrt{1-\alpha}}$, and $\alpha \in (0,1]$. 

\begin{theorem}\label{thm:normalized_ef21_singlenode}
\revision{Let Assumptions~\ref{assum:lowerbound_whole_f}-\ref{assum:contractive_comp} hold.} 
Then, the iterates $\{x^k\}$ generated by \algname{||EF21||} (Algorithm~\ref{alg:normalized_ef21}) with $n=1$, $\gamma_k=\gamma = 1/(\beta c_1)$ and $\beta \geq 2$ satisfy   
\begin{eqnarray*}
     \underset{k=0,1,\ldots,K}{\min} \Exp{\norm{\nabla f(x^k)}} \leq \frac{\Exp{V^0}-\Exp{V^{K+1}}}{\gamma(1-c_1\exp(L_1\gamma)\gamma)(K+1)} + \frac{c_0\gamma}{1-c_1\exp(L_1\gamma)\gamma},
\end{eqnarray*}
where $V^k = f(x^k)-f^{\inf} + \frac{2\gamma}{1-\sqrt{1-\alpha}} \norm{\nabla f(x^k)- \revision{g^k}}$, and $c_i = \frac{L_i}{2} + 2\frac{\sqrt{1-\alpha} L_i}{1-\sqrt{1-\alpha}}$ for $i=0,1$.  
\end{theorem}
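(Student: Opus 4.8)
The plan is to derive a single one-step descent inequality for the Lyapunov function $V^k$ and then simply telescope it, rather than running the Gronwall-type recursion (Lemma~\ref{lemma:conv_v2}) into which the multi-node analysis is forced. The one feature that makes this possible is the specialization $n=1$: there $f=f_1$, so the averaged gradient-norm term $\frac{1}{n}\sum_{i=1}^n\Exp{\norm{\nabla f_i(x^k)}}$ on the right-hand side of Lemma~\ref{lemma:di_multinodeef21} is \emph{exactly} $\Exp{\norm{\nabla f(x^k)}}$. This lets me merge it with the descent term $-\gamma\Exp{\norm{\nabla f(x^k)}}$ into a single coefficient — precisely the step that is unavailable when $n>1$, where one cannot absorb $\frac1n\sum_i\norm{\nabla f_i}$ into $\norm{\nabla f}$ and is therefore forced through Lemma~\ref{lemma:bound_avg_norm_grad_multinodeef21} and the slower $\cO(1/\sqrt K)$ rate.

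First I would apply Lemma~\ref{lemma:di_multinodeef21} with the constant stepsize $\gamma_k\equiv\gamma$ and $n=1$, and collect the two $\Exp{\norm{\nabla f(x^k)}}$ contributions, obtaining
\[
\Exp{V^{k+1}} \leq \Exp{V^k} - \gamma\bigl(1 - c_1\exp(L_1\gamma)\gamma\bigr)\Exp{\norm{\nabla f(x^k)}} + c_0\exp(L_1\gamma)\gamma^2 .
\]
Next I would verify that the leading coefficient is strictly positive: since $\gamma = 1/(\beta c_1)$ with $\beta\geq 2$ gives $c_1\gamma = 1/\beta \leq \tfrac12$, one has $c_1\exp(L_1\gamma)\gamma<1$ in the admissible stepsize range, so $1-c_1\exp(L_1\gamma)\gamma>0$. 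Rearranging isolates $\gamma(1-c_1\exp(L_1\gamma)\gamma)\Exp{\norm{\nabla f(x^k)}}$ on the left against the telescoping difference $\Exp{V^k}-\Exp{V^{k+1}}$ plus the constant term. Summing over $k=0,1,\ldots,K$ collapses the $V$-terms to $\Exp{V^0}-\Exp{V^{K+1}}$, and then bounding $\min_{k}\Exp{\norm{\nabla f(x^k)}}$ by the average $\frac{1}{K+1}\sum_{k=0}^{K}\Exp{\norm{\nabla f(x^k)}}$ and dividing through by $\gamma(1-c_1\exp(L_1\gamma)\gamma)(K+1)$ produces the two claimed terms.

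The one genuine subtlety — and the reason a stepsize restriction appears here while Theorem~\ref{thm:ef21} could tolerate an arbitrary $\gamma_0$ — is controlling the interaction between $\gamma$ and the generalized-smoothness factor $\exp(L_1\gamma)$: the denominator $1-c_1\exp(L_1\gamma)\gamma$ must stay positive, which is what forces $\gamma$ below $1/(\beta c_1)$ with $\beta\geq 2$ (and, implicitly, $\exp(L_1\gamma)<\beta$). I expect this positivity check, together with confirming the monotonicity condition $\gamma_{k+1}\leq\gamma_k$ that Lemma~\ref{lemma:di_multinodeef21} requires (trivial for constant $\gamma$), to be the only non-mechanical part; everything after the merge is a standard telescoping-and-averaging argument that, unlike the multi-node case, avoids the exponential blow-up constant because no Gronwall step is needed.
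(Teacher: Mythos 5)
Your proposal is correct and takes essentially the same route as the paper: the paper's proof of Theorem~\ref{thm:normalized_ef21_singlenode} simply re-derives your one-step inequality $\Exp{V^{k+1}} \le \Exp{V^k} - \gamma\left(1-c_1\exp(L_1\gamma)\gamma\right)\Exp{\norm{\nabla f(x^k)}} + c_0\exp(L_1\gamma)\gamma^2$ by specializing the argument behind Lemma~\ref{lemma:di_multinodeef21} to $n=1$ (keeping the gradient-norm term merged with the descent term rather than detouring through Lemma~\ref{lemma:bound_avg_norm_grad_multinodeef21} --- exactly your key observation), and then rearranges, telescopes, and averages as you do. The only cosmetic difference is that you invoke Lemma~\ref{lemma:di_multinodeef21} directly instead of re-deriving it, and your positivity check for $1-c_1\exp(L_1\gamma)\gamma$ mirrors the paper's own (the paper asserts $c_1\exp(L_1\gamma)\gamma \le \exp(2/\beta)/\beta \le 0.7$, which, like your version, implicitly needs $\exp(L_1\gamma)<\beta$ and is delicate only in the extreme case $\beta=2$, $\alpha\approx 1$ --- an issue inherited from the theorem statement itself, not introduced by you).
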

\begin{proof}
\revision{In the single-node case, Lemma~\ref{lemma:ef21_bound_for_diff} implies}
\begin{eqnarray}
    \Exp{\norm{\nabla f(x^{k+1}) - \revision{g^{k+1}} }} 
 &\leq&   \sqrt{1-\alpha}\Exp{\norm{\nabla f(x^k) - \revision{g^k} }} \notag \\
 && + \sqrt{1-\alpha}\exp(L_1\gamma_k)\gamma_k(L_0 + L_1\Exp{\norm{\nabla f(x^k)}}). \label{eqn:bound_variance_ef21_di_2}
\end{eqnarray}
\revision{Next, for brevity, let $A_k = \frac{2\gamma_k}{1-\sqrt{1-\alpha}}$. Then, we have $V^k \eqdef f(x^k)-f^{\inf} + A_k \frac{1}{n}\sum_{i=1}^n \norm{\nabla f_i(x^k)-g_i^k}$, \revision{and} from Lemma~\ref{lemma:descent_ineq_generalized_hubler}, we derive}
\begin{eqnarray*}
\Exp{V^{k+1}} 
& \leq & \Exp{f(x^k) - f^{\inf}}  - \left( \gamma_k  - \frac{\gamma_k^2 L_1}{2}\exp(L_1\gamma_k) \right)\Exp{\norm{\nabla f(x^k)}}+ \frac{\gamma_k^2 L_0}{2}\exp(L_1\gamma_k)  \\
&& \revision{+  2\gamma_k \Exp{\norm{\nabla f(x^k) - g^k}} + A_{k+1} \Exp{\norm{\nabla f(x^{k+1}) - g^{k+1}}} } \\
& \overset{\eqref{eqn:bound_variance_ef21_di_2}}{\leq} & \Exp{f(x^k) - f^{\inf}}  +  \revision{\left( 2\gamma_k + A_{k+1} \sqrt{1-\alpha} \right) \Exp{\norm{\nabla f(x^k) - g^k}}} \\
&& - \left( \gamma_k  - \frac{\gamma_k^2 L_1}{2}\exp(L_1\gamma_k) -  A_{k+1} \sqrt{1-\alpha}L_1 \gamma_k \exp(L_1\gamma_k) \right)\Exp{\norm{\nabla f(x^k)}} \\
&& + \frac{\gamma_k^2 L_0}{2} \exp(L_1\gamma_k) + A_{k+1} \sqrt{1-\alpha} L_0 \gamma_k \exp(L_1\gamma_k). 
\end{eqnarray*}

If $A_k = \frac{2\gamma_k}{1-\sqrt{1-\alpha}}$ and $\gamma_k$ satisfies $\gamma_{k+1} \leq \gamma_k$, then 
\begin{eqnarray*}
    2\gamma_k + A_{k+1} \sqrt{1-\alpha} \leq 2\gamma_k + A_k\sqrt{1-\alpha} = A_k. 
\end{eqnarray*}
Therefore, 
\begin{eqnarray*}
\Exp{V^{k+1}} 
 \leq  \Exp{V^k}  - \left( \gamma_k  - c_1\exp(L_1\gamma_k)\gamma_k^2 \right)\Exp{ \norm{\nabla f(x^k)} } + c_0\exp(L_1\gamma_k)\gamma_k^2,
\end{eqnarray*}
where $c_i = \frac{L_i}{2}  + 2\frac{\sqrt{1-\alpha} L_i}{1-\sqrt{1-\alpha}}$ for $i=0,1$.

\revision{Finally, taking} $\gamma_k = \gamma = \nicefrac{1}{(\beta c_1)}$ for $\beta \geq 2$, \revision{we get} $c_1 \exp(L_1\gamma) \gamma = 
\nicefrac{\exp(L_1/(\beta c_1))}{\beta} \leq \exp(2/\beta)/\beta \leq 0.7 < 1$, and 
\begin{eqnarray*}
\Exp{V^{k+1}} 
 & \leq &   \Exp{V^k}  - \gamma \left( 1  - c_1 \exp(L_1\gamma)\gamma \right)\Exp{ \norm{\nabla f(x^k)} } + c_0\gamma^2.
\end{eqnarray*}
\revision{Rearranging} the terms, \revision{we derive}
\begin{eqnarray*}
    \underset{k= 0,1,\ldots,K}{\min} \Exp{\norm{\nabla f(x^k)}} 
    & \leq & \frac{1}{K+1} \sum_{k=0}^K \Exp{\norm{\nabla f(x^k)}} \\
    & \leq & \frac{\Exp{V^0}-\Exp{V^{K+1}}}{\gamma(1-c_1\exp(L_1\gamma)\gamma)(K+1)} + \frac{c_0\gamma}{1-c_1\exp(L_1\gamma)\gamma}.
\end{eqnarray*}
\revision{Noticing that} $V^k \geq 0$, we complete the proof. 
\end{proof}

\newpage 
\section{Convergence of \algname{\large||EF21-SGDM||}  (Theorem~\ref{thm:ef21_sgdm})}


In this section, we derive the convergence rate results of \algname{||EF21-SGDM||} . We first introduce auxiliary lemmas in Section~\ref{app:sec:lemma_ef21_sgdm}, and later prove the convergence theorem (Theorem~\ref{thm:ef21_sgdm}) in Section~\ref{app:sec:ef21_sgdm}. 

    
    

\subsection{Auxiliary Lemmas}\label{app:sec:lemma_ef21_sgdm}

Now, we provide useful lemmas for analyzing \algname{||EF21-SGDM||}. First, Lemma~\ref{lem:descent_lemma_ef21_sgdm} shows the descent inequality of the normalized gradient descent update under Assumption~\ref{assum:LzeroLoneSmooth} (generalized smoothness of $f_i$). 
Second, Lemmas~\ref{lem:compression_error_ef21_sgdm} and \ref{lem:momentum_error_ef21_sgdm} provide the upper-bound of the Euclidean distance between $v_i^{k}$ and $g_i^k$, and of the Euclidean distance between $v_i^k$ and $\nabla f_i(x^k)$, respectively. 

\begin{lemma}
\label{lem:descent_lemma_ef21_sgdm}
    Consider the iterates $\{x^k\}$ generated by  Algorithm~\ref{alg:normalized_ef21_sgdm}. If 
 Assumption~\ref{assum:LzeroLoneSmooth} holds, then for any $\gamma_k >0, \eta_k \in [0, 1]$,
    \begin{eqnarray}
        f(x^{k+1}) &\leq& f(x^k) -\gamma_k \norm{\nabla f(x^k)} +2\gamma_k\norm{\nabla f(x^k) -v^k} +2\gamma_k \norm{v^k -g^k} \notag\\
        && \revision{+ L_0\gamma_k^2\exp(\gamma_k L_1) + 4L_1^2\gamma_k^2\exp(\gamma_k L_1)\left(f(x^k) - f^{\inf}\right)} \notag\\
        && \revision{+ \frac{4L_1^2\gamma_k^2\exp(\gamma_k L_1)}{n}\sum_{i=1}^n\left(f^{\inf} - f_i^{\inf}\right).} \notag
    \end{eqnarray}
\end{lemma}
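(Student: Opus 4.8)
The plan is to recognize that \algname{||EF21-SGDM||} updates the iterate along the compressed, aggregated direction $g^k$ via $x^{k+1} = x^k - \gamma_k g^k/\norm{g^k}$ (line~10 of Algorithm~\ref{alg:normalized_ef21_sgdm}), and that Lemma~\ref{lemma:descent_ineq_generalized_hubler} already supplies the descent inequality for \emph{any} normalized update direction. First I would apply Lemma~\ref{lemma:descent_ineq_generalized_hubler} with its generic vector instantiated at $g^k$ rather than at the momentum estimator $v^k$. This yields
\begin{eqnarray*}
f(x^{k+1}) &\leq& f(x^k) - \gamma_k \norm{\nabla f(x^k)} + 2\gamma_k \norm{\nabla f(x^k) - g^k} \\
&& + \frac{\gamma_k^2}{2}\exp(\gamma_k L_1)\left(L_0 + \frac{L_1}{n}\sum_{i=1}^n \norm{\nabla f_i(x^k)}\right).
\end{eqnarray*}
The only notational care needed is to distinguish the generic argument of Lemma~\ref{lemma:descent_ineq_generalized_hubler} from the algorithm's momentum variable $v^k$; these coincide in symbol but not in role.

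Next I would handle the error term $\norm{\nabla f(x^k) - g^k}$ by inserting the momentum estimator $v^k$ and invoking the triangle inequality,
\[
\norm{\nabla f(x^k) - g^k} \leq \norm{\nabla f(x^k) - v^k} + \norm{v^k - g^k},
\]
which produces exactly the two error terms $2\gamma_k\norm{\nabla f(x^k) - v^k}$ and $2\gamma_k\norm{v^k - g^k}$ appearing in the claim. These are the \emph{momentum error} and the \emph{compression error}, respectively, and will be controlled later by Lemmas~\ref{lem:momentum_error_ef21_sgdm} and~\ref{lem:compression_error_ef21_sgdm}; here I would keep them symbolic.

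Finally, to convert the curvature term into the stated form I would bound the average gradient norm using Lemma~\ref{lemma:bound_avg_norm_grad_multinodeef21}, i.e.\ \eqref{eq:sum_of_grad_norms_bound}:
\[
\frac{1}{n}\sum_{i=1}^n \norm{\nabla f_i(x^k)} \leq 8L_1 (f(x^k)-f^{\inf}) + \frac{8L_1}{n}\sum_{i=1}^n (f^{\inf}-f^{\inf}_{i}) + L_0/L_1.
\]
Multiplying by $L_1$ and adding $L_0$ collapses the two $L_0$ contributions into $2L_0$, so that $\frac{\gamma_k^2}{2}\exp(\gamma_k L_1)\bigl(L_0 + L_1 \cdot \frac{1}{n}\sum_i\norm{\nabla f_i(x^k)}\bigr)$ becomes $L_0\gamma_k^2\exp(\gamma_k L_1) + 4L_1^2\gamma_k^2\exp(\gamma_k L_1)(f(x^k)-f^{\inf}) + \frac{4L_1^2\gamma_k^2\exp(\gamma_k L_1)}{n}\sum_{i=1}^n (f^{\inf}-f^{\inf}_{i})$, matching the claim term by term.

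This lemma carries no real obstacle: it is an essentially mechanical combination of the generic descent inequality with the gradient-norm bound, and the momentum parameter $\eta_k\in[0,1]$ enters only through the definition of $v^k$, not through the estimate itself. The genuinely hard work is deferred to the companion lemmas bounding $\norm{\nabla f(x^k)-v^k}$ and $\norm{v^k - g^k}$, which must absorb the stochastic noise and the compression contraction.
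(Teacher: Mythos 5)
Your proposal is correct and follows essentially the same route as the paper: instantiate Lemma~\ref{lemma:descent_ineq_generalized_hubler} at the actual update direction $g^k$, split $\norm{\nabla f(x^k)-g^k}$ by the triangle inequality into the momentum and compression errors, and then apply \eqref{eq:sum_of_grad_norms_bound} to the curvature term; your arithmetic collapsing the two $L_0$ contributions matches the paper's bound term by term. The only difference is presentational — you make explicit the (correct) observation that the generic $v^k$ of Lemma~\ref{lemma:descent_ineq_generalized_hubler} plays the role of $g^k$ here, a point the paper's proof leaves implicit.
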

\begin{proof}
    Applying the triangle inequality \revision{in} Lemma~\ref{lemma:descent_ineq_generalized_hubler}\revision{, i.e., $\norm{\nabla f(x^k) -g^k} \leq \norm{\nabla f(x^k) -v^k} + \norm{v^k - g^k}$, we get
    \begin{eqnarray*}
        f(x^{k+1}) &\leq& f(x^k) -\gamma_k \norm{\nabla f(x^k)} +2\gamma_k\norm{\nabla f(x^k) -v^k} +2\gamma_k \norm{v^k -g^k} \\
        &&+ \frac{\gamma^2_k}{2}\exp\left(\gamma_kL_1\right) \left(L_0+ \frac{L_1}{n}\sum^n_{i=1}\norm{\nabla f_i (x^k)}\right) \\
        &\overset{\eqref{eq:sum_of_grad_norms_bound}}{\leq}& f(x^k) -\gamma_k \norm{\nabla f(x^k)} +2\gamma_k\norm{\nabla f(x^k) -v^k} +2\gamma_k \norm{v^k -g^k} \\
        && + L_0\gamma_k^2\exp(\gamma_k L_1) + 4L_1^2\gamma_k^2\exp(\gamma_k L_1)\left(f(x^k) - f^{\inf}\right)\\
        && + \frac{4L_1^2\gamma_k^2\exp(\gamma_k L_1)}{n}\sum_{i=1}^n\left(f^{\inf} - f_i^{\inf}\right), 
    \end{eqnarray*}
    which concludes the proof.}
\end{proof}

\begin{lemma}
\label{lem:compression_error_ef21_sgdm}
Consider the iterates $\{x^k\}$ generated by Algorithm~\ref{alg:normalized_ef21_sgdm}. If Assumptions~\ref{assum:LzeroLoneSmooth},~\ref{assum:contractive_comp}, and~\ref{assum:bounded_variance} hold, then for $\gamma_k>0, \eta_k \in [0,1]$, and $k \geq 0$,
    \begin{align*}
        \frac{1}{n}\sum^n_{i=1}\Exp{\norm{v^{k+1}_i - g^{k+1}_i}} \leq& \frac{\sqrt{1-\alpha}}{n}\sum^n_{i=1}\Exp{\norm{v^{k}_i - g^{k}_i}} + \frac{\sqrt{1-\alpha}\eta_{k+1}}{n}\sum^n_{i=1}\Exp{\norm{v^{k}_i -\nabla f_i(x^k)}} \notag\\
        & \revision{+8L_1^2\sqrt{1-\alpha}\eta_{k+1}\gamma_k\exp\left(\gamma_k L_1\right) \Exp{f(x^k) - f^{\inf}}} \\
        & \revision{+\frac{8L_1^2\sqrt{1-\alpha}\eta_{k+1}\gamma_k\exp\left(\gamma_k L_1\right)}{n} \sum_{i=1}^n (f^{\inf}-f^{\inf}_{i})} \\
        & \revision{+ 2L_0\sqrt{1-\alpha}\eta_{k+1}\gamma_k\exp\left(\gamma_k L_1\right) + \sqrt{1-\alpha} \eta_{k+1} \sigma.}
    \end{align*}
\end{lemma}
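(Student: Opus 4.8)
The plan is to exploit the EF21-type recursion for the compression error, combined with the momentum update and the generalized smoothness inequality \eqref{eqn:LzeroLoneSmooth_ineq_0}. First I would use the update $g_i^{k+1} = g_i^k + \cC^{k+1}(v_i^{k+1} - g_i^k)$ from Algorithm~\ref{alg:normalized_ef21_sgdm} to write $v_i^{k+1} - g_i^{k+1} = (v_i^{k+1} - g_i^k) - \cC^{k+1}(v_i^{k+1} - g_i^k)$. Conditioning on the history up to the compression at step $k+1$ (so that $v_i^{k+1}$ and $g_i^k$ are treated as fixed) and invoking the contractive property \eqref{eqn:contractive_comp} together with Jensen's inequality via concavity of the square root, I obtain $\Exp{\norm{v_i^{k+1} - g_i^{k+1}}} \leq \sqrt{1-\alpha}\,\Exp{\norm{v_i^{k+1} - g_i^k}}$, exactly by the same argument as in Lemma~\ref{lemma:ef21_bound_for_diff}.

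Next I would peel off the momentum. The triangle inequality gives $\norm{v_i^{k+1} - g_i^k} \leq \norm{v_i^{k+1} - v_i^k} + \norm{v_i^k - g_i^k}$, and since $v_i^{k+1} = (1-\eta_{k+1})v_i^k + \eta_{k+1}\nabla f_i(x^{k+1};\xi_i^{k+1})$ one has the identity $v_i^{k+1} - v_i^k = \eta_{k+1}\bigl(\nabla f_i(x^{k+1};\xi_i^{k+1}) - v_i^k\bigr)$. I then split $\nabla f_i(x^{k+1};\xi_i^{k+1}) - v_i^k$ by a further triangle inequality into three pieces: the stochastic noise $\nabla f_i(x^{k+1};\xi_i^{k+1}) - \nabla f_i(x^{k+1})$, the deterministic gradient displacement $\nabla f_i(x^{k+1}) - \nabla f_i(x^k)$, and the residual $\nabla f_i(x^k) - v_i^k$.

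Taking expectations, the noise term is bounded by $\sigma$ via Assumption~\ref{assum:bounded_variance} together with Jensen, i.e. $\Exp{\norm{\cdot}} \leq \sqrt{\Exp{\sqnorm{\cdot}}} \leq \sigma$; the displacement term is controlled by generalized smoothness \eqref{eqn:LzeroLoneSmooth_ineq_0} using $\norm{x^{k+1} - x^k} = \gamma_k$ (a consequence of the normalized step $x^{k+1} = x^k - \gamma_k g^k/\norm{g^k}$), yielding $(L_0 + L_1\norm{\nabla f_i(x^k)})\exp(L_1\gamma_k)\gamma_k$; and the residual stays intact, contributing the $\frac{\sqrt{1-\alpha}\eta_{k+1}}{n}\sum_i\Exp{\norm{v_i^k - \nabla f_i(x^k)}}$ term. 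Averaging over $i\in[n]$ and inserting these bounds, I apply Lemma~\ref{lemma:bound_avg_norm_grad_multinodeef21} to replace $\frac{1}{n}\sum_i\norm{\nabla f_i(x^k)}$ by $8L_1(f(x^k)-f^{\inf}) + \frac{8L_1}{n}\sum_i(f^{\inf}-f_i^{\inf}) + L_0/L_1$; the $L_0/L_1$ contribution combines with the standalone $L_0$ to give the coefficient $2L_0$, producing precisely the five terms in the claim.

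The main obstacle I expect is the careful bookkeeping of the nested conditional expectations: the contractive bound must be taken conditional on $v_i^{k+1}$ and $g_i^k$ so that $\norm{v_i^{k+1}-g_i^k}$ is a constant, whereas the variance bound on the fresh sample $\xi_i^{k+1}$ must be taken conditional on $x^{k+1}$ and $v_i^k$, before the tower property collapses everything into an unconditional expectation. Keeping these conditionings in the correct order—and ensuring that the single factor $\sqrt{1-\alpha}\,\eta_{k+1}$ multiplies all the momentum-displacement contributions while the leading term $\frac{1}{n}\sum_i\Exp{\norm{v_i^k-g_i^k}}$ carries only $\sqrt{1-\alpha}$—is the delicate part; the remaining algebra is routine.
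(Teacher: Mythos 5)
Your proposal is correct and follows essentially the same route as the paper's own proof: the same EF21 contraction step (conditioning so that $v_i^{k+1}$ and $g_i^k$ are fixed, then Jensen with the concave square root and Assumption~\ref{assum:contractive_comp}), the same momentum identity $v_i^{k+1}-v_i^k=\eta_{k+1}(\nabla f_i(x^{k+1};\xi_i^{k+1})-v_i^k)$ with the three-way triangle-inequality split into noise, gradient displacement, and residual, the same use of \eqref{eqn:LzeroLoneSmooth_ineq_0} with $\norm{x^{k+1}-x^k}=\gamma_k$, and the same final averaging plus Lemma~\ref{lemma:bound_avg_norm_grad_multinodeef21}, with the $L_0/L_1$ term merging into the coefficient $2L_0$. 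No gaps to report.
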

\begin{proof}
    Taking conditional expectation \revision{with fixed} $\gF_{k+1} = \{v^{k+1}_i, x^{k+1}, g^{k}_i\}$, using the \revision{concavity} of the squared root of the function, and applying the definition of $g_i^k$ in Algorithm~\ref{alg:normalized_ef21_sgdm}, we have 
\begin{eqnarray*}
        \ExpCond{\norm{v^{k+1}_i- g^{k+1}_i}}{\gF_{k+1}} &\leq& \sqrt{\ExpCond{\sqnorm{v^{k+1}_i- g^{k+1}_i}}{\gF_{k+1}}}\\
        &=&\sqrt{\ExpCond{\sqnorm{v^{k+1}_i- g^{k}_i - \gC^k\left(v^{k+1}_i- g^{k}_i\right) }}{\gF_{k+1}}}\\
        &\overset{\eqref{eqn:contractive_comp}}{\leq}&\sqrt{\ExpCond{(1-\alpha)\sqnorm{v^{k+1}_i- g^{k}_i  }}{\gF_{k+1}}}.
\end{eqnarray*}
Next, let $\gamma_k=\gamma>0$, and $\eta_k = \eta \in [0,1]$. By the fact that $v_i^{k+1},g_i^k$ are constants being conditioned on $\gF_{k+1}$, and by the triangle inequality,  
\begin{eqnarray*}
     \ExpCond{\norm{v^{k+1}_i- g^{k+1}_i}}{\gF_{k+1}}   
        &\leq&\sqrt{1-\alpha}\norm{v^{k}_i- g^{k}_i} + \sqrt{1-\alpha}\norm{v^{k+1}_i- v^{k}_i}\\
        &=&\sqrt{1-\alpha}\norm{v^{k}_i- g^{k}_i} + \sqrt{1-\alpha}\eta_{k+1}\norm{\nabla f(x^{k+1};\xi^{k+1}_i) - v^k_i}. 
 \end{eqnarray*}
Here, the equality comes from the definition of $v_i^{k+1}$ in Algorithm~\ref{alg:normalized_ef21_sgdm}. Next, by the triangle inequality,
 \begin{eqnarray*}
    \ExpCond{\norm{v^{k+1}_i- g^{k+1}_i}}{\gF_{k+1}}        &\leq&\sqrt{1-\alpha}\norm{v^{k}_i- g^{k}_i} + \sqrt{1-\alpha} \eta_{k+1} \|v^k_i -\nabla f_i(x^k)\|\\
        && + \sqrt{1-\alpha}\eta_{k+1}\norm{\nabla f_i(x^{k}) - \nabla f_i(x^{k+1})}\\
        && + \sqrt{1-\alpha}\eta_{k+1}\norm{\nabla f_i(x^{k+1};\xi^{k+1}_i) - \nabla f_i(x^{k+1})}\\
        &\overset{\eqref{eqn:LzeroLoneSmooth_ineq_0}}{\leq}& \sqrt{1-\alpha}\norm{v^{k}_i- g^{k}_i} + \sqrt{1-\alpha} \eta_{k+1} \|v^k_i -\nabla f_i(x^k)\|\\
        && + \sqrt{1-\alpha}\eta_{k+1}\left(L_0 +L_1\norm{\nabla f_i(x^k)}\right)\exp\left(L_1\norm{x^{k+1}-x^k}\right)\norm{x^{k+1}-x^k}\\
        && + \sqrt{1-\alpha}\eta_{k+1}\norm{\nabla f(x^{k+1};\xi^{k+1}_i) - \nabla f(x^{k+1})}.
    \end{eqnarray*}
    Next, using $x^{k+1} -x^k = -\gamma_k \frac{g^k}{\|g^k\|}$, and taking the expectation, we obtain
    \begin{eqnarray*}
        \Exp{\norm{v^{k+1}_i- g^{k+1}_i}} &\leq& \sqrt{1-\alpha}\Exp{\norm{v^{k}_i- g^{k}_i}} + \sqrt{1-\alpha}\eta_{k+1}\Exp{\norm{v^k_i - \nabla f_i(x^k)}}\\
        && +\sqrt{1-\alpha}\eta_{k+1}\gamma_{k}\exp\left(\gamma_k L_1\right) \left( L_0 +  L_1\Exp{\norm{\nabla f_i(x^k)}}\right)\\
        && + \sqrt{1-\alpha}\eta_{k+1} \Exp{\norm{\nabla f_i(x^{k+1};\xi^{k+1}_i) - \nabla f_i(x^{k+1})}}.
    \end{eqnarray*}
    Finally, since 
    \begin{eqnarray*}
        \Exp{\norm{\nabla f_i(x^{k+1};\xi^{k+1}_i) - \nabla f_i(x^{k+1})}} 
        &\leq& \sqrt{ \Exp{\sqnorm{\nabla f_i(x^{k+1};\xi^{k+1}_i) - \nabla f_i(x^{k+1})}}} \\
        & \overset{\eqref{eqn:bounded_variance}}{\leq} & \sigma, 
    \end{eqnarray*}
    \revision{we derive
    \begin{eqnarray*}
        \frac{1}{n}\sum^n_{i=1}\Exp{\norm{v^{k+1}_i - g^{k+1}_i}} &\leq& \frac{\sqrt{1-\alpha}}{n}\sum^n_{i=1}\Exp{\norm{v^{k}_i - g^{k}_i}}\\
        && + \frac{\sqrt{1-\alpha}\eta_{k+1}}{n}\sum^n_{i=1}\Exp{\norm{v^{k}_i -\nabla f_i(x^k)}}\\
        && +\sqrt{1-\alpha}\eta_{k+1}\gamma_k\exp\left(\gamma_k L_1\right) \left(L_0 +  L_1\frac{1}{n}\sum^n_{i=1}\Exp{\norm{\nabla f_i(x^k)}} \right) \\
        && + \sqrt{1-\alpha} \eta_{k+1} \sigma\\
        &\overset{\eqref{eq:sum_of_grad_norms_bound}}{\leq}& \frac{\sqrt{1-\alpha}}{n}\sum^n_{i=1}\Exp{\norm{v^{k}_i - g^{k}_i}}\\
        && + \frac{\sqrt{1-\alpha}\eta_{k+1}}{n}\sum^n_{i=1}\Exp{\norm{v^{k}_i -\nabla f_i(x^k)}}\\
        && +8L_1^2\sqrt{1-\alpha}\eta_{k+1}\gamma_k\exp\left(\gamma_k L_1\right) \Exp{f(x^k) - f^{\inf}} \\
        && +\frac{8L_1^2\sqrt{1-\alpha}\eta_{k+1}\gamma_k\exp\left(\gamma_k L_1\right)}{n} \sum_{i=1}^n (f^{\inf}-f^{\inf}_{i}) \\
        &&  + 2L_0\sqrt{1-\alpha}\eta_{k+1}\gamma_k\exp\left(\gamma_k L_1\right) + \sqrt{1-\alpha} \eta_{k+1} \sigma.
    \end{eqnarray*}
    This concludes the proof.}
\end{proof}

\begin{lemma}
\label{lem:momentum_error_ef21_sgdm}
Consider the iterates $\{x^k\}$ generated by Algorithm~\ref{alg:normalized_ef21_sgdm}. If  Assumptions~\ref{assum:LzeroLoneSmooth}, and~\ref{assum:bounded_variance} hold, then for any $\gamma_k \equiv \gamma >0$, $\eta_k \equiv \eta $, and  $k\geq 0$,
    \begin{eqnarray}
       \Exp{\norm{v^{k}-\nabla f(x^{k})}} &\leq& (1-\eta)^{k}\Exp{\norm{v^0 - \nabla f(x^0) }} +\frac{\sqrt{\eta}\sigma}{\sqrt{n}} + \revision{\frac{2L_0 \gamma \exp\left(\gamma L_1\right)}{\eta}}\notag\\
    && \revision{+ 8L_1^2 \gamma \exp\left(\gamma L_1\right)\sum^{k-1}_{t=0}(1-\eta)^{k-t} \Exp{f(x^t) - f^{\inf}}}\notag\\
    && \revision{+  \frac{8L_1^2 \gamma \exp\left(\gamma L_1\right)}{\eta n}\sum_{i=1}^n\left(f^{\inf} - f_i^{\inf}\right).} \label{eq:vk_nablafxk_bound}
    \end{eqnarray}
In addition,  for any $k\geq 0$,
    \revision{\begin{eqnarray}
        \frac{1}{n}\sum^n_{i=1}\Exp{\norm{v_i^{k+1}-\nabla f_i(x^{k+1})}} &\leq& \frac{1-\eta}{n}\sum^n_{i=1}\Exp{\norm{v^k_i - \nabla f_i(x^k) }} + \eta\sigma + 2L_0\gamma\exp\left(\gamma L_1\right)\notag\\
        && + 8L_1^2\gamma\exp(\gamma L_1)\Exp{f(x^k) - f^{\inf}}\notag \\
    && + \frac{8L_1^2\gamma\exp(\gamma L_1)}{n}\sum_{i=1}^n\left(f^{\inf} - f_i^{\inf}\right).\label{eq:sum_vik_nablafixk_bound}
    \end{eqnarray}}
\end{lemma}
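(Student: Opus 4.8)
The plan is to analyze the momentum recursion $v_i^{k+1} = (1-\eta)v_i^k + \eta\nabla f_i(x^{k+1};\xi_i^{k+1})$ by subtracting $\nabla f_i(x^{k+1})$ from both sides and splitting the resulting error into a ``signal'' term and a ``noise'' term. First I would write
\begin{eqnarray*}
v_i^{k+1} - \nabla f_i(x^{k+1})
&=& (1-\eta)\bigl(v_i^k - \nabla f_i(x^k)\bigr)
 + (1-\eta)\bigl(\nabla f_i(x^k) - \nabla f_i(x^{k+1})\bigr) \\
&& + \eta\bigl(\nabla f_i(x^{k+1};\xi_i^{k+1}) - \nabla f_i(x^{k+1})\bigr).
\end{eqnarray*}
Taking the norm and applying the triangle inequality isolates three contributions: a contraction of the previous momentum error by factor $(1-\eta)$, a drift term controlled by the generalized smoothness inequality \eqref{eqn:LzeroLoneSmooth_ineq_0}, and a stochastic noise term. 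For the drift term, since $x^{k+1}-x^k = -\gamma_k g^k/\norm{g^k}$ has norm exactly $\gamma_k$, the bound \eqref{eqn:LzeroLoneSmooth_ineq_0} yields $\norm{\nabla f_i(x^k)-\nabla f_i(x^{k+1})} \leq (L_0 + L_1\norm{\nabla f_i(x^k)})\exp(\gamma L_1)\gamma$. The noise term is handled exactly as in Lemma~\ref{lem:compression_error_ef21_sgdm}: taking expectation and using Jensen together with the variance bound \eqref{eqn:bounded_variance} gives a $\sigma$-level bound.

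For the second claim \eqref{eq:sum_vik_nablafixk_bound}, I would take the full expectation, average over $i\in[n]$, and then invoke Lemma~\ref{lemma:bound_avg_norm_grad_multinodeef21}, i.e. \eqref{eq:sum_of_grad_norms_bound}, to replace $\frac1n\sum_i\norm{\nabla f_i(x^k)}$ by the function-value bound $8L_1(f(x^k)-f^{\inf}) + \frac{8L_1}{n}\sum_i(f^{\inf}-f_i^{\inf}) + L_0/L_1$. Multiplying through by the prefactor $\sqrt{1-\alpha}\,\eta\,\gamma\exp(\gamma L_1)$ (here with $\alpha$ absorbed, as the momentum recursion has no compression) and simplifying $L_1\cdot(L_0/L_1)=L_0$ produces the one-step recursion \eqref{eq:sum_vik_nablafixk_bound} with its $2L_0\gamma\exp(\gamma L_1)$, $8L_1^2\gamma\exp(\gamma L_1)(f(x^k)-f^{\inf})$, and heterogeneity terms, plus the $\eta\sigma$ noise.

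The first claim \eqref{eq:vk_nablafxk_bound} then follows by unrolling the one-step recursion for the \emph{averaged} momentum error over $t=0,\ldots,k-1$. The contraction factor $(1-\eta)$ per step turns the constant forcing terms into a geometric sum bounded by $\sum_{t\geq 0}(1-\eta)^t = 1/\eta$, which is why the $2L_0\gamma\exp(\gamma L_1)$ and heterogeneity terms acquire the $1/\eta$ denominator, while the smoothness term involving the function gap stays inside the explicit sum $\sum_{t=0}^{k-1}(1-\eta)^{k-t}\Exp{f(x^t)-f^{\inf}}$ since it varies with $t$. For the noise contribution I would be slightly more careful: rather than bounding each $\eta\sigma$ term crudely, I would exploit that the stochastic gradient noises across iterations are independent and that the momentum recursion weights the $k$-th noise by $\eta(1-\eta)^{k-t}$, so working with the \emph{squared} error and the $\sqrt n$ averaging across clients yields the sharper $\sqrt{\eta}\,\sigma/\sqrt n$ term rather than a crude $\sigma$. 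The main obstacle is precisely this noise-accumulation step: getting the $\sqrt{\eta/n}$ scaling (rather than a loose $\sigma/\eta$ bound) requires a variance-style argument on $\Exp{\sqnorm{\cdot}}$ exploiting independence and unbiasedness, then converting back to the first-moment bound via Jensen, which is the one place where the triangle-inequality approach used for the deterministic drift is insufficient.
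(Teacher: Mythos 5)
Your proposal follows essentially the same route as the paper's proof: the identical three-way decomposition of $v_i^{k+1}-\nabla f_i(x^{k+1})$ into contraction, drift, and noise, the same use of \eqref{eqn:LzeroLoneSmooth_ineq_0} with $\norm{x^{k+1}-x^k}=\gamma$ followed by \eqref{eq:sum_of_grad_norms_bound} to convert gradient norms into function-value gaps, and the same Jensen-plus-independence variance argument on the weighted noise sum (keeping the whole sum intact rather than unrolling the per-step $\eta\sigma$ bounds) to obtain the sharp $\nicefrac{\sqrt{\eta}\sigma}{\sqrt{n}}$ term. The only blemish is your garbled mention of a prefactor $\sqrt{1-\alpha}\,\eta\,\gamma\exp(\gamma L_1)$ for the drift in \eqref{eq:sum_vik_nablafixk_bound} (the correct factor is $(1-\eta)\gamma\exp(\gamma L_1)$, with no compression constant and no $\eta$, as you yourself note parenthetically), which does not affect the validity of the argument or the stated conclusion.
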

\begin{proof}
\revision{We prove the result \revision{using the} arguments similar to those \revision{given in the proof of} Theorem 1 \revision{from} \citet{cutkosky2020momentum}.
From the definition of $v_i^{k+1}$, we have the following recursion for any $k \geq 0$:
\begin{eqnarray*}
    v^{k+1}_i &=& (1-\eta)v^k_i +\eta\nabla f_i(x^{k+1};\xi^{k+1}_i) \\
    &=&\nabla f_i(x^{k+1}) + (1-\eta)(v^k_i -\nabla f_i(x^k)) + (1-\eta)(\nabla f_i(x^k)-\nabla f_i(x^{k+1}))\\
    && + \eta(\nabla f_i(x^{k+1};\xi^{k+1}_i) - \nabla f_i(x^{k+1})).
\end{eqnarray*}
Next, from the recursion of  $v^{k+1}_i$, we obtain the following recursion for $k \geq 0$:  
\begin{eqnarray}
    H^{k+1}_i &=& (1-\eta)H^{k}_i + (1-\eta)G^k_i + \eta U^{k+1}_i, \label{eq:H_i^k_recursion}
\end{eqnarray}    
where
\begin{equation*}
    U^{k+1}_i =\nabla f_i(x^{k+1};\xi^{k+1}_i) - \nabla f_i(x^{k+1}),\quad G^k_i = \nabla f_i(x^k)-\nabla f_i(x^{k+1}),\quad H^{k}_i= v^k_i -\nabla f_i(x^k),
\end{equation*}
\begin{equation*}
    U^{k+1} =\frac{1}{n}\sum^n_{i=1}U^{k+1}_i,\quad G^k =\frac{1}{n}\sum^n_{i=1}G^{k}_i ,\quad \text{and} \quad  H^{k}=\frac{1}{n}\sum^n_{i=1}H^{k}_i.
\end{equation*}
\revision{Unrolling the recursion for $H_i^k$, we derive}
\begin{align*}
    H^{k+1}_i 
    =& (1-\eta)^{k+1} H^0_i + \sum^{k}_{t=0}(1-\eta)^{k-t+1}G^t_i + \eta\sum^{k}_{t=0}(1-\eta)^{k-t}U^{t+1}_i.
\end{align*}
\revision{Averaging the above inequality, we get}
\begin{eqnarray*}
    H^{k+1} &=& (1-\eta)^{k+1} H^0 + \sum^{k}_{t=0}(1-\eta)^{k-t+1}G^t + \eta\sum^{k}_{t=0}(1-\eta)^{k-t}U^{t+1}.
\end{eqnarray*}
Next, taking the Euclidean norm, using the triangle inequality, and then taking the expectation, we obtain
\begin{eqnarray}
    \Exp{\norm{H^{k+1}}}  &\leq& (1-\eta)^{k+1} \Exp{\norm{H^0}} + \underbrace{\sum^{k}_{t=0}(1-\eta)^{k-t+1}\Exp{\norm{G^t}}}_{=: \gA_1} \notag \\
    && + \eta\underbrace{\Exp{\norm{\sum^{k}_{t=0}(1-\eta)^{k-t}U^{t+1}}}}_{=:\gA_2}.\label{eqn:ef21sgdm:exp_norm_H_k}
\end{eqnarray}
To bound $\Exp{\norm{H^{k+1}}}$, we need to bound the expectation of the last two terms. First, we bound term $\gA_1$. \revision{Using} the fact that $\norm{G^t} \leq \frac{1}{n}\sum_{i=1}^n \norm{G_i^t}$, and the definition of $G_i^t$, \revision{we obtain}
\begin{eqnarray*}
    \gA_1 
    &\leq& \frac{1}{n}\sum^n_{i=1}\sum^{k}_{t=0}(1-\eta)^{k-t+1}\Exp{\norm{\nabla f_i(x^t)-\nabla f_i(x^{t+1})}}\\
    &\overset{\eqref{eqn:LzeroLoneSmooth_ineq_0}}{\leq}& \frac{1}{n}\sum^n_{i=1}\sum^{k}_{t=0}(1-\eta)^{k-t+1}\Exp{L_0\exp\left(L_1\norm{x^{t+1}-x^t}\right)\norm{x^{t+1}-x^t}}\\
    && + \frac{1}{n}\sum^n_{i=1}\sum^{k}_{t=0}(1-\eta)^{k-t+1}\Exp{L_1\norm{\nabla f_i(x^t)}\exp\left(L_1\norm{x^{t+1}-x^t}\right)\norm{x^{t+1}-x^t}}\\
    &=& \sum^{k}_{t=0}(1-\eta)^{k-t+1} \gamma {\exp(\gamma L_1)} L_0 + \frac{L_1}{n}\sum^n_{i=1}\sum^{k}_{t=0}(1-\eta)^{k-t+1} \gamma \exp\left(\gamma L_1\right) \Exp{\norm{\nabla f_i(x^t)}}\\
    &\overset{\eqref{eq:sum_of_grad_norms_bound}}{\leq}& 2L_0 \gamma \exp\left(\gamma L_1\right)\sum^{k}_{t=0}(1-\eta)^{k-t+1} + 8L_1^2 \gamma \exp\left(\gamma L_1\right)\sum^{k}_{t=0}(1-\eta)^{k-t+1} \Exp{f(x^t) - f^{\inf}}\\
    && +  \frac{8L_1^2 \gamma \exp\left(\gamma L_1\right)}{n}\sum_{i=1}^n\sum^{k}_{t=0}(1-\eta)^{k-t+1} \left(f^{\inf} - f_i^{\inf}\right)\\
    &\leq& 2L_0 \gamma \exp\left(\gamma L_1\right)\sum^{\infty}_{t=0}(1-\eta)^{t} + 8L_1^2 \gamma \exp\left(\gamma L_1\right)\sum^{k}_{t=0}(1-\eta)^{k-t+1} \Exp{f(x^t) - f^{\inf}}\\
    && +  \frac{8L_1^2 \gamma \exp\left(\gamma L_1\right)}{n}\sum_{i=1}^n\left(f^{\inf} - f_i^{\inf}\right) \sum^{\infty}_{t=0}(1-\eta)^{t}\\
    &=& \frac{2L_0 \gamma \exp\left(\gamma L_1\right)}{\eta} + 8L_1^2 \gamma \exp\left(\gamma L_1\right)\sum^{k}_{t=0}(1-\eta)^{k-t+1} \Exp{f(x^t) - f^{\inf}}\\
    && +  \frac{8L_1^2 \gamma \exp\left(\gamma L_1\right)}{\eta n}\sum_{i=1}^n\left(f^{\inf} - f_i^{\inf}\right).
\end{eqnarray*}
Next, we bound term $\gA_2$. Jensen's inequality and the tower property of the conditional expectation imply
\begin{eqnarray*}
    \gA_2 &\leq& \sqrt{\Exp{\sqnorm{\sum^{k}_{t=0}(1-\eta)^{k-t}U^{t+1}}}} =  \sqrt{\sum^{k}_{t=0}(1-\eta)^{2(k-t)}\Exp{\sqnorm{U^{t+1}}}}.
\end{eqnarray*}
Moreover, due to independence of $\{\xi^t_i\}_{i=1}^n$, we have
\begin{eqnarray*}    
 \gA_2  &\leq& \sqrt{\sum^{k}_{t=0}\frac{(1-\eta)^{2(k-t)}}{n^2}\sum_{i=1}^n\Exp{\sqnorm{U_i^{t+1}}}} \overset{\eqref{eqn:bounded_variance}}{\leq} \sqrt{\sum^{k}_{t=0}(1-\eta)^{2(k-t)}\frac{\sigma^2}{n}}\\
    &\leq&\frac{\sigma}{\sqrt{n}}\sqrt{\sum^{\infty}_{t=0}(1-\eta)^{2t}} = \frac{\sigma}{\sqrt{n\eta(2-\eta)}} \overset{\eta\in[0,1]}{\leq} \frac{\sigma}{\sqrt{n\eta}}.
\end{eqnarray*}
Therefore, plugging the derived upper-bounds for $\gA_1$, and for $\gA_2$ into \eqref{eqn:ef21sgdm:exp_norm_H_k}, we obtain 
\begin{eqnarray*}
    \Exp{\norm{H^{k+1}}} &\leq& (1-\eta)^{k+1} \Exp{\norm{H^0}} + \frac{2L_0 \gamma \exp\left(\gamma L_1\right)}{\eta}\\
    && + 8L_1^2 \gamma \exp\left(\gamma L_1\right)\sum^{k}_{t=0}(1-\eta)^{k-t+1} \Exp{f(x^t) - f^{\inf}}\\
    && +  \frac{8L_1^2 \gamma \exp\left(\gamma L_1\right)}{\eta n}\sum_{i=1}^n\left(f^{\inf} - f_i^{\inf}\right) + \frac{\sqrt{\eta}\sigma}{\sqrt{n}},
\end{eqnarray*}
which is equivalent to \eqref{eq:vk_nablafxk_bound}.

To derive \eqref{eq:sum_vik_nablafixk_bound}, we make a step back to the recursion from \eqref{eq:H_i^k_recursion}, which implies
\begin{eqnarray}
    \frac{1}{n}\sum_{i=1}^n \Exp{\norm{H_i^{k+1}}} &\leq& \frac{1-\eta}{n}\sum_{i=1}^n \Exp{\norm{H_i^{k}}} + \underbrace{\frac{1-\eta}{n}\sum_{i=1}^n \Exp{\norm{G_i^{k}}}}_{=: \cB_1} \notag\\
    && + \underbrace{\frac{\eta}{n}\sum_{i=1}^n \Exp{\norm{U_i^{k+1}}}}_{=: \cB_2}. \label{eq:bound_for_sum_tecnical_ineq}
\end{eqnarray}
Next, we derive the upper bounds for $\cB_1$ and $\cB_2$. For $\cB_1$, we have
\begin{eqnarray*}
    \cB_1 &=& \frac{1-\eta}{n}\sum_{i=1}^n \Exp{\norm{\nabla f_i(x^k) - \nabla f_i(x^{k+1})}}\\
    &\overset{\eqref{eqn:LzeroLoneSmooth_ineq_0}}{\leq}& \frac{1-\eta}{n}\sum_{i=1}^n\Exp{(L_0 + L_1\norm{\nabla f_i(x^k)})\exp\left(L_1\norm{x^k - x^{k+1}}\right)\norm{x^k - x^{k+1}}}\\
    &=& (1-\eta)L_0\gamma\exp(\gamma L_1) +  \frac{(1-\eta)L_1\gamma\exp(\gamma L_1)}{n}\sum_{i=1}^n\Exp{\norm{\nabla f_i(x^k)}}\\
    &\overset{\eqref{eq:sum_of_grad_norms_bound}}{\leq}& 2(1-\eta)L_0\gamma\exp(\gamma L_1) + 8(1-\eta)L_1^2\gamma\exp(\gamma L_1)\Exp{f(x^k) - f^{\inf}}\\
    && + \frac{8(1-\eta)L_1^2\gamma\exp(\gamma L_1)}{n}\sum_{i=1}^n\left(f^{\inf} - f_i^{\inf}\right),
\end{eqnarray*}
and for $\cB_2$, we obtain
\begin{eqnarray*}
    \cB_2 &=& \frac{\eta}{n}\sum_{i=1}^n \Exp{\norm{\nabla f_i(x^{k+1};\xi_i^{k+1}) - \nabla f_i(x^{k+1})}} \overset{\eqref{eqn:bounded_variance}}{\leq} \eta\sigma.
\end{eqnarray*}
Plugging the derived upper bounds for $\cB_1$ and $\cB_2$ into \eqref{eq:bound_for_sum_tecnical_ineq} and using $1-\eta \leq 1$, we get
\begin{eqnarray*}
    \frac{1}{n}\sum_{i=1}^n \Exp{\norm{H_i^{k+1}}} &\leq& \frac{1-\eta}{n}\sum_{i=1}^n \Exp{\norm{H_i^{k}}} + 2L_0\gamma\exp(\gamma L_1)\\
    && + 8L_1^2\gamma\exp(\gamma L_1)\Exp{f(x^k) - f^{\inf}}\\
    && + \frac{8L_1^2\gamma\exp(\gamma L_1)}{n}\sum_{i=1}^n\left(f^{\inf} - f_i^{\inf}\right) +  \eta\sigma,
\end{eqnarray*}
which is equivalent to \eqref{eq:sum_vik_nablafixk_bound}.}
\end{proof}

\subsection{Proof of Theorem~\ref{thm:ef21_sgdm}}\label{app:sec:ef21_sgdm}

Now, we are ready to prove Theorem~\ref{thm:ef21_sgdm}. \revision{For convenience, we introduce new notation: 
\begin{gather}
    \delta^k \eqdef \Exp{f(x^k) - f^{\inf}},\quad A_{k} \eqdef \frac{1}{n}\sum^n_{i=1}\Exp{\norm{v^{k}_i - g^{k}_i}},\quad B_k \eqdef \Exp{\norm{v^{k} - \nabla f(x^k)}}, \notag\\
    C_k \eqdef \frac{1}{n}\sum_{i=1}^n \Exp{\norm{v_i^{k} - \nabla f_i(x^k)}},\quad \delta^{\inf} \eqdef \frac{1}{n}\sum_{i=1}^n(f^{\inf} - f_i^{\inf}). \notag
\end{gather}
Using the new notation and noticing that $\Exp{\|v^k - g^k\|} \leq A_k$, we rewrite the results of Lemmas~\ref{lem:descent_lemma_ef21_sgdm}, \ref{lem:compression_error_ef21_sgdm}, and \ref{lem:momentum_error_ef21_sgdm} as
\begin{eqnarray*}
    \delta^{k+1} &\leq& \left(1 + 4L_1^2\gamma^2\exp(L_1\gamma)\right)\delta^k + 2\gamma A_k + 2\gamma B_k - \gamma\Exp{\norm{\nabla f(x^k)}}\\
    &&+ \gamma^2 \exp(L_1\gamma)\left(L_0 + 4L_1^2 \delta^{\inf}\right),\\
    A_{k+1} &\leq& \sqrt{1 - \alpha} A_k + \eta\sqrt{1-\alpha}C_k + 8L_1^2\sqrt{1-\alpha}\eta\gamma\exp\left(\gamma L_1\right) \delta^k\\
    &&+ 2\sqrt{1-\alpha} \eta\gamma \exp\left(\gamma L_1\right)\left(L_0 + 4L_1^2 \delta^{\inf}\right) + \sqrt{1-\alpha}\eta\sigma,\\
    B_k &\leq& (1-\eta)^kB_0 + \frac{\sqrt{\eta}\sigma}{\sqrt{n}} + \frac{2 \gamma \exp\left(L_1\gamma\right)}{\eta}\left(L_0 + 4L_1^2 \delta^{\inf}\right)\\
    &&+ 8L_1^2\gamma \exp\left(L_1\gamma\right)\sum\limits_{t=0}^{k-1}(1-\eta)^{k-t}\delta^t,\\
    C_{k+1} &\leq& (1-\eta)C_k + 8L_1^2\gamma \exp\left(L_1\gamma\right)\delta^k + \eta\sigma + 2\gamma\exp\left(\gamma L_1\right)\left(L_0 + 4L_1^2 \delta^{\inf}\right).
\end{eqnarray*}
Moreover, since $\gamma = \frac{\gamma_0}{(K+1)^{\nicefrac{3}{4}}}$ with $\gamma_0 \leq \frac{1}{2L_1}$, we have $\exp(L_1\gamma) \leq \exp(L_1\gamma_0) \leq 2$ and the above inequalities can be further simplified as
\begin{eqnarray}
    \delta^{k+1} &\leq& \left(1 + 8L_1^2\gamma^2\right)\delta^k + 2\gamma A_k + 2\gamma B_k - \gamma\Exp{\norm{\nabla f(x^k)}} + 2\gamma^2 \left(L_0 + 4L_1^2 \delta^{\inf}\right), \label{eq:ef21_sgdm_main_ineq1}\\
    A_{k+1} &\leq& \sqrt{1 - \alpha} A_k + \eta\sqrt{1-\alpha}C_k + 16L_1^2\sqrt{1-\alpha}\eta\gamma \delta^k\notag \\
    &&+ 4\sqrt{1-\alpha} \eta\gamma \left(L_0 + 4L_1^2 \delta^{\inf}\right) + \sqrt{1-\alpha}\eta\sigma,\label{eq:ef21_sgdm_main_ineq2}\\
    B_k &\leq& (1-\eta)^kB_0 + \frac{\sqrt{\eta}\sigma}{\sqrt{n}} + \frac{4 \gamma }{\eta}\left(L_0 + 4L_1^2 \delta^{\inf}\right)+ 16L_1^2\gamma \sum\limits_{t=0}^{k-1}(1-\eta)^{k-t}\delta^t,\label{eq:ef21_sgdm_main_ineq3}\\
    C_{k+1} &\leq& (1-\eta)C_k + 16L_1^2\gamma \delta^k + \eta\sigma + 4\gamma\left(L_0 + 4L_1^2 \delta^{\inf}\right).\label{eq:ef21_sgdm_main_ineq4}
\end{eqnarray}
Next, we introduce the Lyapunov function $V_k$ defined for any $k \geq 0$ as
\begin{equation*}
    V_{k} = \delta^k + aA_k + cC_k,
\end{equation*}
where $a \eqdef \frac{2\gamma}{1 - \sqrt{1-\alpha}}$ and $c\eqdef a\sqrt{1-\alpha}$. Then, using \eqref{eq:ef21_sgdm_main_ineq1}, \eqref{eq:ef21_sgdm_main_ineq2}, \eqref{eq:ef21_sgdm_main_ineq4}, we get
\begin{eqnarray*}
    V_{k+1} &\leq& \left(1 + 8L_1^2\gamma^2\right)\delta^k + 2\gamma A_k + 2\gamma B_k - \gamma\Exp{\norm{\nabla f(x^k)}} + 2\gamma^2 \left(L_0 + 4L_1^2 \delta^{\inf}\right)\\
    &&+ a\left(\sqrt{1 - \alpha} A_k + \eta\sqrt{1-\alpha}C_k + 16L_1^2\sqrt{1-\alpha}\eta\gamma \delta^k\right)\\
    &&+ a\left(4\sqrt{1-\alpha} \eta\gamma \left(L_0 + 4L_1^2 \delta^{\inf}\right) + \sqrt{1-\alpha}\eta\sigma\right)\\
    &&+ c\left((1-\eta)C_k + 16L_1^2\gamma \delta^k + \eta\sigma + 4\gamma\left(L_0 + 4L_1^2 \delta^{\inf}\right)\right).
\end{eqnarray*}
To proceed, we rearrange the terms:
\begin{eqnarray*}
    V_{k+1} &\leq& \left(1 + 8L_1^2\gamma^2 + 16aL_1^2\sqrt{1-\alpha}\eta\gamma + 16cL_1^2\gamma\right)\delta^k + \left(\frac{2\gamma}{a} + \sqrt{1-\alpha} \right)aA_k\\
    && + \left(\frac{a\eta\sqrt{1-\alpha}}{c} + 1-\eta\right)cC_k + 2\gamma B_k - \gamma\Exp{\norm{\nabla f(x^k)}}\\
    && + \left(2\gamma^2 +  4a\sqrt{1-\alpha} \eta\gamma + 4c\gamma\right) \left(L_0 + 4L_1^2 \delta^{\inf}\right) + \eta\left(a\sqrt{1-\alpha} + c\right)\sigma\\
    &\overset{\overset{c = a\sqrt{1-\alpha},}{\eta\leq 1}}{\leq}& \left(1 + 8L_1^2\gamma^2 + 32aL_1^2\sqrt{1-\alpha}\gamma\right)\delta^k + \left(\frac{2\gamma}{a} + \sqrt{1-\alpha} \right)aA_k + cC_k\\
    && + 2\gamma B_k - \gamma\Exp{\norm{\nabla f(x^k)}}\\
    && + \left(2\gamma^2 +  8a\sqrt{1-\alpha} \gamma \right) \left(L_0 + 4L_1^2 \delta^{\inf}\right) + 2\eta a\sqrt{1-\alpha}\sigma.
\end{eqnarray*}
Since $a = \frac{2\gamma}{1 - \sqrt{1-\alpha}}$, we have $\frac{2\gamma}{a} + \sqrt{1-\alpha} = 1$ and
\begin{eqnarray*}
    V_{k+1} &\leq& \left(1 + 8L_1^2\gamma^2 + \frac{64L_1^2\gamma^2\sqrt{1-\alpha}}{1-\sqrt{1-\alpha}}\right)\delta^k + aA_k + cC_k + 2\gamma B_k - \gamma\Exp{\norm{\nabla f(x^k)}}\\
    && + \left(2\gamma^2 +  \frac{16\gamma^2\sqrt{1-\alpha}}{1-\sqrt{1-\alpha}} \right) \left(L_0 + 4L_1^2 \delta^{\inf}\right) + \frac{4\gamma\eta \sqrt{1-\alpha}\sigma}{1-\sqrt{1-\alpha}}\\
    &\leq& \left(1 + \frac{64L_1^2\gamma^2}{1-\sqrt{1-\alpha}}\right)V_k + 2\gamma B_k - \gamma\Exp{\norm{\nabla f(x^k)}}\\
    && + \frac{16\gamma^2\left(L_0 + 4L_1^2 \delta^{\inf}\right)}{1-\sqrt{1-\alpha}}  + \frac{4\gamma\eta \sqrt{1-\alpha}\sigma}{1-\sqrt{1-\alpha}}.
\end{eqnarray*}
Next, we bound $B_k$ using \eqref{eq:ef21_sgdm_main_ineq3} and $\delta^k \leq V_k$:
\begin{eqnarray*}
    V_{k+1} &\leq&  \left(1 + \frac{64L_1^2\gamma^2}{1-\sqrt{1-\alpha}}\right)V_k + 32L_1^2\gamma^2\sum\limits_{t=0}^{k-1}(1-\eta)^{k-t}V_t + 2\gamma (1-\eta)^k B_0 - \gamma\Exp{\norm{\nabla f(x^k)}}\\
    && + \left(\frac{16\gamma^2}{1-\sqrt{1-\alpha}} + \frac{8\gamma^2}{\eta}\right)\left(L_0 + 4L_1^2 \delta^{\inf}\right)  + \left(\frac{4\gamma\eta \sqrt{1-\alpha}}{1-\sqrt{1-\alpha}} + \frac{2\gamma\sqrt{\eta}}{\sqrt{n}}\right)\sigma.
\end{eqnarray*}
Summing up the above inequality with weights $\beta_k \eqdef \left(1 + \frac{64L_1^2\gamma^2}{1-\sqrt{1-\alpha}} + \frac{32L_1^2\gamma^2}{\eta}\right)^{-(k+1)}$ for $k = 0,\ldots,K$ and denoting $S_K \eqdef \sum_{k=0}^K\beta_k$ and $\beta_{-1}\eqdef 1$, we get
\begin{eqnarray*}
    \sum\limits_{k=0}^K\beta_k V_{k+1} &\leq& \sum\limits_{k=0}^K\left(1 + \frac{64L_1^2\gamma^2}{1-\sqrt{1-\alpha}}\right)\beta_k V_k + 32L_1^2\gamma^2\sum\limits_{k=0}^K\beta_k\sum\limits_{t=0}^{k-1}(1-\eta)^{k-t}V_t\\
    && + 2\gamma B_0\sum\limits_{k=0}^K(1-\eta)^k \beta_k - \gamma\sum\limits_{k=0}^K\beta_k \Exp{\norm{\nabla f(x^k)}}\\
    && + S_K\left(\frac{16\gamma^2}{1-\sqrt{1-\alpha}} + \frac{8\gamma^2}{\eta}\right)\left(L_0 + 4L_1^2 \delta^{\inf}\right)  + S_K\left(\frac{4\gamma\eta \sqrt{1-\alpha}}{1-\sqrt{1-\alpha}} + \frac{2\gamma\sqrt{\eta}}{\sqrt{n}}\right)\sigma.
\end{eqnarray*}
By definition of $\beta_k$, we have $\beta_k \leq \beta_{k-1}$ and, in particular, $\beta_k \leq 1$ for all $k\geq 0$. Using these inequalities, we continue the derivation as follows:
\begin{eqnarray*}
    \sum\limits_{k=0}^K\beta_k V_{k+1} &\leq& \sum\limits_{k=0}^K\left(1 + \frac{64L_1^2\gamma^2}{1-\sqrt{1-\alpha}}\right)\beta_k V_k + 32L_1^2\gamma^2\sum\limits_{k=0}^K\sum\limits_{t=0}^{k-1}(1-\eta)^{k-t}\beta_tV_t\\
    && + 2\gamma B_0\sum\limits_{k=0}^K(1-\eta)^k - \gamma\sum\limits_{k=0}^K\beta_k \Exp{\norm{\nabla f(x^k)}}\\
    && + S_K\left(\frac{16\gamma^2}{1-\sqrt{1-\alpha}} + \frac{8\gamma^2}{\eta}\right)\left(L_0 + 4L_1^2 \delta^{\inf}\right)  + S_K\left(\frac{4\gamma\eta \sqrt{1-\alpha}}{1-\sqrt{1-\alpha}} + \frac{2\gamma\sqrt{\eta}}{\sqrt{n}}\right)\sigma\\
    &\leq& \sum\limits_{k=0}^K\left(1 + \frac{64L_1^2\gamma^2}{1-\sqrt{1-\alpha}}\right)\beta_k V_k + 32L_1^2\gamma^2\left(\sum\limits_{t=0}^{\infty} (1-\eta)^t\right)\left(\sum\limits_{k=0}^K\beta_k V_k\right)\\
    && + 2\gamma B_0\sum\limits_{k=0}^{\infty}(1-\eta)^k - \gamma S_K \min\limits_{k=0,\ldots,K} \Exp{\norm{\nabla f(x^k)}}\\
    && + S_K\left(\frac{16\gamma^2}{1-\sqrt{1-\alpha}} + \frac{8\gamma^2}{\eta}\right)\left(L_0 + 4L_1^2 \delta^{\inf}\right)  + S_K\left(\frac{4\gamma\eta \sqrt{1-\alpha}}{1-\sqrt{1-\alpha}} + \frac{2\gamma\sqrt{\eta}}{\sqrt{n}}\right)\sigma\\
    &=& \sum\limits_{k=0}^K\underbrace{\left(1 + \frac{64L_1^2\gamma^2}{1-\sqrt{1-\alpha}} + \frac{32L_1^2\gamma^2}{\eta}\right)\beta_k}_{=\beta_{k-1}} V_k  + \frac{2\gamma B_0}{\eta} - \gamma S_K \min\limits_{k=0,\ldots,K} \Exp{\norm{\nabla f(x^k)}}\\
    && + S_K\left(\frac{16\gamma^2}{1-\sqrt{1-\alpha}} + \frac{8\gamma^2}{\eta}\right)\left(L_0 + 4L_1^2 \delta^{\inf}\right)  + S_K\left(\frac{4\gamma\eta \sqrt{1-\alpha}}{1-\sqrt{1-\alpha}} + \frac{2\gamma\sqrt{\eta}}{\sqrt{n}}\right)\sigma.
\end{eqnarray*}
Rearranging the terms and dividing both sides of the above inequality by $\gamma S_K$, we obtain
\begin{eqnarray}
    \min\limits_{k=0,\ldots,K} \Exp{\norm{\nabla f(x^k)}} &\leq& \frac{1}{\gamma S_K}\sum\limits_{k=0}^K\left(\beta_{k-1}V_k - \beta_kV_{k+1}\right) + \frac{2B_0}{\eta S_K}\notag\\
    && + \left(\frac{16\gamma}{1-\sqrt{1-\alpha}} + \frac{8\gamma}{\eta}\right)\left(L_0 + 4L_1^2 \delta^{\inf}\right)  + \left(\frac{4\eta \sqrt{1-\alpha}}{1-\sqrt{1-\alpha}} + \frac{2\sqrt{\eta}}{\sqrt{n}}\right)\sigma\notag\\
    &\leq& \frac{V_0}{\gamma S_K} + \frac{2B_0}{\eta S_K} + \left(\frac{16\gamma}{1-\sqrt{1-\alpha}} + \frac{8\gamma}{\eta}\right)\left(L_0 + 4L_1^2 \delta^{\inf}\right)\notag\\
    &&  + \left(\frac{4\eta \sqrt{1-\alpha}}{1-\sqrt{1-\alpha}} + \frac{2\sqrt{\eta}}{\sqrt{n}}\right)\sigma, \label{eq:pre_final_bound_EF21_SGDM}
\end{eqnarray}
where in the last inequality we use $V_{K+1} \geq 0$ and $\beta_{-1} = 1$. Next, we estimate $S_K$:
\begin{eqnarray}
    S_K &=& \sum_{k=0}^K \beta_k \geq (K+1)\beta_K = \frac{K+1}{\left(1 + \frac{64L_1^2\gamma^2}{1-\sqrt{1-\alpha}} + \frac{32L_1^2\gamma^2}{\eta}\right)^{K+1}}\notag\\
    &\geq& \frac{K+1}{\exp\left(\frac{64L_1^2\gamma^2 (K+1)}{1-\sqrt{1-\alpha}} + \frac{32L_1^2\gamma^2(K+1)}{\eta}\right)}. \label{eq:S_K_lower_bound}
\end{eqnarray}
Since $\eta = \frac{1}{(K+1)^{\nicefrac{1}{2}}}$ and $\gamma = \frac{\gamma_0}{(K+1)^{\nicefrac{3}{4}}}$ with $\gamma_0 \leq \frac{1}{16L_1}\min\left\{(K+1)^{\nicefrac{1}{2}}(1 - \sqrt{1-\alpha}), 1 \right\}$, we have $\frac{32L_1^2\gamma^2(K+1)}{\eta} \leq \frac{1}{4}$ and $\frac{64L_1^2\gamma^2 (K+1)}{1-\sqrt{1-\alpha}} \leq \frac{1}{4}$. Plugging these inequalities into \eqref{eq:S_K_lower_bound}, we get $S_K \geq \nicefrac{(K+1)}{\exp(\nicefrac{1}{2})} \geq \nicefrac{(K+1)}{2}$. Using this lower bound for $S_K$ and $\eta = \frac{1}{(K+1)^{\nicefrac{1}{2}}}$, $\gamma = \frac{\gamma_0}{(K+1)^{\nicefrac{3}{4}}}$ in \eqref{eq:pre_final_bound_EF21_SGDM} , we get
\begin{eqnarray}
    \min\limits_{k=0,\ldots,K} \Exp{\norm{\nabla f(x^k)}} &\leq& \frac{2V_0}{\gamma_0 (K+1)^{\nicefrac{1}{4}}} + \frac{4B_0}{(K+1)^{\nicefrac{1}{2}}}\notag\\
    && + \left(\frac{16\gamma_0}{(1-\sqrt{1-\alpha})(K+1)^{\nicefrac{3}{4}}} + \frac{8\gamma_0}{(K+1)^{\nicefrac{1}{4}}}\right)\left(L_0 + 4L_1^2 \delta^{\inf}\right)\notag\\
    &&  + \left(\frac{4\sqrt{1-\alpha}}{(1-\sqrt{1-\alpha})(K+1)^{\nicefrac{1}{2}}} + \frac{2}{\sqrt{n}(K+1)^{\nicefrac{1}{4}}}\right)\sigma. \notag
\end{eqnarray}
For the convenience, we define $C_\alpha \eqdef 1 - \sqrt{1-\alpha}$. Then, by definition of $V_0$, we have
\begin{equation*}
    \frac{2V_0}{\gamma_0 (K+1)^{\nicefrac{1}{4}}} = \frac{2\delta^0}{\gamma_0 (K+1)^{\nicefrac{1}{4}}} + \frac{2A_0}{C_\alpha(K+1)} + \frac{2(1-C_\alpha)C_0}{C_\alpha (K+1)}.
\end{equation*}
Moreover, since $g_i^{-1} = 0$ and $v_i^{-1} = \nabla f_i(x_i^0;\xi_{i}^0)$ for all $i = 1,\ldots, n$ with independent $\{\xi_i^0\}_{i=1}^n$, we have $v_i^0 = \nabla f_i(x_i^0;\xi_{i}^0)$ and $g_i^0 = \cC^0(\nabla f_i(x_i^0;\xi_{i}^0))$ for all $i = 1,\ldots,n$ and
\begin{eqnarray*}
    A_0 &=& \frac{1}{n}\sum_{i=1}^n\Exp{\norm{\nabla f_i(x_i^0;\xi_{i}^0) - \cC^0(\nabla f_i(x_i^0;\xi_{i}^0))}} \\
    &\overset{\eqref{eqn:contractive_comp}}{\leq}& \frac{\sqrt{1-\alpha}}{n}\sum_{i=1}^n\Exp{\norm{\nabla f_i(x_i^0;\xi_{i}^0) - \nabla f_i(x_i^0)}} \overset{\eqref{eqn:bounded_variance}}{\leq} (1-C_\alpha)\sigma,\\
    C_0 &=& \frac{1}{n}\sum_{i=1}^n\Exp{\norm{\nabla f_i(x_i^0;\xi_{i}^0) - \nabla f_i(x_i^0)}} \overset{\eqref{eqn:bounded_variance}}{\leq} \sigma,\\
    B_0 &=& \Exp{\norm{\frac{1}{n}\sum_{i=1}^n\left(\nabla f_i(x_i^0;\xi_{i}^0) - \nabla f_i(x_i^0)\right)}}\\
    &=&\sqrt{\frac{1}{n^2}\sum_{i=1}^n\Exp{\norm{\nabla f_i(x_i^0;\xi_{i}^0) - \nabla f_i(x_i^0)}^2}} \overset{\eqref{eqn:bounded_variance}}{\leq} \frac{\sigma}{\sqrt{n}}.
\end{eqnarray*}
Using these inequalities, we get
\begin{eqnarray}
    \min\limits_{k=0,\ldots,K} \Exp{\norm{\nabla f(x^k)}} &\leq& \frac{2\delta^0}{\gamma_0 (K+1)^{\nicefrac{1}{4}}} + \frac{2A_0}{C_\alpha(K+1)} + \frac{2(1-C_\alpha)C_0}{C_\alpha (K+1)} + \frac{4B_0}{(K+1)^{\nicefrac{1}{2}}}\notag\\
    && + \left(\frac{16\gamma_0}{C_\alpha(K+1)^{\nicefrac{3}{4}}} + \frac{8\gamma_0}{(K+1)^{\nicefrac{1}{4}}}\right)\left(L_0 + 4L_1^2 \delta^{\inf}\right)\notag\\
    && + \frac{4(1-C_\alpha)\sigma}{C_\alpha(K+1)^{\nicefrac{1}{2}}} + \frac{2\sigma}{\sqrt{n}(K+1)^{\nicefrac{1}{4}}} \notag\\
    &\leq& \frac{2\delta^0}{\gamma_0 (K+1)^{\nicefrac{1}{4}}} + \frac{4(1-C_\alpha)\sigma}{C_\alpha(K+1)} + \frac{4\sigma}{\sqrt{n}(K+1)^{\nicefrac{1}{2}}}\notag\\
    && + \left(\frac{16\gamma_0}{C_\alpha(K+1)^{\nicefrac{3}{4}}} + \frac{8\gamma_0}{(K+1)^{\nicefrac{1}{4}}}\right)\left(L_0 + 4L_1^2 \delta^{\inf}\right)\notag\\
    && + \frac{4(1-C_\alpha)\sigma}{C_\alpha(K+1)^{\nicefrac{1}{2}}} + \frac{2\sigma}{\sqrt{n}(K+1)^{\nicefrac{1}{4}}} \notag\\
    &\leq& \frac{2\delta^0}{\gamma_0 (K+1)^{\nicefrac{1}{4}}}  + \left(\frac{16\gamma_0}{C_\alpha(K+1)^{\nicefrac{3}{4}}} + \frac{8\gamma_0}{(K+1)^{\nicefrac{1}{4}}}\right)\left(L_0 + 4L_1^2 \delta^{\inf}\right)\notag\\
    && + \frac{8(1-C_\alpha)\sigma}{C_\alpha(K+1)^{\nicefrac{1}{2}}} + \frac{6\sigma}{\sqrt{n}(K+1)^{\nicefrac{1}{4}}}, \notag
\end{eqnarray}
which concludes the proof since $\frac{1-C_\alpha}{C_\alpha} \leq \frac{2\sqrt{1-\alpha}}{\alpha}$ and $\frac{1}{C_\alpha} \leq \frac{1}{\alpha}$}

\clearpage

\section{Additional Experimental Results}

In this section, we provide additional results for minimizing nonconvex polynomial functions, and for training the ResNet-20 model over the CIFAR-10 dataset.

\subsection{Minimization of Nonconvex Polynomial Functions}\label{app:sub:minimize_nonconvex_poly}

We ran \algname{||EF21||} and \algname{EF21} in a single-node setting ($n=1$) for solving the following problem:
\begin{align}\label{eqn:simple_polynomials}
\min _{x \in \mathbb{R}^d}\biggl\{f(x)\eqdef\underbrace{\sum_{i=1}^d a_i x_i^4}_{=: g(x)}+\underbrace{\lambda \sum_{i=1}^d \frac{x_i^2}{1+x_i^2}}_{=: h(x)}\biggl\},
\end{align}
where $a_i > 0$, $i = 1, \ldots, d$, $\lambda > 0$.

Let us show that $f(x)$ is non-convex (for the specific choice of $a_i$) and $(L_0, L_1)$-smooth. First, we  prove that $f(x)$ is non-convex. Indeed,
\begin{align*}
\nabla^{2} f(x) & =\nabla^{2} g(x) + \nabla^{2} h(x) \\
& =12 \operatorname{diag}\left\{a_1 x_{1}^{2}, \ldots, a_d x_{d}^{2}\right\} + 2 \lambda \operatorname{diag}\biggl\{\frac{1-3 x_{1}^{2}}{\left(1+x_{1}^{2}\right)^{3}}, \ldots, \frac{1-3 x_{d}^{2}}{\left(1+x_{d}^{2}\right)^{3}}\biggl\},
\end{align*} 
is not positive definite matrix if we choose $a_i = \frac{\lambda}{24}$, $x_i = \pm 1$ for $i = 1, \ldots, d$. 

Second, we  find $L_0, L_1 > 0$ such that $$\norm{\nabla^{2} f(x)} \leq L_{0}+L_{1}\norm{\nabla f(x)}, \quad \forall x\in\mathbb{R}^d.$$ This condition is equivalent to Assumption~\ref{assum:LzeroLoneSmooth} (generalized smoothness) with $L_0,L_1$ \citep[Theorem 1]{chen2023generalized}.
Let us fix some $L_1 > 0$ and choose $L_0 = \frac{9 \lambda d^2}{2 L_1^2} + 2 \lambda$. Since $\nabla^2 h(x) \preccurlyeq 2 \lambda I$,
\begin{align*}
\norm{\nabla^{2} f(x)}=\norm{\nabla^{2} g(x)+\nabla^{2} h(x)} & \leq \norm{\nabla^{2} g(x)}+\norm{\nabla^{2} h(x)} \\
&\leq 12 \sqrt{a_1^2 x_{1}^{4}+\ldots+a_d^2 x_{d}^{4}}+2 \lambda\\
&\leq 12\left(a_1 x_{1}^{2}+\ldots+a_d x_{d}^{2}\right) + 2\lambda.
\end{align*}
Also, notice that
\begin{align*}
\norm{\nabla f(x)}=\norm{\nabla g(x)+\nabla h(x)}&=\sqrt{\left(4a_1x_{1}^{2} + \frac{2 \lambda}{(1+x_1^2)^2}\right)^2 x_1^2 +\ldots+\left(4a_dx_d^{2} + \frac{2 \lambda}{(1+x_d^2)^2}\right)^2 x_d^2}\\
& \geq 4\sqrt{a_1^2 x_1^6+\ldots+a_d^2 x_d^6} \\
& \overset{(*)}{\geq} \frac{4}{\sqrt{d}}\left(a_1 \left|x_1\right|^3+\ldots+a_d \left|x_d\right|^{3}\right),
\end{align*}
where (*) results from the fact that $\norm{x}_1 \leq \sqrt{d}\norm{x}$ for $x\in\R^d$. 
Our goal is to show that 
\begin{align*}
12\left(a_1 x_{1}^{2}+\ldots+a_d x_{d}^{2}\right) \leq \tilde{L}_{0}+\frac{4 L_1}{\sqrt{d}}\left(a_1 \left|x_{1}\right|^{3}+\ldots+a_d \left|x_d\right|^{3}\right), \quad \tilde{L}_{0}=L_{0}-2 \lambda.
\end{align*}
To show this, we consider two cases: if $\left|x_{i}\right| \leq \frac{3 \sqrt{d}}{L_{1}}$, and otherwise. 

\begin{enumerate}
    \item If $\left|x_{i}\right| \leq \frac{3 \sqrt{d}}{L_{1}}$ for all $i=1, \ldots, d$, then $12a_i x_i^2 \leq \frac{108 a_i d}{L_{1}^2}$. Thus,  $12\left(a_1 x_{1}^{2}+\ldots+a_d x_d^{2}\right) \leq \frac{108 \lambda d^{2}}{24 L_{1}^2}=\tilde{L}_{0}$.
    \item  If $\left|x_{j}\right| > \frac{3 \sqrt{d}}{L_{1}}$ for some $j = 1, \ldots, d$, then $12 a_j x_j^2 < \frac{4 L_1}{\sqrt{d}}a_j \left|x_j\right|^3$, and the sum of the remaining terms (such that $\left|x_{i}\right| \leq \frac{3 \sqrt{d}}{L_{1}}$) in $12\left(a_1 x_{1}^{2}+\ldots+a_d x_{d}^{2}\right)$ can be upper bounded by $\tilde{L}_{0}$.
\end{enumerate}
In conclusion, $f(x)$ is $(L_0, L_1)$-smooth, where $L_1$ is any positive constant and $L_0 = \frac{9 \lambda d^2}{2 L_1^2} + 2 \lambda$.

Additionally, we can show that under certain additional constraints, $f(x)$ is $L$-smooth with $L=\frac{ \lambda \sqrt{d} D^2}{2} + 2 \lambda$. If $\left|x_i\right| \leq D$ for all $i=1, \ldots, d$, then 
\begin{align*}
\left\|\nabla^{2} f(x)\right\| \leq 12 \sqrt{a_1^2 x_{1}^{4}+\ldots+a_d^2 x_{d}^{4}}+2 \lambda \leq \frac{ \lambda \sqrt{d} D^2}{2} + 2 \lambda = L,
\end{align*}
 In the experiments, we estimate $D$ based on the initial point $x^0\in\R^d$.


In the following experiments, we used a top-$k$ sparsifier with $k = 1$ and $\alpha = k/d$, setting $d = 4$, $L_1 = \{1, 4, 8\}$, and $L_0 = 4$ (adjusting $\lambda$ to maintain a constant $L_0$). The initial values $x^0$ were drawn from a normal distribution, $x_i^0 \sim \mathcal{N}(20, 1)$ for $i = 1, \ldots, d$, with $D$ estimated as 20. 
For EF21, we set $\gamma_k = \frac{1}{L + L \sqrt{\frac{\beta}{\theta}}}$, using $\theta = 1 - \sqrt{1 - \alpha}$ and $\beta = \frac{1 - \alpha}{1 - \sqrt{1 - \alpha}}$, according to Theorem 1 of \cite{richtarik2021ef21}. 
For \algname{||EF21||}, we chose $\gamma_k = \frac{1}{2c_1}$ with $c_1 = \frac{L_1}{2} + 2\frac{\sqrt{1 - \alpha} L_1}{1 - \sqrt{1 - \alpha}}$ from Theorem~\ref{thm:normalized_ef21_singlenode}, and $\gamma_k = \frac{\gamma_0}{\sqrt{K + 1}}$ with $\gamma_0 > 0$, as specified in Theorem~\ref{thm:ef21} with $n=1$.



\paragraph{The impact of $\gamma_0$ and $K$ on the convergence of \algname{||EF21||}.}
First, we investigate the impact of $\gamma_0$ and $K$ on the convergence of \algname{||EF21||}.
We evaluated $\gamma_0$ from the set $\{0.1, 1, 10\}$, and plotted the histogram representing the number of iterations required to achieve the target accuracy of $\norm{\nabla f(x)}^2 < \epsilon$ with $\epsilon = 10^{-4}$, using the stepsize rule $\gamma = \frac{\gamma_0}{\sqrt{K+1}}$. 
For each $\gamma_0$, we determined $K$ as the minimum number of iterations required to achieve the desired accuracy, found through a grid search with step sizes of 500 for $\gamma_0 = {1, 10}$ and $5000$ for $\gamma_0 = 0.1$.
\begin{figure}[h]
    \centering
    \includegraphics[width=0.5\textwidth]{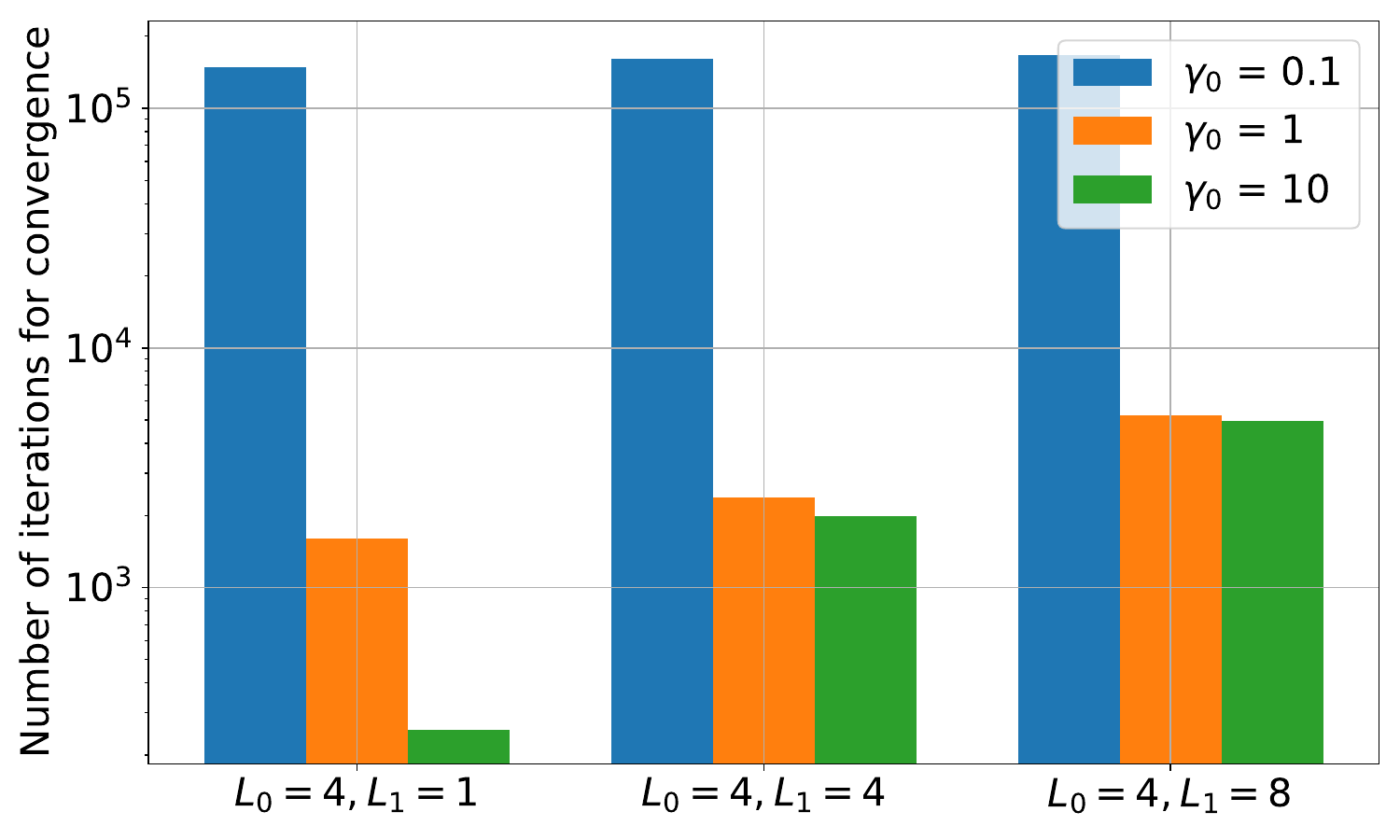}
    \caption{Number of iterations required to achieve the desired accuracy, $\norm{\nabla f(x)}^2 < \epsilon$, $\epsilon = 10^{-4}$, using \algname{||EF21||}  with $\gamma = \frac{\gamma_0}{\sqrt{K+1}}$ for different values of $L_0$ and $L_1$.}
    \label{fig:histogram}
\end{figure}
From Figure~\ref{fig:histogram}, for small values of $\gamma_0$, such as $0.1$,  significantly more iterations are required to reach convergence compared to $\gamma_0$ values of $1$ and $10$, which show similar performance (with the exception of the $L_0=4$, $L_1=1$ case, where  $\gamma_0=10$ converges faster). Based on this observation, we use $\gamma_0 = 1$ in all subsequent experiments and adjust only $K$ to achieve convergence,  identifying the minimum number of iterations needed to reach the target accuracy through a grid search with a step size of $500$. 

\paragraph{Comparisons between \algname{EF21} and \algname{||EF21||}.}
Next, we evaluate the performance of \algname{EF21} and \algname{||EF21||} for a fixed \( L_0 = 4 \) and varying \( L_1 \) values of \{1, 4, 8\}. From Figure~\ref{fig:exp1}, \algname{||EF21||}, regardless of the chosen stepsize \(\gamma\), achieves the desired accuracy \(\norm{\nabla f(x)}^2 < \epsilon\) with \(\epsilon = 10^{-4}\) faster than \algname{EF21}. Initially, however, \algname{EF21} converges more quickly, likely because \algname{||EF21||} employs normalized gradients, which can be slower at the start due to the large gradients when the initial point is far from the stationary point. Moreover, as \( L_1 \) increases, both methods show slower convergence.

\subsection{ResNet20 Training over CIFAR-10}


We included additional experimental results from running \algname{EF21} and \algname{||EF21||} for training the ResNet20 model over the CIFAR-10 dataset. The parameter details were set to be the same as those in Section~\ref{subsec:resnet_20}, with the exception that we vary $k=0.01d, 0.5d$ for a top-$k$ sparsifier. From Figures~\ref{fig:plot_exp_DL_2} and~\ref{fig:plot_exp_DL_3}, \algname{||EF21||} attains a higher accuracy improvement than \algname{EF21}, across different sparsification levels $k$.


\begin{figure}[h]
    \centering
    \includegraphics[width=\textwidth]{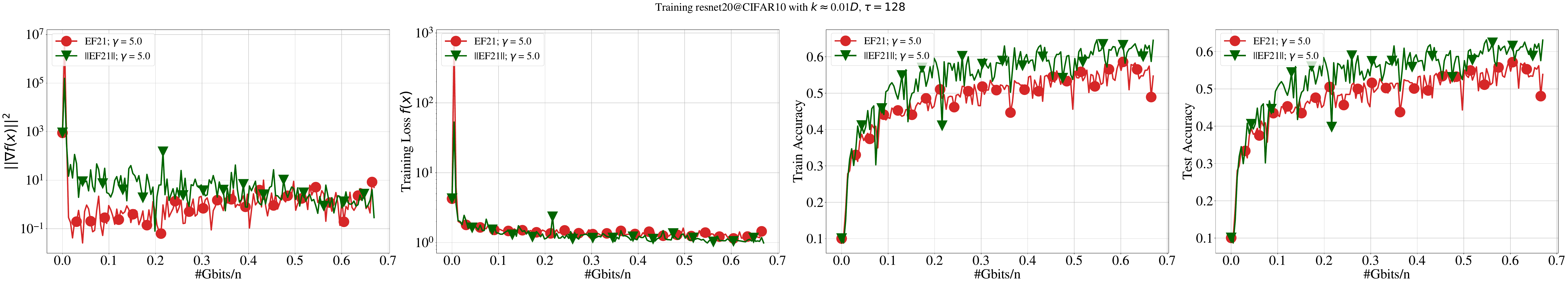}
    \caption{ResNet20 training on CIFAR-10 by using \algname{EF21} and \algname{||EF21||} under the same stepsize $\gamma=5$ and $k=0.01d$ for a  top-$k$ sparsifier.}
    \label{fig:plot_exp_DL_2}
\end{figure}

\begin{figure}[h]
    \centering
    \includegraphics[width=\textwidth]{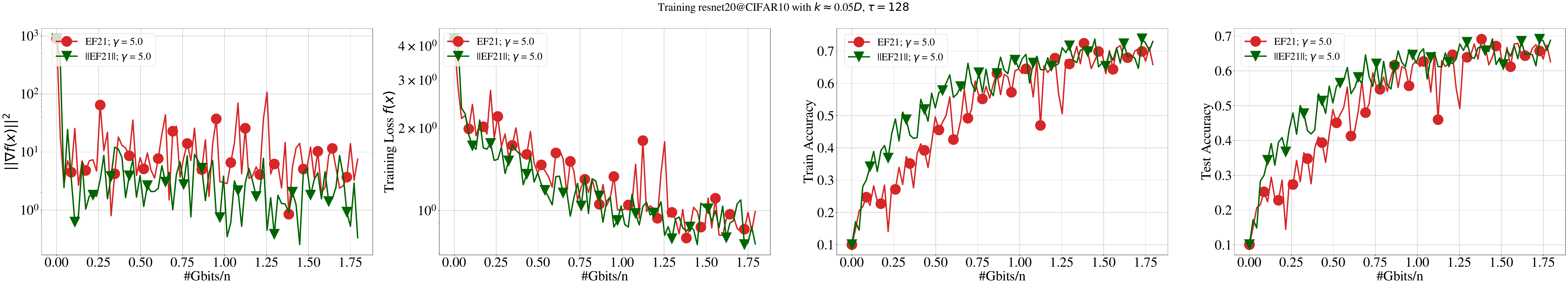}
    \caption{ResNet20 training on CIFAR-10 by using \algname{EF21} and \algname{||EF21||} under the same stepsize $\gamma=5$ and $k=0.05d$ for a  top-$k$ sparsifier.}
    \label{fig:plot_exp_DL_3}
\end{figure}



\clearpage

\section{Omitted Proof for Smoothness Parameters of Logistic Regression}\label{app:smooth_logistic}

In this section, we prove the generalized smoothness parameters $L_0,L_1$ for logistic regression problems with a nonconvex regularizer, which are the following problems 
\begin{align*}
\min _{x \in \mathbb{R}^d}\biggl\{f(x)\eqdef\frac{1}{n}\sum_{i=1}^n f_i(x)\eqdef \frac{1}{n}\sum_{i=1}^n \underbrace{\log(1+\exp(-b_i a_i^T x))}_{=:\tilde f_i(x)} + \underbrace{\lambda \sum_{j=1}^d \frac{x_j^2}{1+x_j^2}}_{=:h(x)}\biggl\},
\end{align*}
where $a_i \in \R^d$ is the $i^{\rm th}$ feature vector of matrix $A$ with its class label $b_i \in \{-1,1\}$, $\lambda>0$. 

First, we can prove that $f(x)$ is $L$-smooth with $L= \frac{1}{4n} \|A\|^2 + 2 \lambda$, and that each $f_i(x)$ is $\hat L_i$-smooth with $\hat L_i=\frac{1}{4} \|a_i\|^2 + 2 \lambda$.

Next, we show that each $f_i(x)$ is  generalized smooth with $L_0=2\lambda+\lambda \sqrt{d} \max_i \norm{a_i}$ and $L_1 = \max_i \norm{a_i}$, when the Hessian exists. 
By the fact that
\begin{eqnarray*}
    \nabla \tilde f_i(x) = - \frac{\exp(-b_i a_i^Tx)}{1+\exp(-b_i a_i^T x)} b_i a_i, \quad \text{and} \quad \nabla^2 \tilde f_i(x) = \frac{\exp(-b_i a_i^Tx)}{(1+\exp(-b_i a_i^T x))^2} b_i^2 a_i a_i^T, 
\end{eqnarray*}
we have 
\begin{eqnarray}
    \norm{\nabla^2 \tilde f_i(x)} 
    & \overset{b_i \in \{-1,1\} }{=} & \frac{\exp(-b_i a_i^Tx)}{(1+\exp(-b_i a_i^T x))^2} \lambda_{\max}(a_i a_i^T)  \nonumber \\
    & = & \frac{\exp(-b_i a_i^Tx)}{(1+\exp(-b_i a_i^T x))^2} \sqnorm{a_i}  \nonumber \\
    & = & \frac{\norm{a_i}}{1+\exp(-b_ia_i^Tx)} \norm{\nabla \tilde f_i(x)} \nonumber \\
    & \leq & \norm{a_i}\norm{\nabla \tilde f_i(x)} \label{eq:hession_bound}.
\end{eqnarray}
After adding the nonconvex regularizer $h(x)$, we can show the following inequalities:
\begin{eqnarray}
    \norm{\nabla^2 f_i(x)} 
    &\leq& \norm{\nabla^2 \tilde f_i(x)}+\norm{\nabla^2 h(x)} \nonumber\\
    &\leq& \norm{\nabla^2 \tilde f_i(x)} + 2\lambda \label{eq:norm_ineq},
\end{eqnarray}
and
\begin{eqnarray}
    \norm{\nabla f_i(x)}
    \geq \norm{\nabla \tilde f_i(x)} - \norm{\nabla h(x)} & = & \norm{\nabla \tilde f_i(x)} - \sqrt{ \left( \frac{2\lambda x_1}{(1+x_1^2)^2} \right)^2 + \ldots + \left( \frac{2\lambda x_d}{(1+x_d^2)^2} \right)^2 } \nonumber \\
    & \geq & \norm{\nabla \tilde f_i(x)}  - \sqrt{ \lambda ^2 + \ldots + \lambda ^2 } \nonumber \\
    &=& \norm{\nabla \tilde f_i(x)} - \lambda \sqrt{d}. \label{eq:triangular_ineq} 
\end{eqnarray}

By combining inequalities \eqref{eq:hession_bound}, \eqref{eq:norm_ineq}, and \eqref{eq:triangular_ineq}, we obtain
\begin{eqnarray*}
    \norm{\nabla^2 f_i(x)}
    & \leq & \norm{\nabla^2 \tilde f_i(x)} + 2\lambda \\
    & \leq & \norm{a_i} \norm{\nabla \tilde f_i(x)} + 2\lambda \\ 
    & \leq & 2\lambda + \lambda \sqrt{d}\revision{\|a_i\|} + \norm{a_i} \norm{\nabla f_i(x)}. 
\end{eqnarray*}
In conclusion, $\norm{\nabla^2 f_i(x)} \leq L_0 + L_1 \norm{\nabla f_i(x)}$ with $L_0 \leq 2\lambda + \lambda \sqrt{d} \revision{\|a_i\|}$, and $L_1 \leq \norm{a_i}$. This condition is equivalent to Assumption~\ref{assum:LzeroLoneSmooth} (generalized smoothness) with $L_0,L_1$ \citep[Theorem 1]{chen2023generalized}.

\end{document}